\def\eqref#1{equation~\ref{#1}}
\def\1{\bm{1}}
\DeclareMathAlphabet{\mathsfit}{\encodingdefault}{\sfdefault}{m}{sl}
\SetMathAlphabet{\mathsfit}{bold}{\encodingdefault}{\sfdefault}{bx}{n}
\newcommand{\vmetric}{\phi}
\newcommand{\tmetric}{\psi}
\newtheorem{theorem}{Theorem}[section]
\newtheorem{lemma}[theorem]{Lemma}
\theoremstyle{definition}
\newtheorem{definition}{Definition}[section]
\title{Diagnosing and exploiting the computational demands of videos games for deep reinforcement learning}
\author{Lakshmi Narasimhan Govindarajan, Rex G Liu, Drew Linsley, Alekh Karkada Ashok, Max Reuter, \\
\textbf{Michael J Frank, Thomas Serre}\\
Carney Institute for Brain Science, Brown University, Providence, RI 02912
}
\begin{document}

\maketitle

\begin{abstract}


Humans learn by interacting with their environments and perceiving the outcomes of their actions. A landmark in artificial intelligence has been the development of deep reinforcement learning (dRL) algorithms capable of doing the same in video games, on par with or better than humans. However, it remains unclear whether the successes of dRL models reflect advances in visual representation learning, the effectiveness of reinforcement learning algorithms at discovering better policies, or both. To address this question, we introduce the Learning Challenge Diagnosticator (LCD), a tool that separately measures the perceptual and reinforcement learning demands of a task. We use LCD to discover a novel taxonomy of challenges in the \textit{Procgen} benchmark, and demonstrate that these predictions are both highly reliable and can instruct algorithmic development. More broadly, the LCD reveals multiple failure cases that can occur when optimizing dRL algorithms over entire video game benchmarks like \textit{Procgen}, and provides a pathway towards more efficient progress.

\end{abstract}

\section{Introduction}
Gibson famously argued that ``The function of vision is not to solve the inverse problem and reconstruct a veridical description of the physical world. [... It] is to keep perceivers in contact with behaviorally relevant properties of the world they inhabit'' (reviewed in~\citealt{Warren2021-ly}). The field of deep reinforcement learning (dRL) has followed Gibson's tenet since the seminal introduction of deep Q-networks (DQN) \citep{Mnih2015-kz}. DQNs rely on reward feedback to train their policies and perceptual systems at the same time to learn to play games from a \textit{tabula rasa}. This end-to-end approach of training on individual environments and tasks has supported steady progress in the field of dRL, and newer reinforcement learning algorithms have yielded agents that achieve human or super-human performance in a variety of challenges -- from Chess to Go and from Atari games to Starcraft \citep{Mnih2015-kz, Silver2017-ei, Silver2018-op, Vinyals2019-dh}. But Gibson also argued that the ecological niche of animals allows them to exploit task-agnostic mechanisms to simplify the perceptual or behavioral demands of important tasks, like how humans rely on optic flow for navigation~\citep{Warren2021-ly}. In the decades since Gibson's writings, it has been found that humans can efficiently find or learn bespoke perceptual features that aid performance on a single task~\citep{Li2004-yt, Scott2007-gp, Roelfsema2010-jy, Emberson2017-kn}, or they can exploit previously learned generalist representations and task abstractions that are useful across multiple tasks and environments~\citep{Wiesel1963-dt, Watanabe2001-sg, Emberson2017-kn,Lehnert2020-hj,OReilly2001-jo}. While there have been attempts at building similarly flexible dRL agents through meta-reinforcement learning~\citep{Frans2017-iu,Xu2018-cs,Xu2020-ni,Houthooft2018-rq,Gupta2018-rd,Chelu2020-of,Pong2021-ly}, these approaches ignore the complexities of perceptual learning and carry large computational burdens that limit them to simplistic scenarios. There is a pressing need for approaches to training dRL agents that can meet the computational demands of a  wide variety of environments and tasks.

\begin{figure}[t!]
    \includegraphics[width=\linewidth]{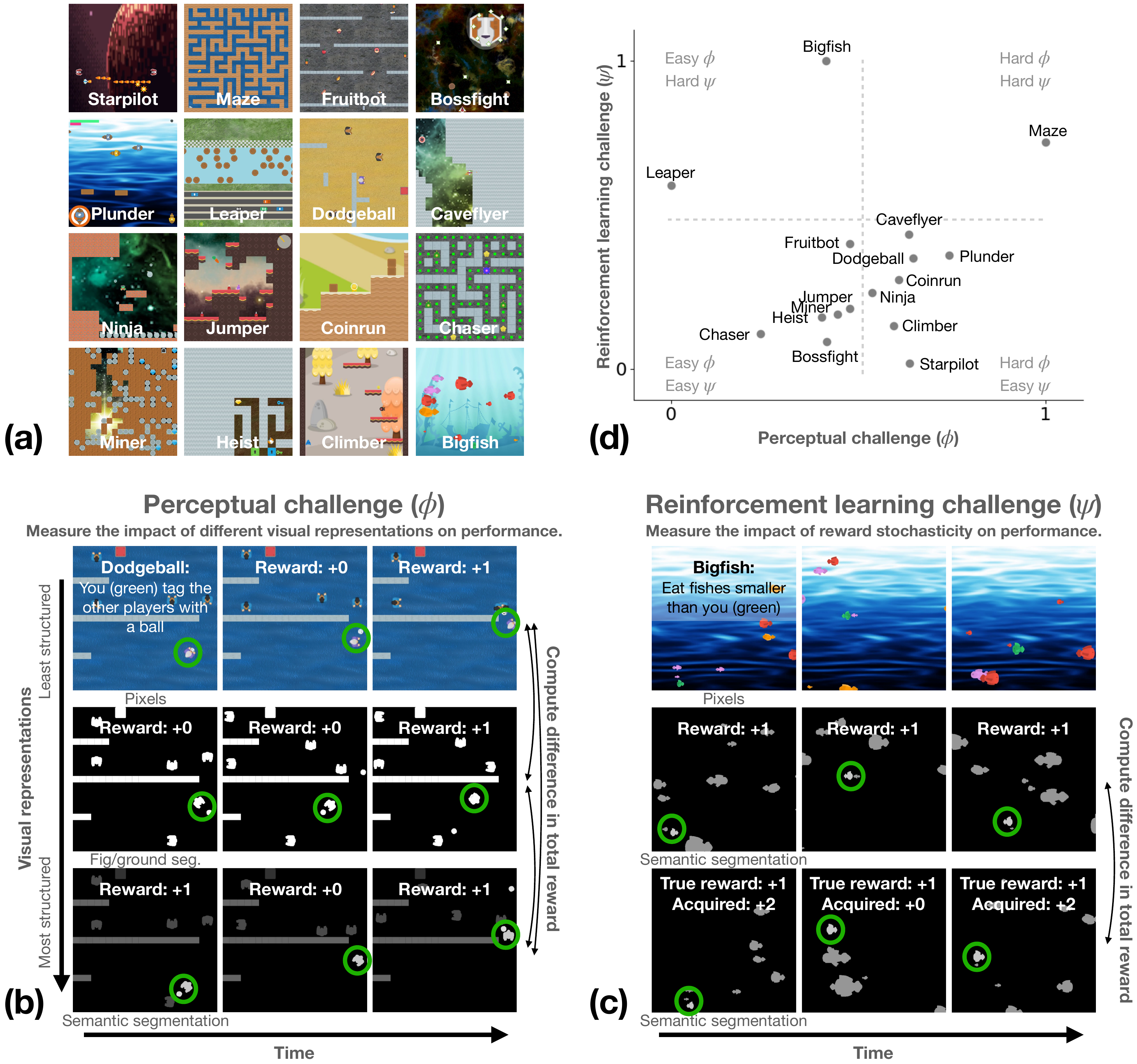}
    \caption{\textbf{The Learning Challenge Diagnosticator (LCD) separately measures the reinforcement learning or the perceptual challenge of an environment and task.} \textbf{(a)} We use LCD to compute perceptual and reinforcement learning challenges in each game of the \textit{Procgen} Benchmark. \textbf{(b)} Perceptual challenges are assessed by comparing the total reward accrued by agents that learn over visual representations with different amounts of structure: pixels, figure/ground segmentation masks, and semantic segmentation masks (see Appendix.~\ref{appendix: mask_details}). \textbf{(c)} Reinforcement learning challenges are computed by feeding agents perceptually organized scenes (i.e., semantically segmented; second and third rows), and then comparing the total reward accrued by one agent playing in an environment with stochastic manipulations of reward (randomly masked and modulated; third row) to another agent playing in the normal environment (second row). \textbf{(d)} The taxonomy of \textit{Procgen} reveals each game's relative perceptual and reinforcement learning challenge.}
    \label{fig:lcd}
\end{figure}

One way to build generalist agents is to first reliably diagnose where the computational challenges of a given environment and task lie and adjust the agent to those demands. Is the perceptual challenge onerous? Is the reward signal for credit assignment especially sparse? Even partial answers to these questions are instructive for improving an agent, for instance, by pre-determining the extent to which it relies on feedback from the world to tune its policy and perception versus drawing from previously learned representations and task abstractions. The introduction of diverse video game challenges for dRL, such as the \textit{Procgen} Benchmark~\citep{Cobbe2020-zi}, can serve as a starting point for this investigation. For example, take the game ``Plunder'' from \textit{Procgen} (Figure~\ref{fig:lcd}a). Plunder has simple gameplay rules but poses a visual challenge: an agent is asked to shoot all objects that look like a provided cue. The difficulty here lies in assessing whether each object in the environment is the same or different than the cue; a visual routine that is difficult to learn for neural networks~\citep{Vaishnav2022-hy,Kim2018-qr}. For this reason, an agent that can draw from prior experience in learning relevant perceptual routines may perform better than one with a perceptual system tuned for this specific task from scratch. In contrast, the objects and environments of a game like ``Leaper'' (Figure~\ref{fig:lcd}a) can be identified by color. The game asks agents to use those objects to get from one side of the screen to the other, a task that can likely be learned from reward feedback.

\paragraph{Contributions.}
The ability to systematically, autonomously, and reliably identify an appropriate strategy for learning a task in a given environment would enable the design of generalist dRL agents that can flexibly adapt to challenges as humans do. To make progress towards this goal, we introduce the Learning Challenge Diagnosticator (LCD), a novel tool that identifies the specific computational challenges of an environment and task. When applied to video games, like those in \textit{Procgen}, the LCD measures the difficulty of encoding task-relevant perceptual properties versus the difficulty of optimal policy discovery (i.e., determining which actions lead to rewarding outcomes). 

\begin{itemize}[leftmargin=*]
    \item We develop a modified, parameterizable version of \textit{Procgen}, to systematically manipulate the perceptual and reinforcement learning challenges presented to agents in each individual game. We provide our version of \textit{Procgen} and all experimental code at~\url{https://anonymous.4open.science/r/lcd-procgen-94E5/}.

    \item The LCD reveals a novel taxonomy of computational challenges in the games of our modified \textit{Procgen}. Some are more visually complex, some are more challenging from the point of view of reinforcement learning, and others strain agents across both of these axes. This taxonomy is preserved across different dRL algorithms and the heterogeneity of the challenges associated with the \textit{Procgen} games suggests that adopting a single ``one size fits all'' approach to learning, as is standard in dRL, is suboptimal.
    
    \item To address the visual challenges of \textit{Procgen} games, we adopt a self-supervised visual front-end, which learns perceptual groups from motion cues in games without reward feedback. To test if reinforcement learning challenges can, in principle, be alleviated, we develop a proof-of-concept approach with reward shaping~\citep{Skinner1938-wb}.
    
    \item We exploit the computational taxonomy of \textit{Procgen} revealed by the LCD to shape the design of agents for each game. These agents learn more efficiently and perform significantly better than one-size-fits-all dRL agents, suggesting the potential for ``adaptive" agents.
\end{itemize}

\section{Related work}

\textbf{Representation learning in dRL} Perhaps the main contribution of DQNs was to demonstrate that it is possible to jointly learn visual representations and policies through reward maximization from a \textit{tabula rasa} to achieve high scores on Atari games~\citep{Mnih2015-kz}. Despite this extraordinary achievement, the extrinsic rewards available to algorithms like DQNs have been found to yield poor sample complexity, limit model scale, and cap maximum performance~\citep{Jaderberg2016-xk, Laskin2020-zv}. For this reason, there has been a growing number of accounts showing that standard reward learning can be augmented with auxiliary learning objectives and/or pre-training via self-supervision to take steps towards ameliorating limitations of reward-based learning~\citep{Jaderberg2016-xk, Shelhamer2017-ah, Stooke2021-qr, Laskin2020-zv, Dittadi2021-mq, Zhang2020-bn,Radosavovic_undated-ux,Xiao_undated-ax}. Relatedly, it has been shown that initializing agents with structured visual representations can lead to faster learning and better performance in reinforcement learning~\citep{Tassa2018-hp, Davidson2020-be, Laskin2020-zv, Stooke2021-qr}.

\textbf{Measuring visual challenges in artificial intelligence} An essential contribution that the cognitive sciences have made to artificial intelligence over the past decade has been the introduction of computational challenges that are easy for humans to solve but strain the capabilities of neural networks. These challenges have identified deficiencies in the solutions deep neural networks (DNNs) tend to learn for contour tracing~\citep{Linsley2018-ee,Kim2020-yw,Tay2021-bn}, segmentation~\citep{Kim2020-yw}, object tracking~\citep{Linsley2021-vx}, object recognition~\citep{Geirhos2021-rr,Geirhos2018-jl}, and visual reasoning problems~\citep{Fleuret2011-oq,Kim2018-qr,Vaishnav2022-hy}. The essential lesson from these studies is that measuring model performance on data sampled from the same distribution that training data came from is not sufficient for establishing challenges and measuring progress on them~\citep{Funke2021-bn}. Instead, it is critical to test models on out-of-distribution samples to find and evaluate challenges~\citep{Geirhos2021-rr,Geirhos2018-jl,Linsley2021-vx,Linsley2020-zj}. We adopt this strategy to measure dRL agent performance in this work.

\textbf{Biologically inspired mechanisms for improving neural networks} There are multiple examples of the DNN challenges being resolved by the introduction of biologically inspired mechanisms. Mechanisms for attention have made significant contributions by enabling DNNs to limit noise and clutter, amplify task-relevant features, and subsequently achieve better performance and sample efficiency during learning~\citep{Linsley2019-ew}. Others have found that horizontal connections, inspired by those in the early visual cortex of primates, can resolve DNN limitations in solving contour tracing tasks~\citep{Linsley2018-ls}. There is also evidence that forcing dRL agents to acquire compositional abstractions, inspired by those described in the cognitive sciences, helps them learn policies that generalize to novel environments~\citep{Lehnert2020-hj}. The mechanisms we propose to solve the computational challenges revealed by the LCD in \textit{Procgen} are similarly inspired by solutions to those problems adopted by primates and humans.

\textbf{Metrics for measuring progress in dRL}
Progress in dRL has been classically measured by the aggregate rewards and sample efficiency of models in games~\citep{Taylor2009-ou, Agarwal2021-eh, Kirk2021-mp}. The \textit{Procgen} benchmark took a major step forward beyond what is standard in the field by measuring agent performance on game parameterizations that fell outside of the distribution used for training. While current metrics can adjudicate between different algorithms based on performance in a game, they do not offer insights into the specific computational challenges that agents face. Our LCD therefore represents a major conceptual leap beyond current approaches to measuring progress in dRL, by quantifying the specific perceptual and reinforcement learning challenges found in a given game, and enabling dRL researchers to more precisely resolve the deficiencies of agents.

\begin{figure}[t!]
    \includegraphics[width=\linewidth]{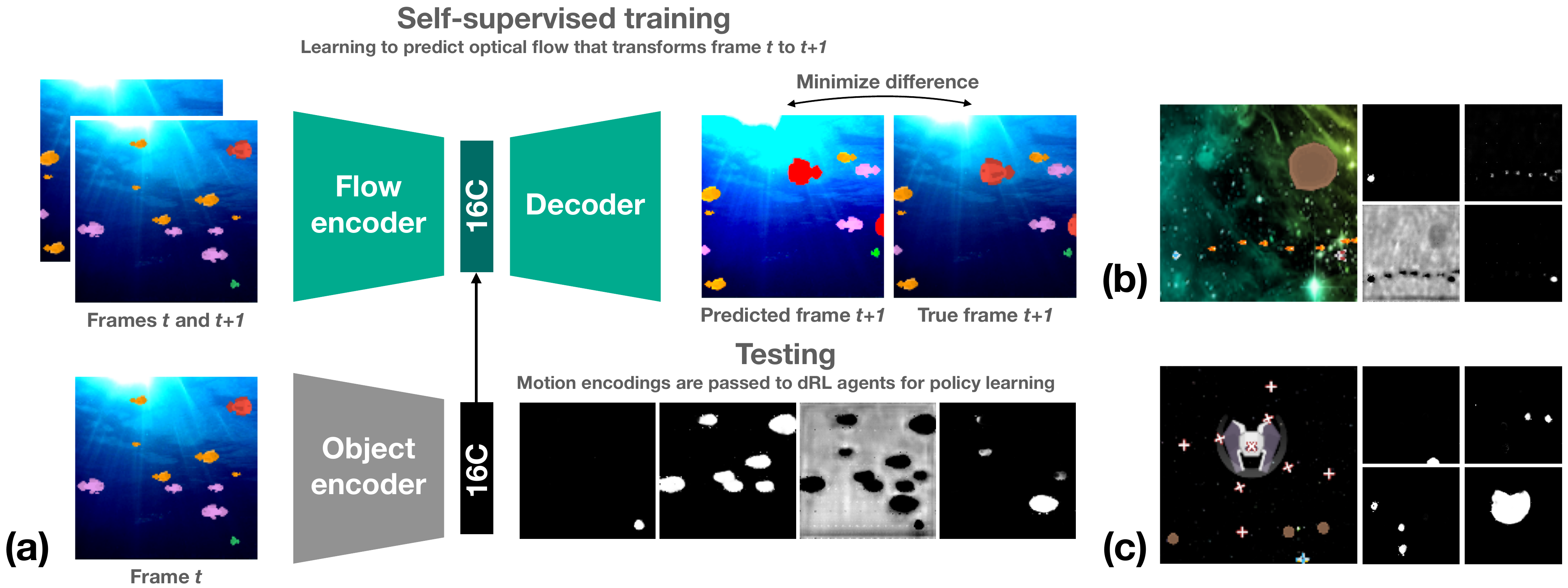}
    \caption{\textbf{A DNN that is self-supervised to predict future frames also learns a concomitant representation of object-ness that is useful for reinforcement learning.} (\textbf{a}) Our routine for self-supervising perceptual representations that can solve challenges identified by the LCD utilizes three networks: a ResNet18~\citep{He2015-cb} for predicting the optic flow between frame $t$ to frame $t+1$ (\textit{Flow encoder}), a ResNet18 feature pyramid network (FPN) with a 16-channel recurrent neural network~\citep{Linsley2018-ls} to encode the content of frame $t$ (\textit{Object encoder}), and an FPN-ResNet18 decoder that receives the two encoder's outputs to transform frame $t$ to frame $t+1$ via a differentiable warping module. After optimizing for next-frame prediction, the \textit{Object encoder} learns object-like representations that can be used for reinforcement learning without additional training. This approach generates reasonable segmentations of the important objects in \textit{Procgen} games, including Bigfish, Starpilot (\textbf{b}), and Bossfight (\textbf{c}).}
    \label{fig:objectness}
\end{figure}

\section{The Learning Challenge Diagnosticator}
The ability to identify and measure the computational challenges a dRL agent faces is an important step toward designing better algorithms for reinforcement learning. However, to the best of our knowledge, this problem has not yet been addressed in the field. The LCD achieves this goal through a three-step procedure, which reveals the computational challenges facing an agent. By revealing the computational demands of every game, the LCD enables more efficient algorithm development and insights into how these demands may interact -- or not -- during learning (see Section 4). 

Here, we focus on identifying the perceptual versus reinforcement learning challenges offered by any specific game, yielding the following steps: (\textit{i}) We begin by modifying a game to put its perceptual representations and reinforcement learning problem under experimental control. (\textit{ii}) Next, a set of agents learn to play different versions of a game. In each version, either the perceptual challenge is perturbed while the reinforcement learning challenge is held constant, or vice versa. These perturbations are implemented by sampling either a different perceptual representation of the game or reward scheme. (\textit{iii}) Finally, we measure the integrated reward trajectories of agents in response to each type of perturbation. By separately averaging the scores of agents facing perceptual perturbations and reward perturbations, we compute scores describing the perceptual challenge ($\phi$) and the reinforcement learning challenge ($\psi$) of each game (Fig.~\ref{fig:lcd}b and c).

\paragraph{Procgen benchmark.} We applied the LCD to \textit{Procgen}~\citep{Cobbe2020-zi}, a challenging dRL benchmark consisting of 16 games, each with distinct graphics and gameplay. \textit{Procgen} is open-source and implemented in Python and C++, which made it possible to customize a variety of game parameters for implementing the LCD. The parameters we manipulated were game assets (sprites and backgrounds) and reward functions.

\textit{Procgen} generates game levels procedurally, enabling precise control over the levels used for training versus testing and as a result, proper tests of generalization. We followed the training and testing protocol outlined in~\citep{Cobbe2020-zi} in each of our experiments. In brief, all agents were trained for 200M steps across 500 training levels and evaluated on \textit{Procgen's} held-out games to measure generalization.

\paragraph{dRL algorithms.}\label{drl_details} We apply the LCD to agents trained with either the standard and popular proximal policy optimization~\citep{Schulman2017-gz} (PPO) or phasic policy gradient (PPG)~\citep{Cobbe2021-wd}, a more recent algorithm that attained state-of-the-art performance in \textit{Procgen}. PPO hyperparameters were selected to match those used in~\citep{Cobbe2020-zi} for \textit{Procgen}'s hard mode. Similarly, for PPG we used the hyperparameters from~\citep{Cobbe2021-wd}. The walltime for a single training run of one of these agents averaged 24 hours per game on a standard 16-core Intel Xeon Gold 6242 CPU with a 24G Titan RTX GPU. We ran $\sim 450$ experiments across $32$ GPUs, which took over $10K$ GPU hours of computing time. Agent performance at test time was computed using the interquartile mean (IQM) of the final returns~\citep{Agarwal2021-eh}.

\paragraph{LCD: Perceptual challenge.}
To perturb the perceptual complexity in a game and measure its perceptual challenge, we varied the visual input provided to an agent when learning to play the game. Motivated by decades of cognitive science work on the visual representations that humans rely on for efficiently learning to interact with their environments~\citep{Ullman1984-gz,Roelfsema2000-op,Roelfsema2006-wg}, we tested how well agents could learn to play games when given three different types of visual inputs with varying structure: the original frame, figure-ground segmentations of each frame, or semantic segmentations of each frame (Fig.~\ref{fig:lcd}b,c). Figure-ground segmentations labeled background and object pixels differently, whereas semantic segmentations labeled each element of the frame differently, with consistent labels across frames (see Appendix.~\ref{appendix: mask_details}).

We measured the perceptual challenge of each game by comparing the performance of three agents trained to solve it, where each agent learned a policy over one of the three different types of visual inputs (Appendix Fig.~\ref{fig:xv_train}). In this condition, rewards and the reinforcement learning challenge in games were kept as normal. We began by computing performance for each agent as the area under the cumulative reward curves (AUC) achieved by agents during generalization (Appendix Fig.~\ref{fig:xv_gen}), which implicitly captured both maximum performance and the sample efficiency needed to get there. Next, we computed the relative change in AUC from the original frame to figure-ground and semantic AUCs, and took the average of these ratios as our final score $\vmetric$ (Fig.~\ref{fig:lcd}b). The perceptual challenge score $\vmetric$ of each game was then normalized to the range $[0, 1]$ according to the min and max $\vmetric$ across all games (Appendix~\ref{reward_norm}). A game with a $\vmetric$ approaching $1$ indicates that it presents a greater perceptual challenge for reward-based learning, whereas a small $\vmetric$ means that reward-based discovery of task-relevant features is sufficient to perform well on that task.

\paragraph{LCD: Reinforcement learning challenge.}
To perturb reinforcement learning in a game, we manipulated the sparsity of rewards available to an agent. By manipulating reward sparsity, we were able to degrade ``learning signal'' available to an agent while also preserving the optimal policy, across all perturbations (Appendix~\ref{Stochastic_reward}). We did this by decreasing the probability $p$ of receiving a reward at every rewarding event, while also scaling reward magnitudes by $1/p$ to preserve the expected magnitude of rewards. We used reward perturbations of $p \in \{1, 0.75, 0.50, 0.25\}$ (Appendix Fig.~\ref{fig:xt_train}). All agents were given semantic segmentations as inputs, which minimized the perceptual challenge of games and controlled for potential perceptual confounds, such as how the perceptual complexity of the pixel inputs of a given game may non-linearly interact with this reinforcement learning perturbation.

We measured the reinforcement learning challenge of each game by comparing the performance of agents trained to solve each of the reward perturbations (Appendix Fig.~\ref{fig:xt_gen}). First, we again computed AUCs of the generalization reward curves for each agent, as above, but here for different values of $p$. Next, we computed the absolute AUC change from an agent trained on one $p$ to the next and averaged across all changes, yielding the score $\tmetric$ (Appendix~\ref{TCA_metric}), which was normalized to $[0,1]$ in the same way as $\vmetric$. A $\tmetric$ close to $1$ means that it presents a significant reinforcement learning challenge. Conversely, a game with a small $\tmetric$ indicates that reward sparsity is not the primary challenge.

\begin{figure}[t!]
    \includegraphics[width=\linewidth]{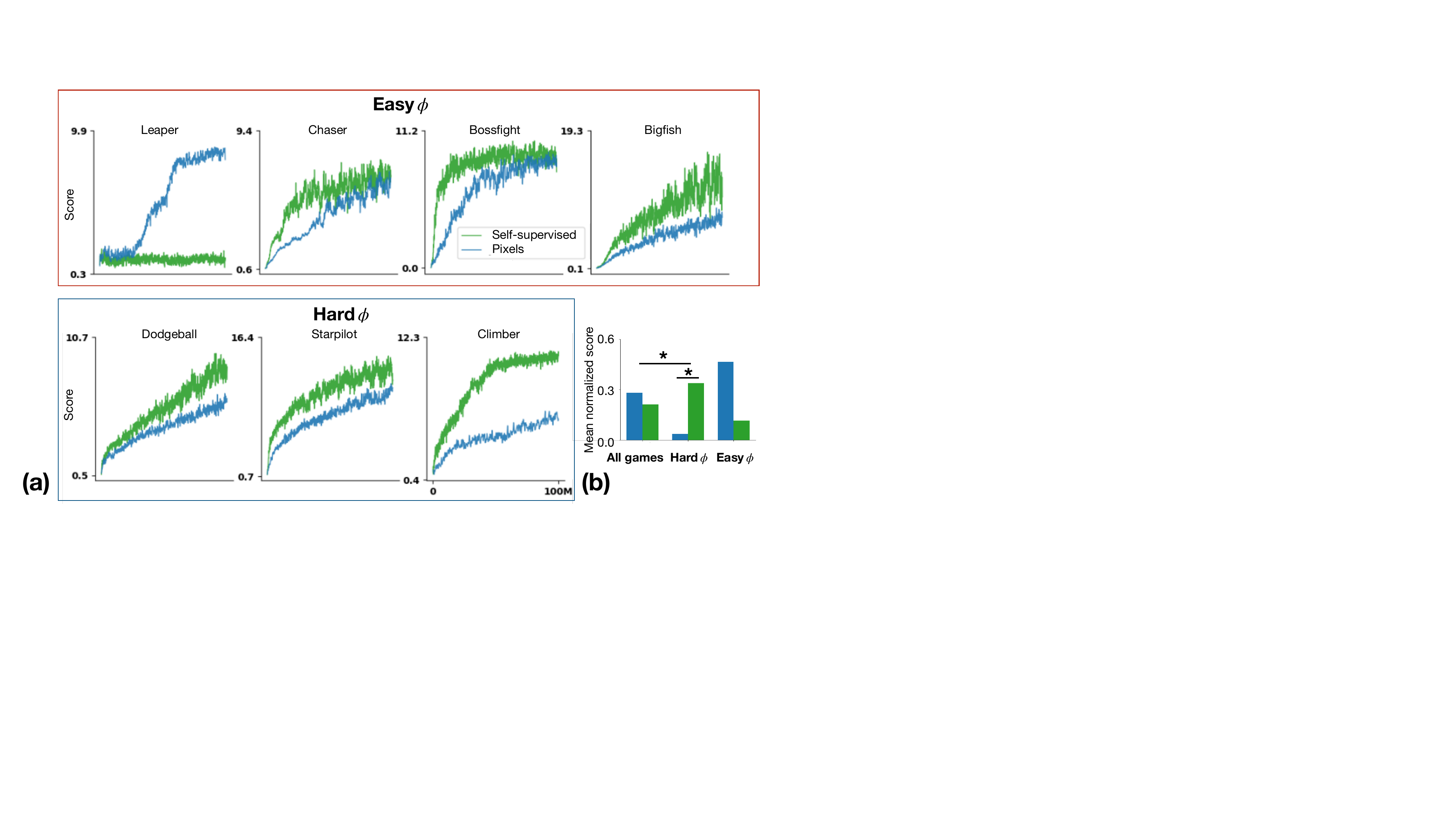}
    \caption{\textbf{Self-supervised pre-training for learning object-ness addresses perceptual challenges in \textit{Procgen} identified by the LCD.} (\textbf{a}) We measured the impact of our self-supervised perceptual front-end for learning games classified as having perceptual challenges (Hard $\phi$) or not (Easy $\phi$). The game which is helped the least by the front-end, ``Leaper'', requires agents to recognize colors -- a feature that is not preserved in the front-end. (\textbf{b}) When looking at aggregate performance across the entirety of \textit{Procgen}, the benefit of pre-training is only seen on those games that the LCD deems as perceptually challenging (Hard $\phi$), indicating that the current state of progress in dRL is being slowed by an inability to find benchmarks (or subsets of benchmarks) which are aligned with the algorithmic solutions being tested. One-tailed $t-$tests assessing the effect of learning from the perceptual front-end versus pixels are denoted by lines, * $= p < 0.05$.}
    \label{fig:selfsup}
\end{figure}

\paragraph{Diagnosing the computational challenges of the Procgen benchmark.}
We began by applying the LCD to PPO agents trained to solve each game in \textit{Procgen}. Doing so revealed a clear taxonomy of the computational challenges of each game (Fig.~\ref{fig:lcd}d): Fruitbot, Jumper, Miner, Heist, Chaser, and Bossfight are, in relative terms, perceptually easy and have easy reinforcement learning. Caveflyer, Dodgeball, Plunder, Coinrun, Ninja, Climber, and Starpilot are perceptually hard and have easy reinforcement learning. Bigfish and Leaper are perceptually easy and have hard reinforcement learning, and Maze is challenging for perception and reinforcement learning. The challenges of Maze, in particular, are likely driven by the need to trace and plan paths through a maze; routines which are known to be challenging for neural networks and draw upon feedback mechanisms in humans~\citep{Linsley2018-ls,Roelfsema2000-op,Roelfsema2006-wg,Ullman1984-gz}.

\begin{figure}[t!]
    \includegraphics[width=\linewidth]{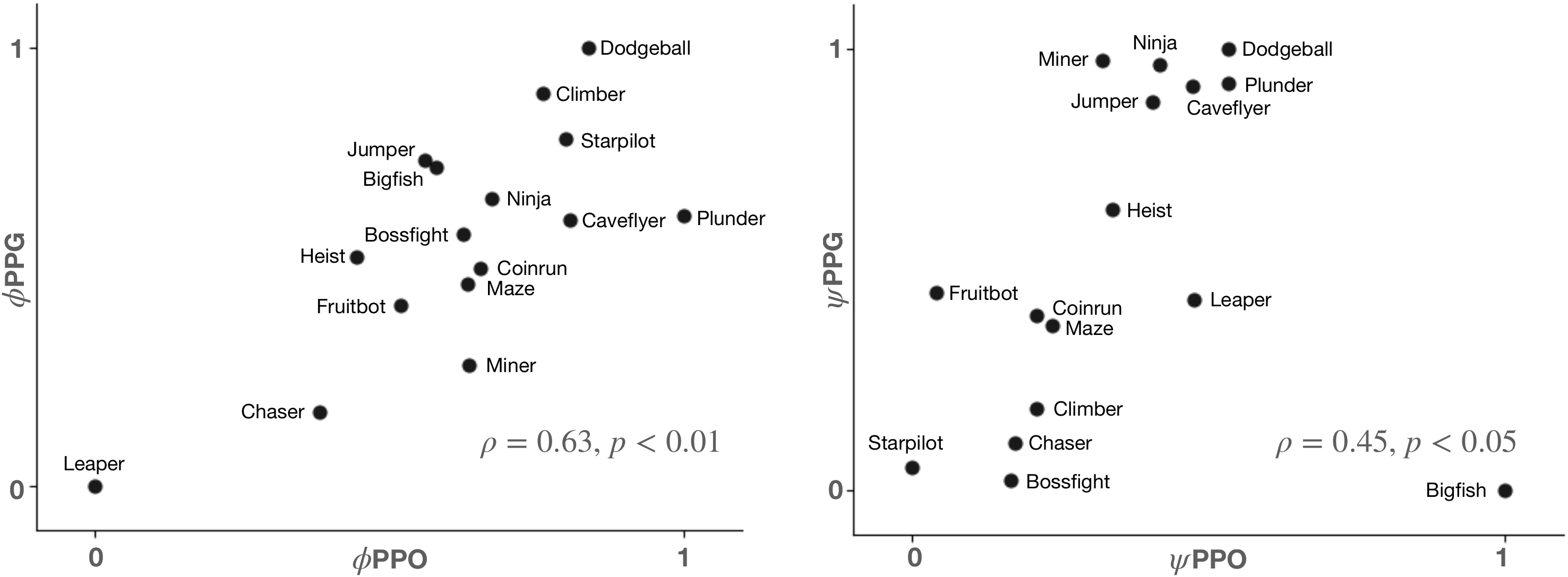}
    \caption{\textbf{LCD measurements of the perceptual and reinforcement learning challenges in \textit{Procgen} translate across dRL algorithms.} Measurements of visual challenges $\phi$ (left) and reinforcement learning challenges $\psi$ (right) faced by PPO agents were significantly correlated with the challenges of PPG agents, indicating that results derived from applying the LCD to simpler and lower-overhead algorithms can guide the development of agents pushing the state-of-the-art.}
    \label{fig:consistency}
\end{figure}

\section{Validating LCD taxonomies by building better agents}
Our general approach for validating the computational taxonomy of the LCD is to demonstrate that the gains of targeted solutions to the challenges coincide with the LCD's predictions. We develop separate solutions for perceptual challenges and reinforcement learning challenges and describe those methods and results here.

\paragraph{Developing solutions to perceptual challenges.} We began by constructing a new perceptual front-end that we believed could address the perceptual challenges of \textit{Procgen} games. Inspired by the reliance of humans and other animals on optic flow to effectively and efficiently navigate through their environments~\citep{Warren2001-fd,Warren2021-ly}, we turned to motion perception as the base inductive bias for our agents. Specifically, we rely on the motion of objects in the world to extract perceptual groups that could instruct appropriate behavior.

It has been found that training DNNs to predict the optic flow between successive frames of video can induce the ability to segment object-like superpixels from complex scenes~\citep{liu2021emergence}. Here, we build off this prior work to develop a self-supervised pre-training strategy for solving perceptual challenges in \textit{Procgen}. Our architecture consists of three separate DNNs: (\textit{i}) a ResNet18~\citep{He2015-cb} feature pyramid network (FPN)~\citep{Lin2016-sd} for predicting the optic flow (\textit{Flow encoder}) between two frames from successive timepoints of a sequence, frame $t$ and frame $t+1$, (\textit{ii}) a biologically inspired recurrent neural network ~\citep{Linsley2018-ls} (\textit{Object encoder}) for encoding the contents of frame $t$, and (\textit{iii}) a fully convolutional~\citep{long2015fully} decoder that learns how to use the two encoders' outputs to match frame $t$ to frame $t+1$ via a differentiable warping module.

This model was pre-trained to learn visual representations in a purely self-supervised manner, by minimizing the mean squared error between two successive frames sampled from the seven games of \textit{Procgen} which had consistent motion (Chaser, Leaper, Dodgeball, Climber, Bossfight, Starpilot, and Bigfish). We sampled frames for training from 14,000 videos (2,000 per game) in batches of 4 for $50k$ iterations using the Adam optimizer~\citep{Kingma2014-ct} and a learning rate of $1e-4$. As an alternative to our approach for self-supervision, we also evaluated the effectiveness of state-of-the-art features for one-shot recognition from a clip-ViT-B-32 model pre-trained on 400M natural images and captions~\citep{Radford2021-km}. 

\paragraph{Perceptual challenge evaluations.} 
We evaluated the LCD predictions of the perceptual challenges $\phi$ in each game by testing if preprocessing frames with our self-supervised front-end improved agent performance more for games with higher values of $\phi$ (\textit{i.e.}, the front-end was fixed and not trained with reward). Because our self-supervised front-end relied on the game motion to extract perceptual groups, we focused our analysis on the same seven games with dynamic elements that it was pre-trained on. As predicted by the LCD, our front-end improved performance significantly more for perceptually challenging games (hard $\phi$) than perceptually simple games (easy $\phi$; Fig.~\ref{fig:selfsup}b). Overall, the benefit of this front-end correlated significantly with the $\phi$ predicted for each game (Spearman's $\rho=0.79$ between $\phi$ and the AUC of performance curves, $p < .05$).
\begin{wrapfigure}[26]{R}{0.5\textwidth}\vspace{-3mm}
  \centering
    \includegraphics[width=\linewidth]{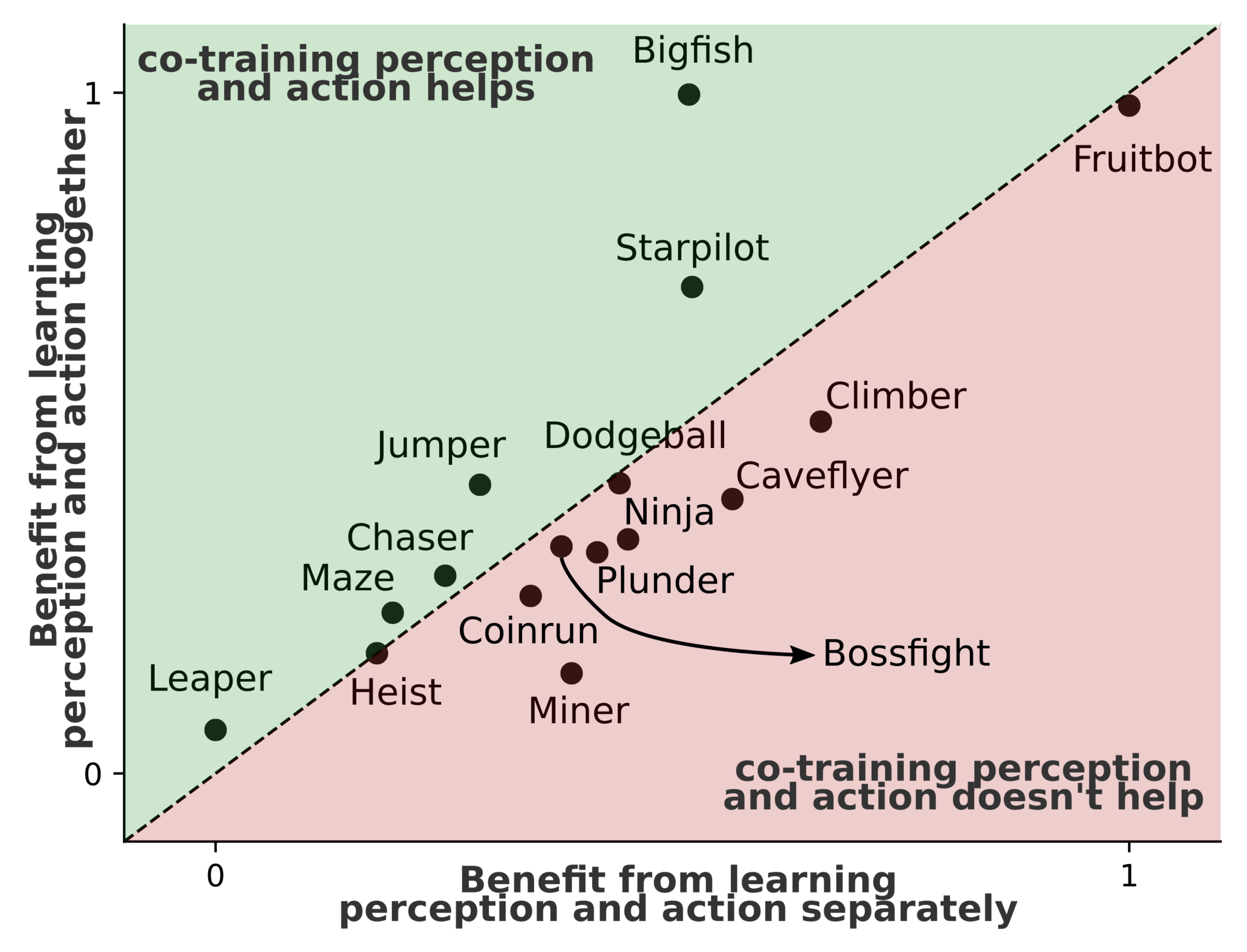}
    \caption{\textbf{A subset of games in \textit{Procgen} see a synergistic benefit from co-training perception and action/policies end-to-end.} We compared the magnitude of improvement over baseline (PPO/pixels) agents experienced when they were given an ideal reinforcement learning algorithm (PPG) and perceptual representations (semantic segmentation) to the additive improvement of an agent given PPG over pixels and an agent given PPO and semantic segmentation inputs. Only 6/16 games benefit from co-training.}
  \label{fig:entangled_learning}
\end{wrapfigure}
To understand the extent that our results merely demonstrate the value of pre-trained visual representations, we repeated our analysis while using a pre-trained CLIP as the perceptual front-end for agents. But while CLIP embeddings yield state-of-the-art performance in one-shot classification~\citep{Radford2021-km}, they did not help agents learn more effective policies, faster (Appendix Fig.~\ref{fig:clip}). In other words, LCD's perceptual challenges are more readily addressed by algorithmic solutions that induce object-like representations.

\paragraph{Guiding algorithmic development with the LCD.} Given the success of our self-supervised perceptual front-end, an obvious approach towards building better agents is to simply use this front-end on every game. However, the impact of our front-end on performance was significantly blunted when measured on all games (the difference between green bars in Hard $\phi$ versus All Games, Fig.~\ref{fig:selfsup}b). This finding points to a larger problem in algorithmic development for RL: without a reliable approach for diagnosing the specific computational challenges in a game, the effectiveness of reasonable solutions can be blunted or even appear to hurt agents, as our front end nominally does here.

\paragraph{Reward shaping for resolving reinforcement learning challenges.}
The reinforcement learning challenges revealed by the LCD are due to the sparsity of rewards available to agents. As a partial solution to this problem, we investigated the impact of ``reward shaping''~\citep{Skinner1938-wb} on performance. Reward shaping is a technique from animal training where supplemental rewards are provided to make the learning problem easier~\cite{Wiewiora2010-ty}. We adopted this for three games amenable to reward shaping: Heist, Leaper, and Maze; the latter two LCD identified as having significant reinforcement learning challenges.

In Heist, agents must collect three keys and unlock their matching doors before they can reach the rewarding gem. We provided agents with additional rewards when they collected a key or opened a door. In Leaper, agents must safely get across several lanes of roads followed by several rows of rivers. Lanes have moving cars agents must avoid, while rivers have moving logs agents must stay on. We gave agents additional rewards whenever they reached a new lane or river. 

In Maze, agents must wander through a complex maze to get to a reward of cheese; however, the topological complexity of the mazes can deem rewards difficult to discover. To address this problem, we encourage exploration of novel locations through intermediate rewards~\cite{Bellemare2016-tt,Ecoffet2021-df}. In all of these cases, agents did not get these auxiliary rewards during test time.

\paragraph{Reinforcement learning challenge evaluations.} 
Agents trained with reward shaping on Heist ($t = 1.69, p < .05$), Leaper ($t = 41.03, p < .001$), and Maze ($t = 2.69, p < .01$) performed significantly better than those trained on the normal versions of each game (all tests are one-tailed $t$-tests; Appendix Fig.~\ref{fig:leaper_reward_shaping}). Moreover, the improvements in performance from reward shaping significantly correlated with the $\psi$ values of each game tested here (Spearman's $\rho=0.92$, $p< .001$).

\paragraph{The LCD-derived taxonomy translates across dRL algorithms.}
Our findings thus far indicate that an LCD calibrated with PPO agents makes reliable predictions about the challenges of individual games in \textit{Procgen}. But despite the popularity of PPO, it has been surpassed by more recent dRL algorithms, like PPG, which achieved state-of-the-art performance on \textit{Procgen}. To what extent does the LCD taxonomy derived from PPO translate to more effective dRL algorithms? We tested this question by applying the LCD to agents trained with PPG (Appendix Figs.~\ref{fig:xv_ppg},~\ref{fig:xt_ppg}) and found that its predictions were significantly correlated with those from an LCD calibrated on PPO agents (Fig.~\ref{fig:consistency}). This means that our PPO-derived taxonomy can be relied on prospectively for algorithmic development on \textit{Procgen}.

\paragraph{The whole is only sometimes greater than the sum of the parts in dRL.}

Training perception-for-action, end-to-end, is the standard approach to dRL ever since the introduction of DQNs. Here, we tested whether agents trained on \textit{Procgen} experienced a superlinear benefit of co-training perception and action or not. We did this by computing two performance differentials. In both cases, the \textit{baseline} was a PPO agent trained on pixels. (\textit{i}) the performance of a PPG agent trained on semantic segmentation inputs \textit{minus} baseline, and (\textit{ii}) the performance of a PPG agent trained on pixels \textit{minus} baseline \textit{plus} the performance of a PPO agent trained on semantic segmentation inputs \textit{minus} baseline (Fig.~\ref{fig:entangled_learning}).

Co-training perception for action helped performance in 6 of 16 games: Leaper, Maze, Chaser, Jumper, Starpilot, and Bigfish. Co-training did not help in 10 of 16 games: Heist, Dodgeball, Fruitbot, Miner, Coinrun, Bossfight, Plunder, Ninja, Caveflyer, and Climber. This finding poses an intriguing prospect. It suggests the existence of different ``minibenchmarks" within \textit{Procgen} that researchers should tap into, depending on whether they wish to quantify ``improvements" to an agent's policy learning algorithm, its representation learning system, or synergies between the two.



\section{Discussion}
Ever since the introduction of the DQN, the field of dRL has consistently succeeded in reaching and exceeding human performance on defined tasks: beating grand masters at the game of Go, or exceeding the average performance of humans on the Atari benchmark. However, even in these successes, it was clear that certain games presented different challenges than others. Take ``Montezuma's Revenge'', which the original DQN could not solve. Subsequent analysis of Montezuma's Revenge led to the conclusion that the failures of the DQN were because of its long time horizon and sparse rewards; challenges which, these days, have positioned it as a critical test for dRL algorithms~\citep{Ecoffet2019-zp,Roderick2018-kd,Salimans2018-dy}. We believe that the field of dRL can benefit from a systematic approach to diagnosing such computational challenges in any benchmark. Our LCD tool is a major step towards this goal.

By applying our LCD to the \textit{Procgen} benchmark, we discover a novel taxonomy of those games, organized according to the relative perceptual and reinforcement learning challenges of each. This taxonomy delivers two prescriptions for algorithmic development in dRL. First, because the computational challenges of games in benchmarks like \textit{Procgen} are not i.i.d., there will be low signal for adjudicating between algorithms designed to improve a problem that is not over-represented in the benchmark. Second, although humans are capable of learning perception-for-action, and dRL agents generally learn in this way, not all games in benchmarks like \textit{Procgen} benefit from doing so. In the short-term, these problems can be addressed by using the LCD to select games in a benchmark that are aligned with the algorithmic design goals of researchers. In the long-term, there is a need in the field for new benchmarks that can tap into a wider range of computational challenges with less bias than is found in \textit{Procgen}. We believe that our LCD is an important tool for advancing the pace of progress in dRL, and we release our experimental code to support this goal.

\section{Ethics statement}
Our work is a contribution to deep reinforcement learning and so inherits the concerns already inherent in dRL applications. Currently, our work has been applied to harmless toy videogame tasks, but improper application of dRL to real world applications can have negative externalities on society, especially when algorithms make poor decisions. Depending on the domain of application, these can include discriminatory decisions against marginalized groups or serious injury and fatality if control of vehicles or critical systems are involved. Advancing agents' ability to make better decisions would help mitigate this issue and eventually broaden the domains in which dRL can help, and our contribution of identifying limitations in current dRL agents will help in that regard.

\section{Reproducibility statement}
We have taken several steps to ensure the reproducibility of our work. We have made all code for our modified \textit{Procgen} environment and the self-supervised model for perceptual grouping available at~\url{https://anonymous.4open.science/r/lcd-procgen/}. For PPO, we have used the code available at \url{https://github.com/openai/train-procgen} along with the hyperparameters found in~\cite{Cobbe2020-zi}. For PPG, we have used the code at \url{https://github.com/openai/phasic-policy-gradient} and the hyperparameters in~\cite{Cobbe2021-wd}. Finally, we have reported the computing resources used and the number of experiments run under ``dRL algorithms'' of Section \ref{drl_details}.

\bibliography{iclr2023_conference}

\begin{thebibliography}{67}
\providecommand{\natexlab}[1]{#1}
\providecommand{\url}[1]{\texttt{#1}}
\expandafter\ifx\csname urlstyle\endcsname\relax
  \providecommand{\doi}[1]{doi: #1}\else
  \providecommand{\doi}{doi: \begingroup \urlstyle{rm}\Url}\fi

\bibitem[Agarwal et~al.(2021)Agarwal, Schwarzer, Castro, Courville, and
  Bellemare]{Agarwal2021-eh}
Rishabh Agarwal, Max Schwarzer, Pablo~Samuel Castro, Aaron Courville, and
  Marc~G Bellemare.
\newblock Deep reinforcement learning at the edge of the statistical precipice.
\newblock August 2021.

\bibitem[Bellemare et~al.(2016)Bellemare, Srinivasan, Ostrovski, Schaul,
  Saxton, and Munos]{Bellemare2016-tt}
Marc Bellemare, Sriram Srinivasan, Georg Ostrovski, Tom Schaul, David Saxton,
  and Remi Munos.
\newblock Unifying count-based exploration and intrinsic motivation.
\newblock In D.~Lee, M.~Sugiyama, U.~Luxburg, I.~Guyon, and R.~Garnett (eds.),
  \emph{Advances in Neural Information Processing Systems}, volume~29. Curran
  Associates, Inc., 2016.
\newblock URL
  \url{https://proceedings.neurips.cc/paper/2016/file/afda332245e2af431fb7b672a68b659d-Paper.pdf}.

\bibitem[Chelu et~al.(2020)Chelu, Precup, and van Hasselt]{Chelu2020-of}
Veronica Chelu, Doina Precup, and Hado van Hasselt.
\newblock Forethought and hindsight in credit assignment.
\newblock In \emph{Proceedings of the 34th International Conference on Neural
  Information Processing Systems}, number Article 191 in NIPS'20, pp.\
  2270--2281, Red Hook, NY, USA, December 2020. Curran Associates Inc.

\bibitem[Cobbe et~al.(2020)Cobbe, Hesse, Hilton, and Schulman]{Cobbe2020-zi}
Karl Cobbe, Chris Hesse, Jacob Hilton, and John Schulman.
\newblock Leveraging procedural generation to benchmark reinforcement learning.
\newblock In Hal~Daum{\'e} Iii and Aarti Singh (eds.), \emph{Proceedings of the
  37th International Conference on Machine Learning}, volume 119 of
  \emph{Proceedings of Machine Learning Research}, pp.\  2048--2056. PMLR,
  2020.

\bibitem[Cobbe et~al.(2021)Cobbe, Hilton, Klimov, and Schulman]{Cobbe2021-wd}
Karl~W Cobbe, Jacob Hilton, Oleg Klimov, and John Schulman.
\newblock Phasic policy gradient.
\newblock In Marina Meila and Tong Zhang (eds.), \emph{Proceedings of the 38th
  International Conference on Machine Learning}, volume 139 of
  \emph{Proceedings of Machine Learning Research}, pp.\  2020--2027. PMLR,
  2021.

\bibitem[Davidson \& Lake(2020)Davidson and Lake]{Davidson2020-be}
Guy Davidson and Brenden~M Lake.
\newblock Investigating simple object representations in {Model-Free} deep
  reinforcement learning.
\newblock February 2020.

\bibitem[Dittadi et~al.(2021)Dittadi, Tr{\"a}uble, W{\"u}thrich, Widmaier,
  Gehler, Winther, Locatello, Bachem, Sch{\"o}lkopf, and Bauer]{Dittadi2021-mq}
Andrea Dittadi, Frederik Tr{\"a}uble, Manuel W{\"u}thrich, Felix Widmaier,
  Peter Gehler, Ole Winther, Francesco Locatello, Olivier Bachem, Bernhard
  Sch{\"o}lkopf, and Stefan Bauer.
\newblock The role of pretrained representations for the {OOD} generalization
  of reinforcement learning agents.
\newblock July 2021.

\bibitem[Ecoffet et~al.(2019)Ecoffet, Huizinga, Lehman, Stanley, and
  Clune]{Ecoffet2019-zp}
Adrien Ecoffet, Joost Huizinga, Joel Lehman, Kenneth~O Stanley, and Jeff Clune.
\newblock {Go-Explore}: a new approach for {Hard-Exploration} problems.
\newblock January 2019.

\bibitem[Ecoffet et~al.(2021)Ecoffet, Huizinga, Lehman, Stanley, and
  Clune]{Ecoffet2021-df}
Adrien Ecoffet, Joost Huizinga, Joel Lehman, Kenneth~O Stanley, and Jeff Clune.
\newblock First return, then explore.
\newblock \emph{Nature}, 590\penalty0 (7847):\penalty0 580--586, February 2021.

\bibitem[Emberson(2017)]{Emberson2017-kn}
L~L Emberson.
\newblock Chapter one - how does experience shape early development?
  considering the role of {Top-Down} mechanisms.
\newblock In Janette~B Benson (ed.), \emph{Advances in Child Development and
  Behavior}, volume~52, pp.\  1--41. JAI, January 2017.

\bibitem[Espeholt et~al.(2018)Espeholt, Soyer, Munos, Simonyan, Mnih, Ward,
  Doron, Firoiu, Harley, Dunning, Legg, and Kavukcuoglu]{Espeholt2018-cw}
Lasse Espeholt, Hubert Soyer, Remi Munos, Karen Simonyan, Vlad Mnih, Tom Ward,
  Yotam Doron, Vlad Firoiu, Tim Harley, Iain Dunning, Shane Legg, and Koray
  Kavukcuoglu.
\newblock {{IMPALA}}: Scalable distributed {Deep-{RL}} with importance weighted
  {Actor-Learner} architectures.
\newblock In Jennifer Dy and Andreas Krause (eds.), \emph{Proceedings of the
  35th International Conference on Machine Learning}, volume~80 of
  \emph{Proceedings of Machine Learning Research}, pp.\  1407--1416. PMLR,
  2018.

\bibitem[Fleuret et~al.(2011)Fleuret, Li, Dubout, Wampler, Yantis, and
  Geman]{Fleuret2011-oq}
Fran{\c c}ois Fleuret, Ting Li, Charles Dubout, Emma~K Wampler, Steven Yantis,
  and Donald Geman.
\newblock Comparing machines and humans on a visual categorization test.
\newblock \emph{Proc. Natl. Acad. Sci. U. S. A.}, 108\penalty0 (43):\penalty0
  17621--17625, October 2011.

\bibitem[Frans et~al.(2018)Frans, Ho, Chen, Abbeel, and Schulman]{Frans2017-iu}
Kevin Frans, Jonathan Ho, Xi~Chen, Pieter Abbeel, and John Schulman.
\newblock Meta learning shared hierarchies.
\newblock In \emph{International Conference on Learning Representations}, 2018.
\newblock URL \url{https://openreview.net/forum?id=SyX0IeWAW}.

\bibitem[Funke et~al.(2021)Funke, Borowski, Stosio, Brendel, Wallis, and
  Bethge]{Funke2021-bn}
Christina~M Funke, Judy Borowski, Karolina Stosio, Wieland Brendel, Thomas S~A
  Wallis, and Matthias Bethge.
\newblock Five points to check when comparing visual perception in humans and
  machines.
\newblock \emph{J. Vis.}, 21\penalty0 (3):\penalty0 16, March 2021.

\bibitem[Geirhos et~al.(2018)Geirhos, Temme, Rauber, Sch{\"u}tt, Bethge, and
  Wichmann]{Geirhos2018-jl}
Robert Geirhos, Carlos R~M Temme, Jonas Rauber, Heiko~H Sch{\"u}tt, Matthias
  Bethge, and Felix~A Wichmann.
\newblock Generalisation in humans and deep neural networks.
\newblock In S~Bengio, H~Wallach, H~Larochelle, K~Grauman, N~Cesa-Bianchi, and
  R~Garnett (eds.), \emph{Advances in Neural Information Processing Systems
  31}, pp.\  7549--7561. Curran Associates, Inc., 2018.

\bibitem[Geirhos et~al.(2021)Geirhos, Narayanappa, Mitzkus, Thieringer, Bethge,
  Wichmann, and Brendel]{Geirhos2021-rr}
Robert Geirhos, Kantharaju Narayanappa, Benjamin Mitzkus, Tizian Thieringer,
  Matthias Bethge, Felix~A Wichmann, and Wieland Brendel.
\newblock Partial success in closing the gap between human and machine vision.
\newblock June 2021.

\bibitem[Gupta et~al.(2018)Gupta, Mendonca, Liu, Abbeel, and
  Levine]{Gupta2018-rd}
Abhishek Gupta, Russell Mendonca, YuXuan Liu, Pieter Abbeel, and Sergey Levine.
\newblock Meta-reinforcement learning of structured exploration strategies.
\newblock In S.~Bengio, H.~Wallach, H.~Larochelle, K.~Grauman, N.~Cesa-Bianchi,
  and R.~Garnett (eds.), \emph{Advances in Neural Information Processing
  Systems}, volume~31. Curran Associates, Inc., 2018.
\newblock URL
  \url{https://proceedings.neurips.cc/paper/2018/file/4de754248c196c85ee4fbdcee89179bd-Paper.pdf}.

\bibitem[He et~al.(2015)He, Zhang, Ren, and Sun]{He2015-cb}
Kaiming He, Xiangyu Zhang, Shaoqing Ren, and Jian Sun.
\newblock Deep residual learning for image recognition.
\newblock December 2015.

\bibitem[Houthooft et~al.(2018)Houthooft, Chen, Isola, Stadie, Wolski,
  Jonathan~Ho, and Abbeel]{Houthooft2018-rq}
Rein Houthooft, Yuhua Chen, Phillip Isola, Bradly Stadie, Filip Wolski, OpenAI
  Jonathan~Ho, and Pieter Abbeel.
\newblock Evolved policy gradients.
\newblock In S.~Bengio, H.~Wallach, H.~Larochelle, K.~Grauman, N.~Cesa-Bianchi,
  and R.~Garnett (eds.), \emph{Advances in Neural Information Processing
  Systems}, volume~31. Curran Associates, Inc., 2018.
\newblock URL
  \url{https://proceedings.neurips.cc/paper/2018/file/7876acb66640bad41f1e1371ef30c180-Paper.pdf}.

\bibitem[Jaderberg et~al.(2016)Jaderberg, Mnih, Czarnecki, Schaul, Leibo,
  Silver, and Kavukcuoglu]{Jaderberg2016-xk}
Max Jaderberg, Volodymyr Mnih, Wojciech~Marian Czarnecki, Tom Schaul, Joel~Z
  Leibo, David Silver, and Koray Kavukcuoglu.
\newblock Reinforcement learning with unsupervised auxiliary tasks.
\newblock November 2016.

\bibitem[Kim et~al.(2018)Kim, Ricci, and Serre]{Kim2018-qr}
Junkyung Kim, Matthew Ricci, and Thomas Serre.
\newblock {Not-So-CLEVR}: learning same-different relations strains feedforward
  neural networks.
\newblock \emph{Interface Focus}, 8\penalty0 (4):\penalty0 20180011, August
  2018.

\bibitem[Kim et~al.(2020)Kim, Linsley, Thakkar, and Serre]{Kim2020-yw}
Junkyung Kim, Drew Linsley, Kalpit Thakkar, and Thomas Serre.
\newblock Disentangling neural mechanisms for perceptual grouping.
\newblock \emph{International Conference on Representation Learning}, 2020.

\bibitem[Kingma \& Ba(2014)Kingma and Ba]{Kingma2014-ct}
Diederik~P Kingma and Jimmy Ba.
\newblock Adam: A method for stochastic optimization.
\newblock December 2014.

\bibitem[Kirk et~al.(2021)Kirk, Zhang, Grefenstette, and
  Rockt{\"a}schel]{Kirk2021-mp}
Robert Kirk, Amy Zhang, Edward Grefenstette, and Tim Rockt{\"a}schel.
\newblock A survey of generalisation in deep reinforcement learning.
\newblock November 2021.

\bibitem[Laskin et~al.(2020)Laskin, Srinivas, and Abbeel]{Laskin2020-zv}
Michael Laskin, Aravind Srinivas, and Pieter Abbeel.
\newblock {{CURL}}: Contrastive unsupervised representations for reinforcement
  learning.
\newblock In Hal~Daum{\'e} Iii and Aarti Singh (eds.), \emph{Proceedings of the
  37th International Conference on Machine Learning}, volume 119 of
  \emph{Proceedings of Machine Learning Research}, pp.\  5639--5650. PMLR,
  2020.

\bibitem[Lehnert et~al.(2020)Lehnert, Littman, and Frank]{Lehnert2020-hj}
Lucas Lehnert, Michael~L Littman, and Michael~J Frank.
\newblock Reward-predictive representations generalize across tasks in
  reinforcement learning.
\newblock \emph{PLoS Comput. Biol.}, 16\penalty0 (10):\penalty0 e1008317,
  October 2020.

\bibitem[Li et~al.(2004)Li, Pi{\"e}ch, and Gilbert]{Li2004-yt}
Wu~Li, Valentin Pi{\"e}ch, and Charles~D Gilbert.
\newblock Perceptual learning and top-down influences in primary visual cortex.
\newblock \emph{Nat. Neurosci.}, 7\penalty0 (6):\penalty0 651--657, June 2004.

\bibitem[Lin et~al.(2016)Lin, Doll{\'a}r, Girshick, He, Hariharan, and
  Belongie]{Lin2016-sd}
Tsung-Yi Lin, Piotr Doll{\'a}r, Ross Girshick, Kaiming He, Bharath Hariharan,
  and Serge Belongie.
\newblock Feature pyramid networks for object detection.
\newblock December 2016.

\bibitem[Linsley et~al.(2018{\natexlab{a}})Linsley, Kim, Veerabadran, and
  Serre]{Linsley2018-ls}
Drew Linsley, Junkyung Kim, Vijay Veerabadran, and Thomas Serre.
\newblock Learning long-range spatial dependencies with horizontal
  gated-recurrent units.
\newblock May 2018{\natexlab{a}}.

\bibitem[Linsley et~al.(2018{\natexlab{b}})Linsley, Shiebler, Eberhardt, and
  Serre]{Linsley2018-ee}
Drew Linsley, Dan Shiebler, Sven Eberhardt, and Thomas Serre.
\newblock Global-and-local attention networks for visual recognition.
\newblock May 2018{\natexlab{b}}.

\bibitem[Linsley et~al.(2019)Linsley, Shiebler, Eberhardt, and
  Serre]{Linsley2019-ew}
Drew Linsley, Dan Shiebler, Sven Eberhardt, and Thomas Serre.
\newblock Learning what and where to attend with humans in the loop.
\newblock In \emph{International Conference on Learning Representations}, 2019.

\bibitem[Linsley et~al.(2020)Linsley, Ashok, Govindarajan, Liu, and
  Serre]{Linsley2020-zj}
Drew Linsley, A~Ashok, L~Govindarajan, Rex Liu, and Thomas Serre.
\newblock Stable and expressive recurrent vision models.
\newblock \emph{NeurIPS}, 2020.

\bibitem[Linsley et~al.(2021)Linsley, Malik, Kim, Govindarajan, Mingolla, and
  Serre]{Linsley2021-vx}
Drew Linsley, Girik Malik, Junkyung Kim, Lakshmi~N Govindarajan, Ennio
  Mingolla, and Thomas Serre.
\newblock Tracking without re-recognition in humans and machines.
\newblock May 2021.

\bibitem[Liu et~al.(2021)Liu, Wu, Yu, and Lin]{liu2021emergence}
Runtao Liu, Zhirong Wu, Stella Yu, and Stephen Lin.
\newblock The emergence of objectness: Learning zero-shot segmentation from
  videos.
\newblock \emph{Advances in Neural Information Processing Systems},
  34:\penalty0 13137--13152, 2021.

\bibitem[Long et~al.(2015)Long, Shelhamer, and Darrell]{long2015fully}
Jonathan Long, Evan Shelhamer, and Trevor Darrell.
\newblock Fully convolutional networks for semantic segmentation.
\newblock In \emph{Proceedings of the IEEE conference on computer vision and
  pattern recognition}, pp.\  3431--3440, 2015.

\bibitem[Mnih et~al.(2015)Mnih, Kavukcuoglu, Silver, Rusu, Veness, Bellemare,
  Graves, Riedmiller, Fidjeland, Ostrovski, Petersen, Beattie, Sadik,
  Antonoglou, King, Kumaran, Wierstra, Legg, and Hassabis]{Mnih2015-kz}
Volodymyr Mnih, Koray Kavukcuoglu, David Silver, Andrei~A Rusu, Joel Veness,
  Marc~G Bellemare, Alex Graves, Martin Riedmiller, Andreas~K Fidjeland, Georg
  Ostrovski, Stig Petersen, Charles Beattie, Amir Sadik, Ioannis Antonoglou,
  Helen King, Dharshan Kumaran, Daan Wierstra, Shane Legg, and Demis Hassabis.
\newblock Human-level control through deep reinforcement learning.
\newblock \emph{Nature}, 518\penalty0 (7540):\penalty0 529--533, February 2015.

\bibitem[O'Reilly(2001)]{OReilly2001-jo}
R~C O'Reilly.
\newblock Generalization in interactive networks: the benefits of inhibitory
  competition and hebbian learning.
\newblock \emph{Neural Comput.}, 13\penalty0 (6):\penalty0 1199--1241, June
  2001.

\bibitem[Pong et~al.(2021)Pong, Nair, Smith, Huang, and Levine]{Pong2021-ly}
Vitchyr~H Pong, Ashvin Nair, Laura Smith, Catherine Huang, and Sergey Levine.
\newblock Offline {Meta-Reinforcement} learning with online {Self-Supervision}.
\newblock July 2021.

\bibitem[Radford et~al.(2021)Radford, Kim, Hallacy, Ramesh, Goh, Agarwal,
  Sastry, Askell, Mishkin, Clark, Krueger, and Sutskever]{Radford2021-km}
Alec Radford, Jong~Wook Kim, Chris Hallacy, Aditya Ramesh, Gabriel Goh,
  Sandhini Agarwal, Girish Sastry, Amanda Askell, Pamela Mishkin, Jack Clark,
  Gretchen Krueger, and Ilya Sutskever.
\newblock Learning transferable visual models from natural language
  supervision.
\newblock February 2021.

\bibitem[Radosavovic et~al.()Radosavovic, Xiao, James, Abbeel, Malik, and
  Darrell]{Radosavovic_undated-ux}
Ilija Radosavovic, Tete Xiao, Stephen James, Pieter Abbeel, Jitendra Malik, and
  Trevor Darrell.
\newblock {Real-World} robot learning with masked visual pre-training.
\newblock \emph{CoRL}.

\bibitem[Roderick et~al.(2018)Roderick, Grimm, and Tellex]{Roderick2018-kd}
Melrose Roderick, Christopher Grimm, and Stefanie Tellex.
\newblock Deep abstract {Q-Networks}.
\newblock In \emph{Proceedings of the 17th International Conference on
  Autonomous Agents and {MultiAgent} Systems}, AAMAS '18, pp.\  131--138,
  Richland, SC, July 2018. International Foundation for Autonomous Agents and
  Multiagent Systems.

\bibitem[Roelfsema et~al.(2000)Roelfsema, Lamme, and
  Spekreijse]{Roelfsema2000-op}
P~R Roelfsema, V~A Lamme, and H~Spekreijse.
\newblock The implementation of visual routines.
\newblock \emph{Vision Res.}, 40\penalty0 (10-12):\penalty0 1385--1411, 2000.

\bibitem[Roelfsema(2006)]{Roelfsema2006-wg}
Pieter~R Roelfsema.
\newblock Cortical algorithms for perceptual grouping.
\newblock \emph{Annu. Rev. Neurosci.}, 29:\penalty0 203--227, January 2006.

\bibitem[Roelfsema et~al.(2010)Roelfsema, van Ooyen, and
  Watanabe]{Roelfsema2010-jy}
Pieter~R Roelfsema, Arjen van Ooyen, and Takeo Watanabe.
\newblock Perceptual learning rules based on reinforcers and attention.
\newblock \emph{Trends Cogn. Sci.}, 14\penalty0 (2):\penalty0 64--71, February
  2010.

\bibitem[Salimans \& Chen(2018)Salimans and Chen]{Salimans2018-dy}
Tim Salimans and Richard Chen.
\newblock Learning montezuma's revenge from a single demonstration.
\newblock December 2018.

\bibitem[Schulman et~al.(2017)Schulman, Wolski, Dhariwal, Radford, and
  Klimov]{Schulman2017-gz}
John Schulman, Filip Wolski, Prafulla Dhariwal, Alec Radford, and Oleg Klimov.
\newblock Proximal policy optimization algorithms.
\newblock July 2017.

\bibitem[Scott et~al.(2007)Scott, Pascalis, and Nelson]{Scott2007-gp}
Lisa~S Scott, Olivier Pascalis, and Charles~A Nelson.
\newblock A {Domain-General} theory of the development of perceptual
  discrimination.
\newblock \emph{Curr. Dir. Psychol. Sci.}, 16\penalty0 (4):\penalty0 197--201,
  2007.

\bibitem[Shelhamer et~al.(2017)Shelhamer, Mahmoudieh, Argus, and
  Darrell]{Shelhamer2017-ah}
Evan Shelhamer, Parsa Mahmoudieh, Max Argus, and Trevor Darrell.
\newblock Loss is its own reward: {Self-Supervision} for reinforcement
  learning.
\newblock February 2017.

\bibitem[Silver et~al.(2017)Silver, Schrittwieser, Simonyan, Antonoglou, Huang,
  Guez, Hubert, Baker, Lai, Bolton, Chen, Lillicrap, Hui, Sifre, van~den
  Driessche, Graepel, and Hassabis]{Silver2017-ei}
David Silver, Julian Schrittwieser, Karen Simonyan, Ioannis Antonoglou, Aja
  Huang, Arthur Guez, Thomas Hubert, Lucas Baker, Matthew Lai, Adrian Bolton,
  Yutian Chen, Timothy Lillicrap, Fan Hui, Laurent Sifre, George van~den
  Driessche, Thore Graepel, and Demis Hassabis.
\newblock Mastering the game of go without human knowledge.
\newblock \emph{Nature}, 550\penalty0 (7676):\penalty0 354--359, October 2017.

\bibitem[Silver et~al.(2018)Silver, Hubert, Schrittwieser, Antonoglou, Lai,
  Guez, Lanctot, Sifre, Kumaran, Graepel, Lillicrap, Simonyan, and
  Hassabis]{Silver2018-op}
David Silver, Thomas Hubert, Julian Schrittwieser, Ioannis Antonoglou, Matthew
  Lai, Arthur Guez, Marc Lanctot, Laurent Sifre, Dharshan Kumaran, Thore
  Graepel, Timothy Lillicrap, Karen Simonyan, and Demis Hassabis.
\newblock A general reinforcement learning algorithm that masters chess, shogi,
  and go through self-play.
\newblock \emph{Science}, 362\penalty0 (6419):\penalty0 1140--1144, December
  2018.

\bibitem[Skinner(1938)]{Skinner1938-wb}
B~F Skinner.
\newblock \emph{The Behavior of Organisms: An Experimental Analysis}.
\newblock D. Appleton \& Company, December 1938.

\bibitem[Stooke et~al.(2021)Stooke, Lee, Abbeel, and Laskin]{Stooke2021-qr}
Adam Stooke, Kimin Lee, Pieter Abbeel, and Michael Laskin.
\newblock Decoupling representation learning from reinforcement learning.
\newblock In Marina Meila and Tong Zhang (eds.), \emph{Proceedings of the 38th
  International Conference on Machine Learning}, volume 139 of
  \emph{Proceedings of Machine Learning Research}, pp.\  9870--9879. PMLR,
  2021.

\bibitem[Tassa et~al.(2018)Tassa, Doron, Muldal, Erez, Li, de~Las~Casas,
  Budden, Abdolmaleki, Merel, Lefrancq, Lillicrap, and
  Riedmiller]{Tassa2018-hp}
Yuval Tassa, Yotam Doron, Alistair Muldal, Tom Erez, Yazhe Li, Diego
  de~Las~Casas, David Budden, Abbas Abdolmaleki, Josh Merel, Andrew Lefrancq,
  Timothy Lillicrap, and Martin Riedmiller.
\newblock {DeepMind} control suite.
\newblock January 2018.

\bibitem[Tay et~al.(2021)Tay, Dehghani, Abnar, Shen, Bahri, Pham, Rao, Yang,
  Ruder, and Metzler]{Tay2021-bn}
Yi~Tay, M~Dehghani, Samira Abnar, Yikang Shen, Dara Bahri, Philip Pham, J~Rao,
  Liu Yang, Sebastian Ruder, and Donald Metzler.
\newblock Long range arena: A benchmark for efficient transformers.
\newblock \emph{ICLR}, 2021.

\bibitem[Taylor \& Stone(2009)Taylor and Stone]{Taylor2009-ou}
Matthew~E. Taylor and Peter Stone.
\newblock Transfer learning for reinforcement learning domains: A survey.
\newblock \emph{Journal of Machine Learning Research}, 10\penalty0
  (56):\penalty0 1633--1685, 2009.
\newblock URL \url{http://jmlr.org/papers/v10/taylor09a.html}.

\bibitem[Ullman(1984)]{Ullman1984-gz}
S~Ullman.
\newblock Visual routines.
\newblock \emph{Cognition}, 18\penalty0 (1-3):\penalty0 97--159, December 1984.

\bibitem[Vaishnav et~al.(2022)Vaishnav, Cadene, Alamia, Linsley, VanRullen, and
  Serre]{Vaishnav2022-hy}
Mohit Vaishnav, Remi Cadene, Andrea Alamia, Drew Linsley, Rufin VanRullen, and
  Thomas Serre.
\newblock Understanding the computational demands underlying visual reasoning.
\newblock \emph{Neural Comput.}, 34\penalty0 (5):\penalty0 1075--1099, April
  2022.

\bibitem[Vinyals et~al.(2019)Vinyals, Babuschkin, Czarnecki, Mathieu, Dudzik,
  Chung, Choi, Powell, Ewalds, Georgiev, Oh, Horgan, Kroiss, Danihelka, Huang,
  Sifre, Cai, Agapiou, Jaderberg, Vezhnevets, Leblond, Pohlen, Dalibard,
  Budden, Sulsky, Molloy, Paine, Gulcehre, Wang, Pfaff, Wu, Ring, Yogatama,
  W{\"u}nsch, McKinney, Smith, Schaul, Lillicrap, Kavukcuoglu, Hassabis, Apps,
  and Silver]{Vinyals2019-dh}
Oriol Vinyals, Igor Babuschkin, Wojciech~M Czarnecki, Micha{\"e}l Mathieu,
  Andrew Dudzik, Junyoung Chung, David~H Choi, Richard Powell, Timo Ewalds,
  Petko Georgiev, Junhyuk Oh, Dan Horgan, Manuel Kroiss, Ivo Danihelka, Aja
  Huang, Laurent Sifre, Trevor Cai, John~P Agapiou, Max Jaderberg, Alexander~S
  Vezhnevets, R{\'e}mi Leblond, Tobias Pohlen, Valentin Dalibard, David Budden,
  Yury Sulsky, James Molloy, Tom~L Paine, Caglar Gulcehre, Ziyu Wang, Tobias
  Pfaff, Yuhuai Wu, Roman Ring, Dani Yogatama, Dario W{\"u}nsch, Katrina
  McKinney, Oliver Smith, Tom Schaul, Timothy Lillicrap, Koray Kavukcuoglu,
  Demis Hassabis, Chris Apps, and David Silver.
\newblock Grandmaster level in {StarCraft} {II} using multi-agent reinforcement
  learning.
\newblock \emph{Nature}, 575\penalty0 (7782):\penalty0 350--354, November 2019.

\bibitem[Warren et~al.(2001)Warren, Kay, Zosh, Duchon, and
  Sahuc]{Warren2001-fd}
W~H Warren, Jr, B~A Kay, W~D Zosh, A~P Duchon, and S~Sahuc.
\newblock Optic flow is used to control human walking.
\newblock \emph{Nat. Neurosci.}, 4\penalty0 (2):\penalty0 213--216, February
  2001.

\bibitem[Warren(2021)]{Warren2021-ly}
William~H Warren.
\newblock Information is where you find it: Perception as an ecologically
  {Well-Posed} problem.
\newblock \emph{Iperception}, 12\penalty0 (2):\penalty0 20416695211000366,
  March 2021.

\bibitem[Watanabe et~al.(2001)Watanabe, N{\'a}{\~n}ez, and
  Sasaki]{Watanabe2001-sg}
T~Watanabe, J~E N{\'a}{\~n}ez, and Y~Sasaki.
\newblock Perceptual learning without perception.
\newblock \emph{Nature}, 413\penalty0 (6858):\penalty0 844--848, October 2001.

\bibitem[Wiesel \& Hubel(1963)Wiesel and Hubel]{Wiesel1963-dt}
T~N Wiesel and D~H Hubel.
\newblock Single-cell responses in striate cortex of kittens deprived of vision
  in one eye.
\newblock \emph{J. Neurophysiol.}, 26:\penalty0 1003--1017, November 1963.

\bibitem[Wiewiora(2010)]{Wiewiora2010-ty}
Eric Wiewiora.
\newblock Reward shaping.
\newblock In Claude Sammut and Geoffrey~I Webb (eds.), \emph{Encyclopedia of
  Machine Learning}, pp.\  863--865. Springer US, Boston, MA, 2010.

\bibitem[Xiao et~al.()Xiao, Radosavovic, Darrell, and Malik]{Xiao_undated-ax}
Tete Xiao, Ilija Radosavovic, Trevor Darrell, and Jitendra Malik.
\newblock Masked visual pre-training for motor control.
\newblock \emph{arXiv:2203. 06173}.

\bibitem[Xu et~al.(2018)Xu, van Hasselt, and Silver]{Xu2018-cs}
Zhongwen Xu, Hado~P van Hasselt, and David Silver.
\newblock {Meta-Gradient} reinforcement learning.
\newblock In S~Bengio, H~Wallach, H~Larochelle, K~Grauman, N~Cesa-Bianchi, and
  R~Garnett (eds.), \emph{Advances in Neural Information Processing Systems},
  volume~31. Curran Associates, Inc., 2018.

\bibitem[Xu et~al.(2020)Xu, van Hasselt, Hessel, Oh, Singh, and
  Silver]{Xu2020-ni}
Zhongwen Xu, Hado van Hasselt, Matteo Hessel, Junhyuk Oh, Satinder Singh, and
  David Silver.
\newblock Meta-gradient reinforcement learning with an objective discovered
  online.
\newblock In \emph{Proceedings of the 34th International Conference on Neural
  Information Processing Systems}, number Article 1279 in NIPS'20, pp.\
  15254--15264, Red Hook, NY, USA, December 2020. Curran Associates Inc.

\bibitem[Zhang et~al.(2020)Zhang, McAllister, Calandra, Gal, and
  Levine]{Zhang2020-bn}
Amy Zhang, Rowan McAllister, Roberto Calandra, Yarin Gal, and Sergey Levine.
\newblock Learning invariant representations for reinforcement learning without
  reconstruction.
\newblock June 2020.

\end{thebibliography}
\bibliographystyle{iclr2023_conference}

\appendix
\section{Appendix}

\subsection{Network architecture}
Following~\cite{Cobbe2020-zi}, we used the convolutional actor-critic IMPALA architecture described in~\cite{Espeholt2018-cw}. This is a deeper network than what has been used previously, consisting of 15 convolutional layers, 16 residual blocks, and 1.6M parameters. We refer the reader to~\citep{Espeholt2018-cw} for details. For raw pixel, semantic segmentation, and figure-ground segmentation, the number of input channels was fixed to three. This controls the model architecture to be identical regardless of input. For semantic or figure-ground masks, we replicated the masked input three times and fed that into each of the three convolutional input channels. Our self-supervised visual model outputs 16 channel perceptual masks (at the same spatial resolution of the inputs) which are concatenated with the RGB values and fed into the actor-critic network for policy learning.

\subsection{Reward normalization}
\label{reward_norm}
Given that each task involves varying reward magnitudes, episode rewards $r$ were normalized to facilitate comparison across tasks. Normalized rewards $r'$ were computed as $\frac{r - R_{min}}{R_{max} - R_{min}}$, where $R_{max}$ is the theoretical maximum score attainable in an episode while $R_{min}$ is the average score a random agent would attain.

\subsection{Stochastic reward feedback}
\label{Stochastic_reward}

Here, we prove that the expected return under policy $\pi$ does not change under our manipulation of reward probability.  We proceed in two parts. First, we show that the expected reward received from a rewarding event remains unchanged. Then we show that the action-value function $Q^\pi(s,a)$ for policy $\pi$ remains unchanged. By implication, this means the optimal policy does not change either.

Let $R(s,a)$ denote the original reward function at state $s$ and action $a$. In our manipulation, rewards are generated stochastically with probability $p(s,a)$, but reward magnitudes are rescaled by $\frac{1}{p(s,a)}$.

\begin{definition}[Modified reward function]
The modified reward function $\tilde{R}(s,a)$ is given by
$$
\tilde{R}(s,a) \triangleq \frac{R(s,a)}{p(s,a)}\, \eta(s,a),
$$
where $\eta(s,a) \sim Bern[p(s,a)]$.
\end{definition}

\begin{lemma}
The expected value of $\tilde{R}(s,a)$ is $R(s,a)$.
\label{Lemma1}
\end{lemma}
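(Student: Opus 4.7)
The plan is to compute $\E[\tilde{R}(s,a)]$ directly from the definition, treating $R(s,a)$ and $p(s,a)$ as deterministic quantities (they are functions of the state-action pair, and the only randomness in $\tilde{R}(s,a)$ comes from the Bernoulli variable $\eta(s,a)$). By linearity of expectation, I can pull the constant factor $R(s,a)/p(s,a)$ outside the expectation, leaving $\E[\eta(s,a)]$ to evaluate.

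The second step is simply to invoke the mean of a Bernoulli random variable: since $\eta(s,a)\sim \mathrm{Bern}[p(s,a)]$, we have $\E[\eta(s,a)] = p(s,a)$. Substituting this back gives
\[
\E[\tilde{R}(s,a)] \;=\; \frac{R(s,a)}{p(s,a)}\cdot p(s,a) \;=\; R(s,a),
\]
provided $p(s,a) > 0$ (which is implicit in the definition, since otherwise the rescaling $1/p(s,a)$ is undefined; for a rewarding event, $p(s,a)$ is the perturbation parameter in $\{1,0.75,0.50,0.25\}$, so this is never an issue in the experiments).

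There is essentially no obstacle here: the statement is an immediate one-line consequence of linearity of expectation and the definition of a Bernoulli mean. The only thing worth flagging is the implicit assumption $p(s,a)>0$, which I would briefly note. This lemma is the atomic building block for the subsequent (presumably inductive) argument that the action-value function $Q^\pi(s,a)$ — and therefore the optimal policy — is unchanged under the stochastic reward manipulation; the real work of the paper's overall claim lies in that follow-up step, not in this lemma.
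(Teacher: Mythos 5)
Your proof is correct and follows exactly the same route as the paper's: pull the deterministic factor $R(s,a)/p(s,a)$ out of the expectation by linearity and use $\mathbb{E}_\eta[\eta(s,a)] = p(s,a)$. Your added remark about the implicit assumption $p(s,a) > 0$ is a reasonable (if minor) point of care that the paper does not state explicitly.
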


\begin{proof}
\begin{align*}
\mathbb{E}_\eta\left[\tilde{R}(s,a)\right] &\triangleq \mathbb{E}_\eta\left[ \frac{R(s,a)}{p(s,a)}\, \eta(s,a)\right]\\
&= \frac{R(s,a)}{p(s,a)}\, \mathbb{E}_\eta\left[\eta(s,a)\right]\\
&= R(s,a),
\end{align*}
since $\mathbb{E}_\eta\left[\eta(s,a)\right] = p(s,a)$.
\end{proof}

Using this lemma, we now prove our main result. Let $Q^\pi(s,a)$ denote the action-value function induced by the original reward function $R(s,a)$ while following policy $\pi$, and let $\tilde{Q}^\pi(s,a)$ denote the action-value function induced by $\tilde{R}(s,a)$ while following the same policy.
\begin{theorem}
The action-value function $\tilde{Q}^\pi(s,a)$ is given by $Q^\pi(s,a)$.
\end{theorem}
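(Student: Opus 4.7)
The plan is to reduce the theorem to the existing Lemma~\ref{Lemma1} by exploiting the linearity of expectation together with the fact that the per-step Bernoulli noise $\eta(s,a)$ is drawn independently of everything else, conditional on the visited state-action pair. The cleanest route is to write $\tilde{Q}^\pi$ directly as an expected discounted return and show the rescaled reward contributes exactly $R(s,a)$ in expectation at every time step.

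First I would expand the definition
\[
\tilde{Q}^\pi(s,a) = \mathbb{E}_\pi\!\left[\sum_{t=0}^{\infty} \gamma^t\, \tilde{R}(s_t,a_t)\;\Big|\;s_0=s,\,a_0=a\right],
\]
where the outer expectation is over the trajectory randomness induced by the transition kernel $P(\cdot\mid s_t,a_t)$, the policy $\pi(\cdot\mid s_t)$, and the fresh Bernoulli draws $\eta_t \triangleq \eta(s_t,a_t)$. Crucially, the trajectory $(s_0,a_0,s_1,a_1,\dots)$ is generated by $P$ and $\pi$ alone and does not consult $\eta_t$, so each $\eta_t$ is conditionally independent of the trajectory given $(s_t,a_t)$. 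Assuming the standard regularity conditions (either $\gamma<1$ with bounded $R$, or an episodic setting with finite expected horizon) Fubini applies and I may swap the sum with the expectation.

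Second, I would apply the tower property, conditioning on $(s_t,a_t)$, and invoke Lemma~\ref{Lemma1} at each time step:
\[
\mathbb{E}\!\left[\tilde{R}(s_t,a_t)\right]
= \mathbb{E}_{s_t,a_t}\!\left[\mathbb{E}_{\eta_t}\!\left[\tilde{R}(s_t,a_t)\,\big|\,s_t,a_t\right]\right]
= \mathbb{E}_{s_t,a_t}\!\left[R(s_t,a_t)\right].
\]
Substituting back and using linearity yields
\[
\tilde{Q}^\pi(s,a) = \sum_{t=0}^{\infty} \gamma^t\, \mathbb{E}_\pi\!\left[R(s_t,a_t)\mid s_0=s,\,a_0=a\right] = Q^\pi(s,a),
\]
which is the desired identity. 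An equivalent, perhaps slicker, finish is to note that Lemma~\ref{Lemma1} implies $\tilde{Q}^\pi$ satisfies the same Bellman consistency equation $\tilde{Q}^\pi(s,a) = R(s,a) + \gamma\,\mathbb{E}_{s',a'}[\tilde{Q}^\pi(s',a')]$ as $Q^\pi$; since this contraction has a unique fixed point, the two must coincide.

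The only real subtlety, and the main obstacle to watch for, is the independence claim: one must make sure the Bernoulli variables $\{\eta_t\}$ are drawn fresh at each visit and are not part of the observable signal that $\pi$ conditions on. If $\pi$ were allowed to depend on past realized rewards (rather than just states), the swap of expectations would no longer be valid, because the trajectory distribution would then depend on the $\eta_t$'s. Once this assumption is made explicit (which is standard for Markov policies on MDPs), everything else is bookkeeping with linearity of expectation and Lemma~\ref{Lemma1}. As an immediate corollary, $\arg\max_a \tilde{Q}^\pi(s,a) = \arg\max_a Q^\pi(s,a)$, so the optimal policy is preserved, which is exactly the invariance claim required for the LCD reinforcement learning perturbation to isolate the learning-signal challenge without altering the underlying task.
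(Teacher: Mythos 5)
Your proof is correct and follows essentially the same route as the paper's: expand $\tilde{Q}^\pi$ as a discounted sum, apply the law of total expectation to isolate the per-step $\eta$-expectation, and invoke Lemma~\ref{Lemma1} termwise. Your added remarks on Fubini, the independence of the Bernoulli draws from the trajectory, and the alternative Bellman fixed-point finish are careful refinements but do not change the argument.
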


\begin{proof}
As is common practice in RL, we shall discount future rewards by $\gamma \in [0,1]$ on each future time step. Before beginning, we shall introduce some notation. We shall denote the current time step by $t$. We shall denote the state and action at any time $\tau$ by $s_\tau$ and $a_\tau$. Finally, by an abuse of notation, we shall use $\mathbb{E}_{\pi}[\cdot]$ to denote expectations over state-action trajectories generated by both the policy $\pi$ and the state-transition probability $p(s^\prime \vert s,a)$, where $s^\prime$ is the successor state.

We now proceed to the proof.
\begin{align*}
\tilde{Q}^\pi(s,a) &\triangleq \mathbb{E}_{\pi,\eta}\left[\left. \sum_{k=0}^\infty \gamma^k \tilde{R}(s_{t+k+1}, a_{t+k+1}) \right\vert s_t=s, a_t = a \right]\\
&= \mathbb{E}_{\pi}\left[\left. \sum_{k=0}^\infty \gamma^k \, \mathbb{E}_\eta \left[ \left. \tilde{R}(\tilde{s}, \tilde{a}) \right\vert \tilde{s} = s_{t+k+1}, \tilde{a} = a_{t+k+1} \right] \right\vert s_t=s, a_t = a \right] \ \textrm{(law of total expectation)}\\
&= \mathbb{E}_{\pi}\left[\left. \sum_{k=0}^\infty \gamma^k R(s_{t+k+1}, a_{t+k+1}) \right\vert s_t=s, a_t = a \right] \ \textrm{(Lemma \ref{Lemma1})}\\
&= Q^\pi(s,a)
\end{align*}
\end{proof}

Note that in our experimental manipulations, we used a single reward probability $p$ that was independent of $s$ and $a$.

\subsection{Performance change with increasing credit assignment difficulty}
\label{TCA_metric}

To quantify the average change with increasing credit assignment difficulty, we first computed the reward curve AUC $AUC(p)$ for $p={0.25, 0.5, 0.75, 1.0}$. We next computed the absolute change in AUC between successive values of $p$; that is, $|AUC(p=1.0) - AUC(p=0.75)|$, $|AUC(p=0.75) - AUC(p=0.5)|$, and $|AUC(p=0.5) - AUC(p=0.25)|$. Finally, we averaged over these measures to get our final measure of average change.

\clearpage
\subsection{Supplementary Figures}

\begin{figure}[h]
    \includegraphics[width=\linewidth]{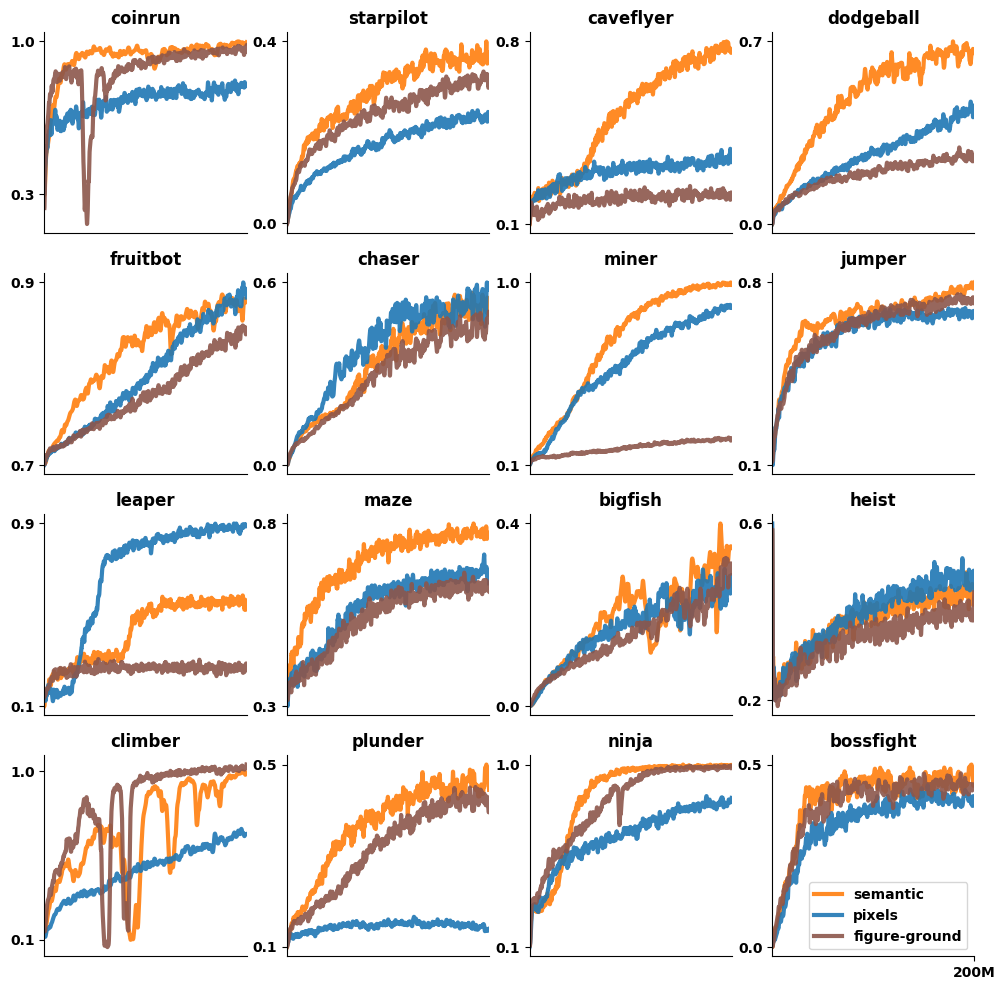}
    \caption{\textbf{Normalized reward curves while systematically varying perceptual complexity.} X-axis denotes the number of ``interactions" an agent performs with its environment during training and Y-axis denotes normalized rewards. We trained all these agents for a total of 200M steps.}
    \label{fig:xv_train}
\end{figure}

\begin{figure}[h]
    \includegraphics[width=\linewidth]{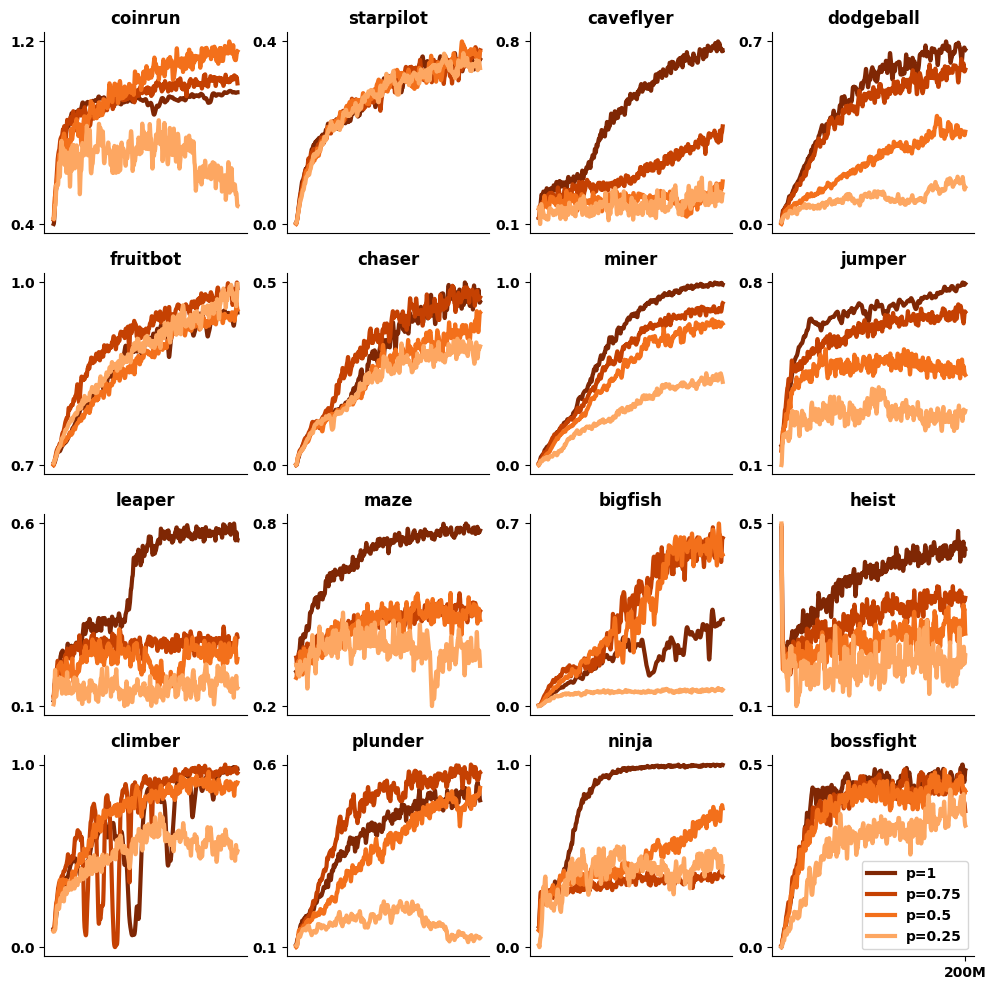}
    \caption{\textbf{Normalized reward curves while systematically varying reward stochasticity.} X-axis denotes the number of ``interactions" an agent performs with its environment during training and Y-axis denotes normalized rewards. We trained all these agents for a total of 200M steps. The perceptual input to these agents were semantic representations. While performing evaluations, the reward mechanisms were set to the original configuration thus effectively only testing the impact of these perturbations on policy learning.}
    \label{fig:xt_train}
\end{figure}

\begin{figure}[h]
    \includegraphics[width=\linewidth]{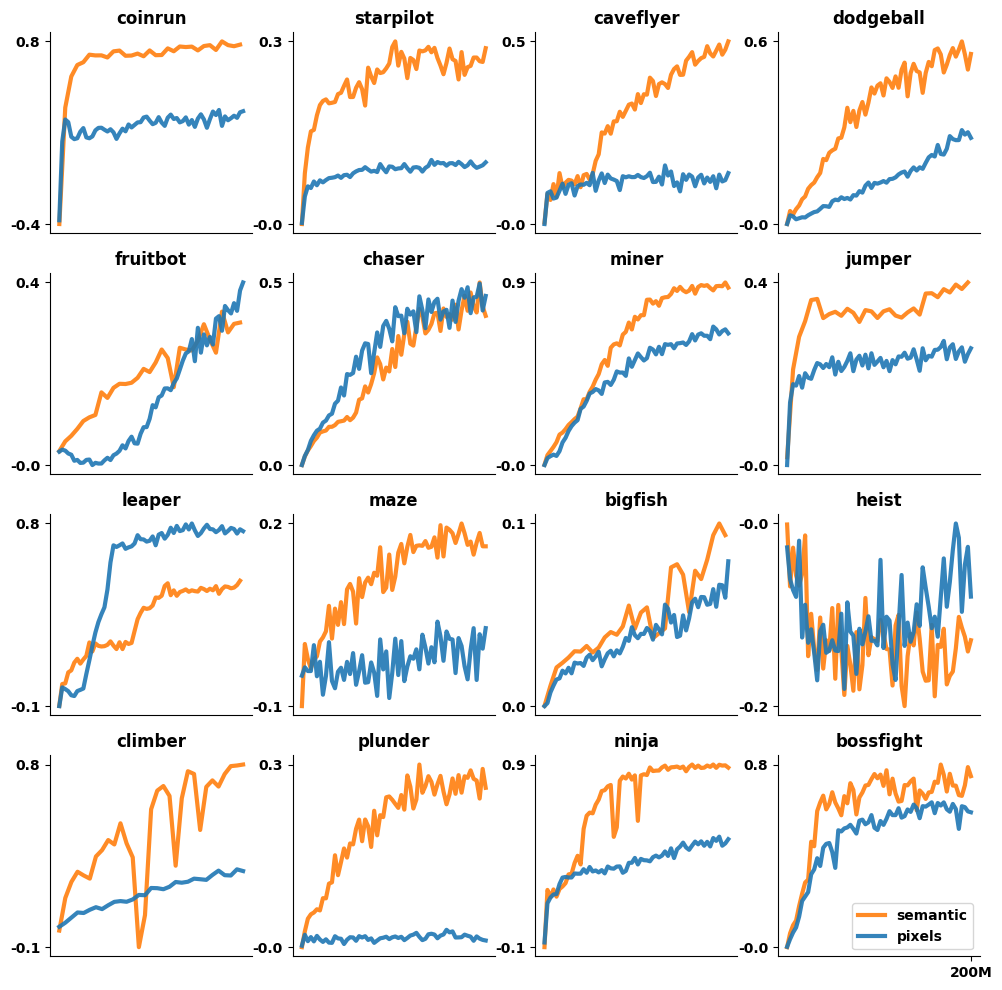}
    \caption{\textbf{Normalized reward curves, on \emph{i.i.d. generalization} levels, while systematically varying perceptual complexity.} X-axis denotes the number of ``interactions" an agent performs with its environment during training and Y-axis denotes normalized rewards. We trained all these agents for a total of 200M steps. Agents were evaluated approximately every $60K$ steps.}
    \label{fig:xv_gen}
\end{figure}

\begin{figure}[h]
    \includegraphics[width=\linewidth]{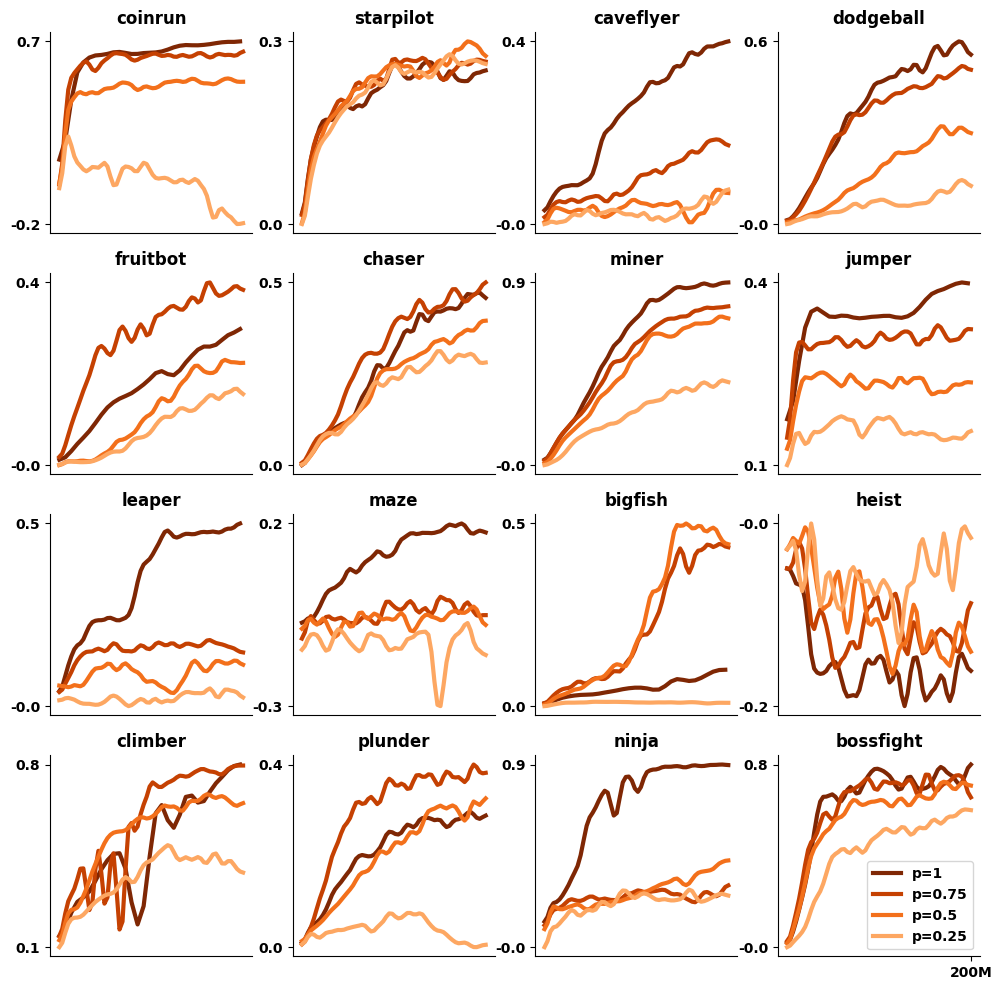}
    \caption{
    \textbf{Normalized reward curves, on \emph{i.i.d. generalization} levels, while systematically varying reward stochasticity.} X-axis denotes the number of ``interactions" an agent performs with its environment during training and Y-axis denotes normalized rewards. We trained all these agents for a total of 200M steps. The perceptual input to these agents were semantic representations. Agents were evaluated approximately every $60K$ steps. While performing evaluations, the reward mechanisms were set to the original configuration thus effectively only testing the impact of these perturbations on policy learning.}
    \label{fig:xt_gen}
\end{figure}

\begin{figure}[h]
    \includegraphics[width=\linewidth]{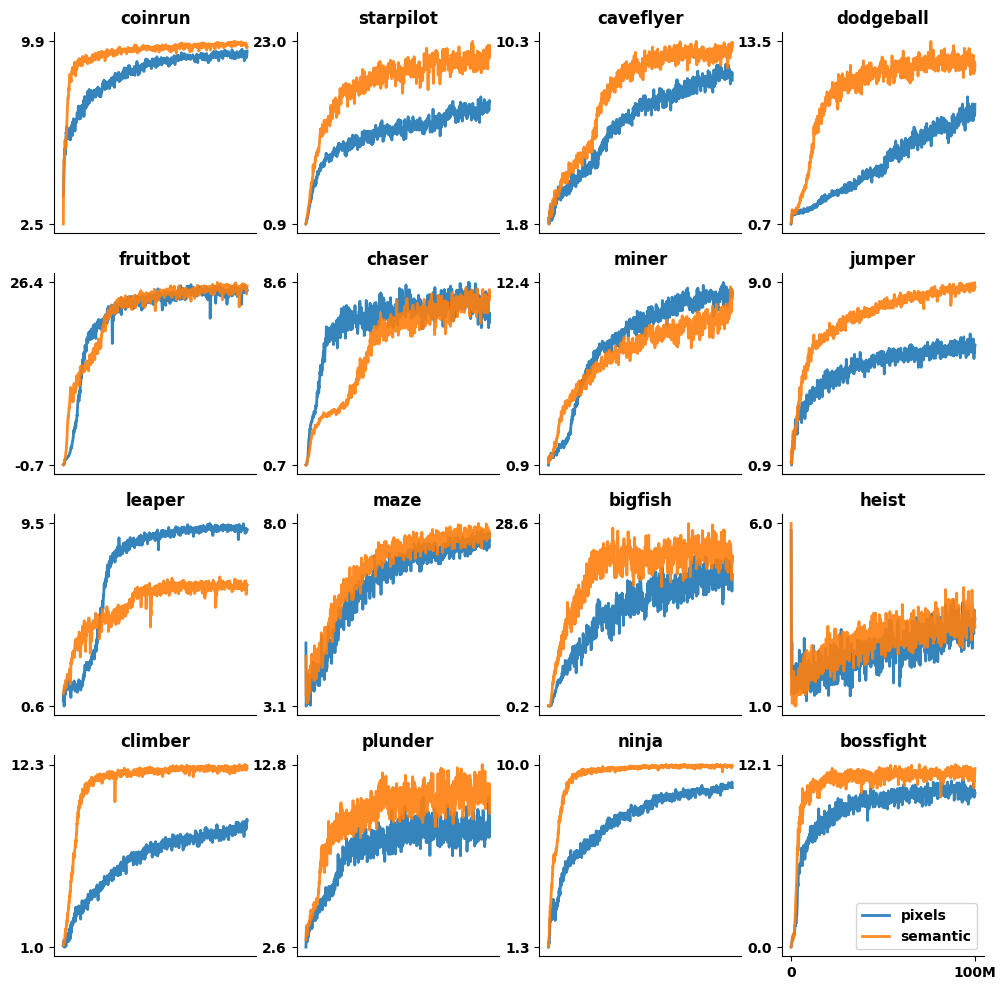}
    \caption{\textbf{Training dRL agents with varying levels of perceptual complexity using the proximal policy gradient algorithm.} X-axis denotes the number of ``interactions" an agent performs with its environment during training and Y-axis denotes rewards on training levels. We trained these agents for a total of 100M steps.}
    \label{fig:xv_ppg}
\end{figure}

\begin{figure}[h]
    \includegraphics[width=\linewidth]{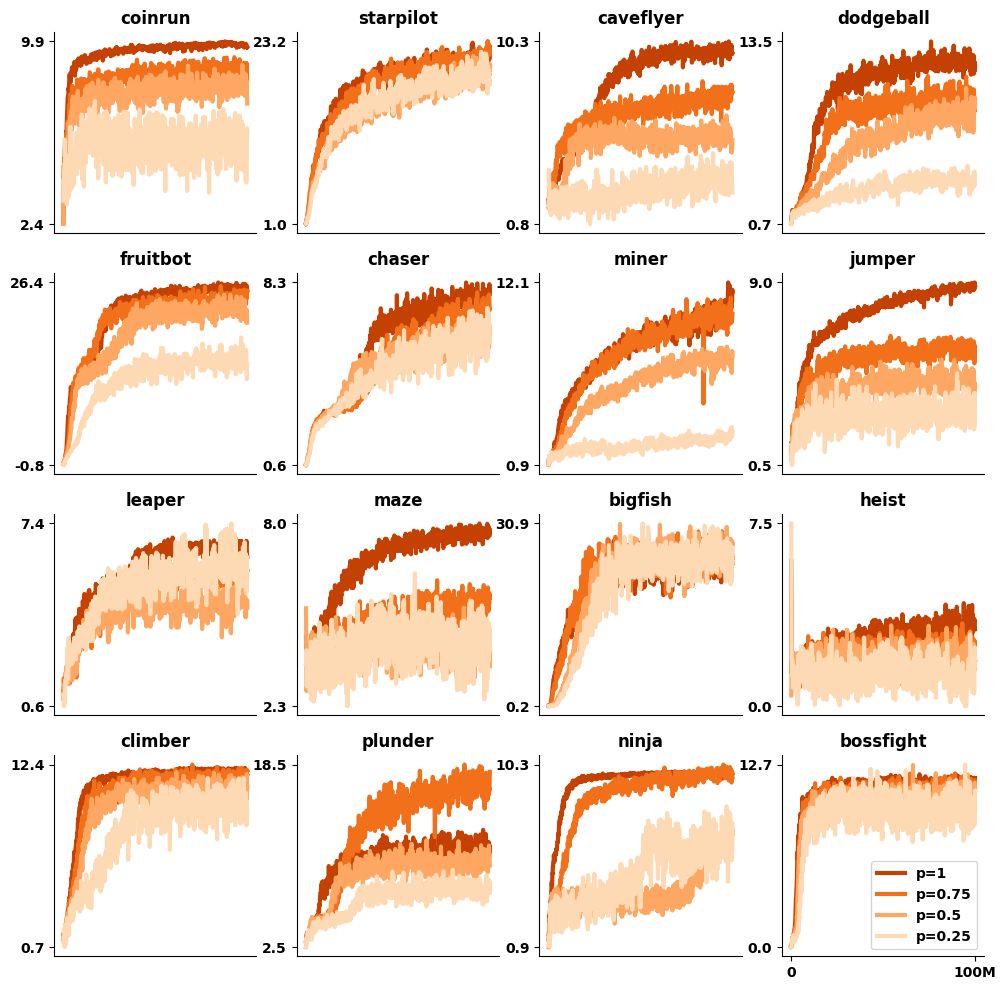}
    \caption{\textbf{Training dRL agents with varying levels of stochastic feedback using the proximal policy gradient algorithm.} X-axis denotes the number of ``interactions" an agent performs with its environment during training and Y-axis denotes rewards on training levels. We trained these agents for a total of 100M steps. The perceptual inputs to these agents were semantic representations. While performing evaluations, the reward mechanisms were set to the original configuration thus effectively only testing the impact of these perturbations on policy learning.}
    \label{fig:xt_ppg}
\end{figure}

\begin{figure}[t!]
    \includegraphics[width=\linewidth]{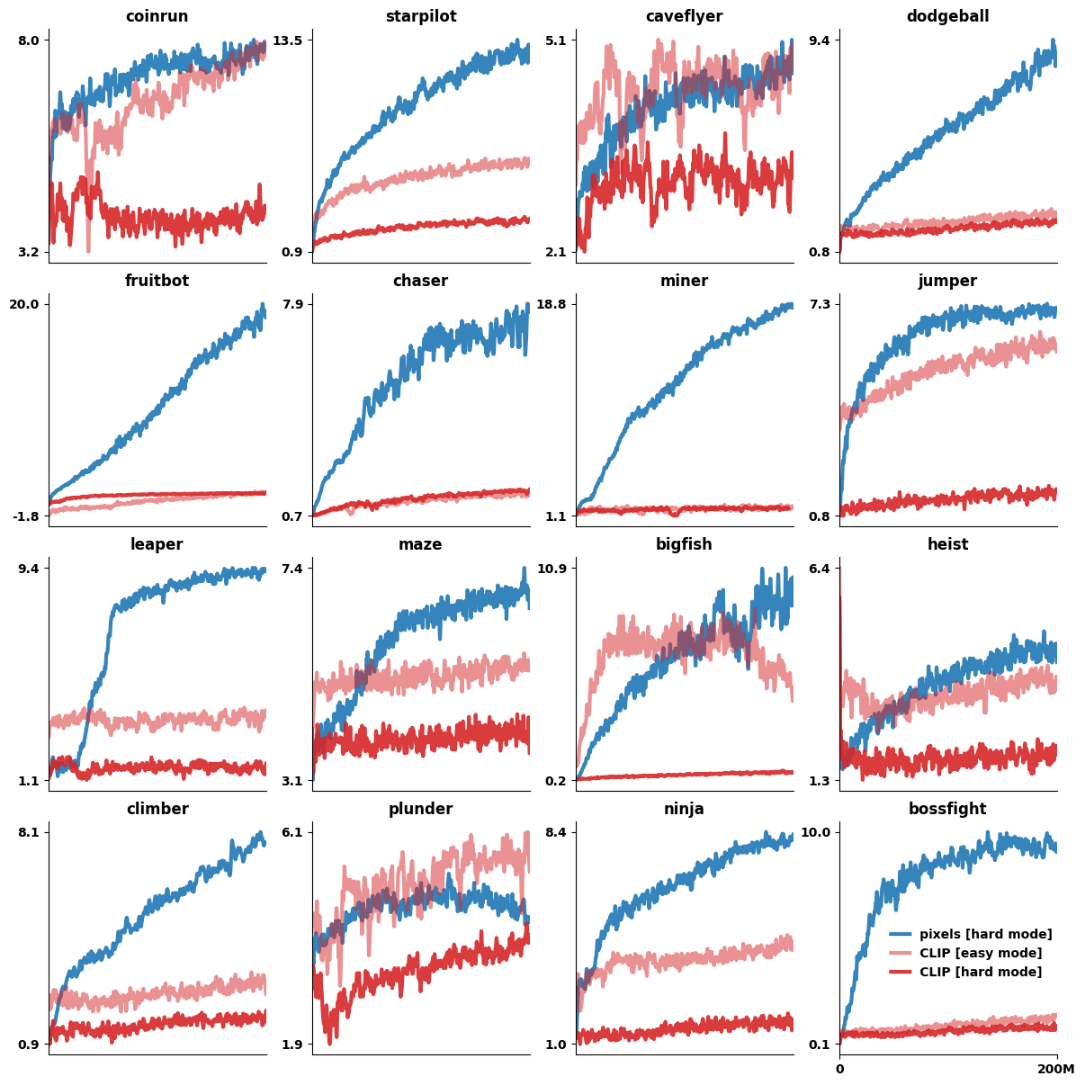}
    \caption{\textbf{Policy learning experiments in dRL agents operating on visual representations learned from natural language supervision.} We test the extent to which generalist visual representations support policy learning. We find that while CLIP~\cite{Radford2021-km}, a transformer-based architecture, is state-of-the-art on zero-shot image classification, its representations do not support policy learning adeptly. We train agents on both the ``easy" and ``hard" versions of \textit{Procgen}. While our agent was able to learn certain tasks in the ``easy" mode, they predominantly struggled in the ``hard" mode. X-axis denotes the number of ``interactions" an agent performs with its environment during training and Y-axis denotes rewards. We trained these agents for a total of 200M steps.}
    \label{fig:clip}
\end{figure}

\clearpage
\begin{figure}[h]
    \centering
    \includegraphics[width=0.75\linewidth]{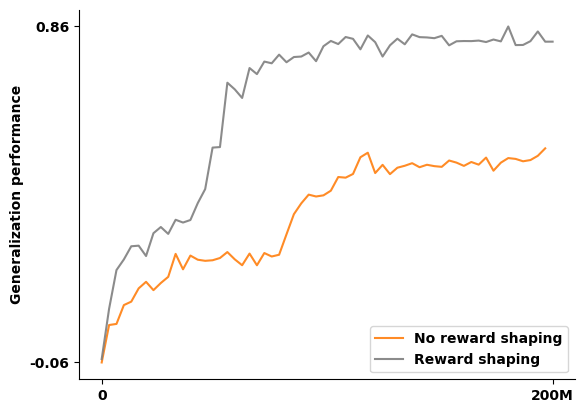}
    \caption{\textbf{The effect of ``reward shaping" on Leaper -- a high $\psi$ task as predicted by our taxonomy.} In the traditionally challenging reinforcement learning problem we find that encouraging agents to discover sub-goals significantly improved both the sample efficiency and generalization capacity ($t = 41.03, p < .001$). X-axis denotes the number of ``interactions" an agent performs with its environment during training and Y-axis denotes normalized rewards. We trained both these agents on semantic representations for a total of 200M steps. While performing evaluations, we revert to the original reward scheme to remain compatible for comparing to the ``naive" model.}
    \label{fig:leaper_reward_shaping}
\end{figure}

\clearpage
\subsection{Specifics of our perceptual parameterizations of the \textit{Procgen} benchmark}
\label{appendix: mask_details}
%

\begin{table}[ht]
\begin{adjustwidth}{-.5cm}{}
\begin{center}
\begin{tabular}{|c|c|c|}
\specialrule{\cmidrulewidth}{0pt}{0pt}
Image & Game & Mask Details \\ 
\hline

\begin{tabular}{l}
\raisebox{-1.8cm}{\includegraphics[scale=0.1]{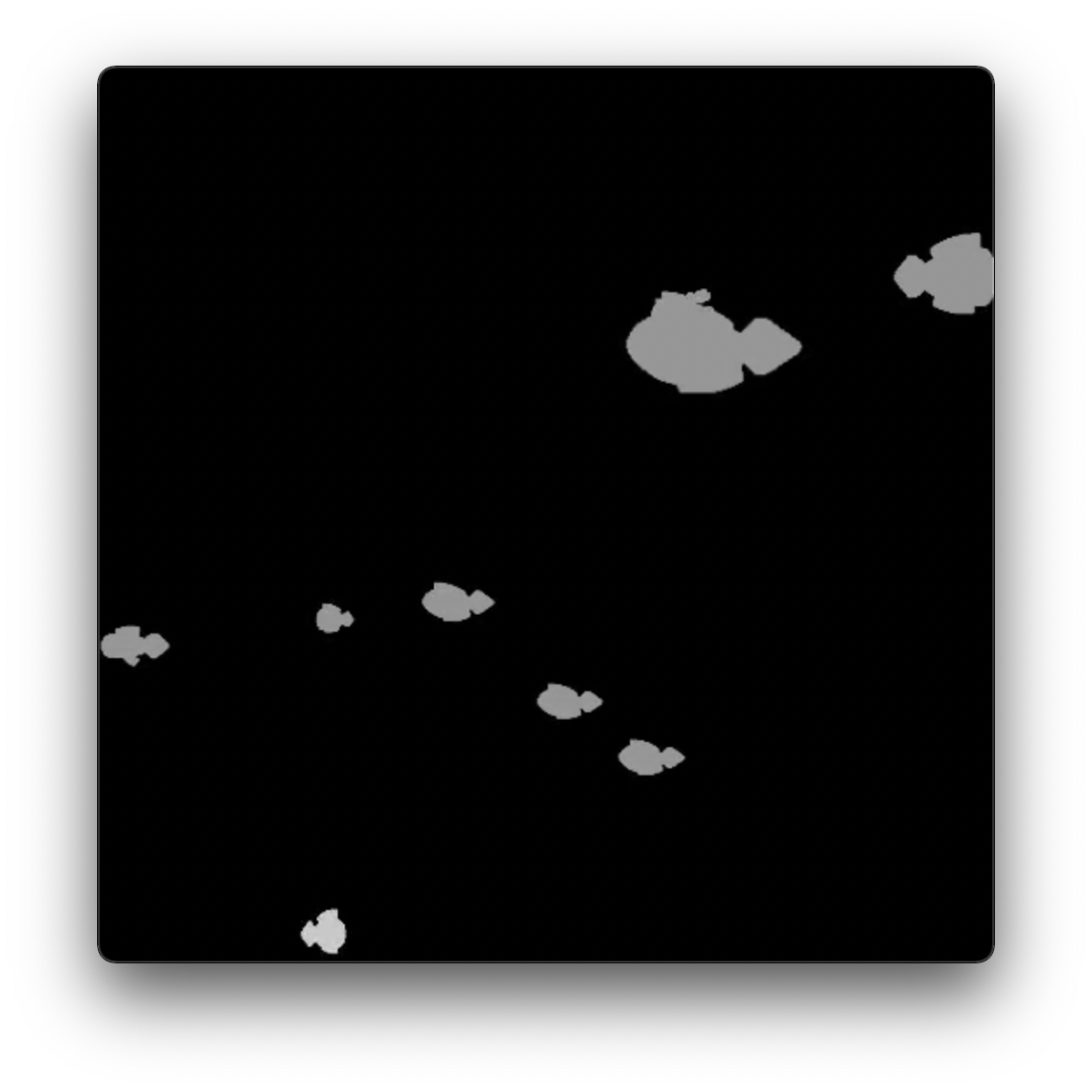}}
\end{tabular}
& 
\begin{tabular}{l}
\centering\texttt{bigfish}
\end{tabular}
& 
\begin{tabular}{l}
\pbox{10.5cm}{There are two entity IDs: (1) the player, and (2) all other fish.}
\end{tabular}
\\ \hline

\begin{tabular}{l}
\raisebox{-1.8cm}{\includegraphics[scale=0.1]{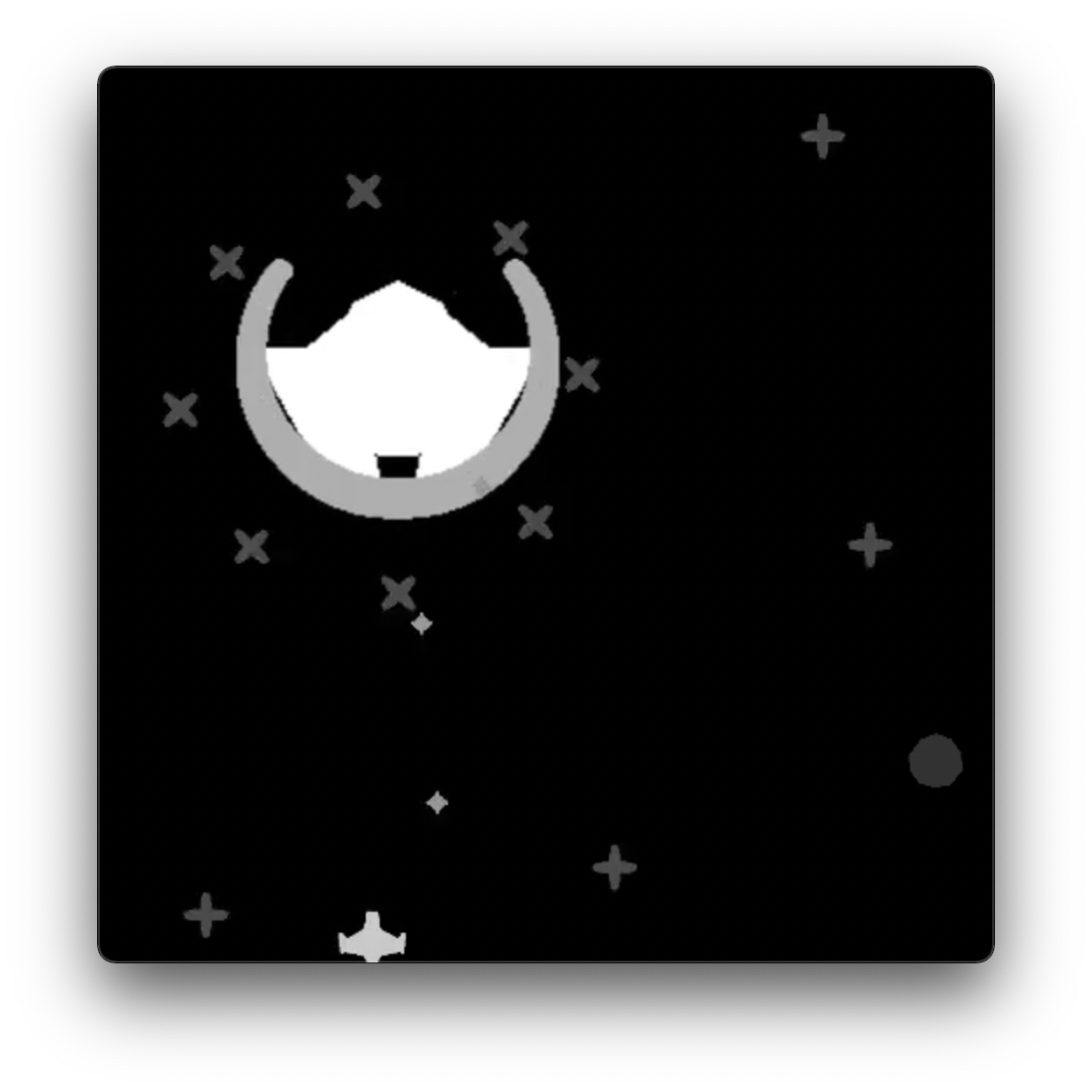}}
\end{tabular}
& 
\begin{tabular}{l}
\centering\texttt{bossfight}
\end{tabular}
& 
\begin{tabular}{l}
\pbox{10.5cm}{There are seven entity IDs: (1) the player, (2) player projectiles, (3) the enemy, (4) enemy projectiles, (5) the enemy's shield, (6) meteors, and (7) explosions.}
\end{tabular}
\\ \hline

\begin{tabular}{l}
\raisebox{-1.8cm}{\includegraphics[scale=0.1]{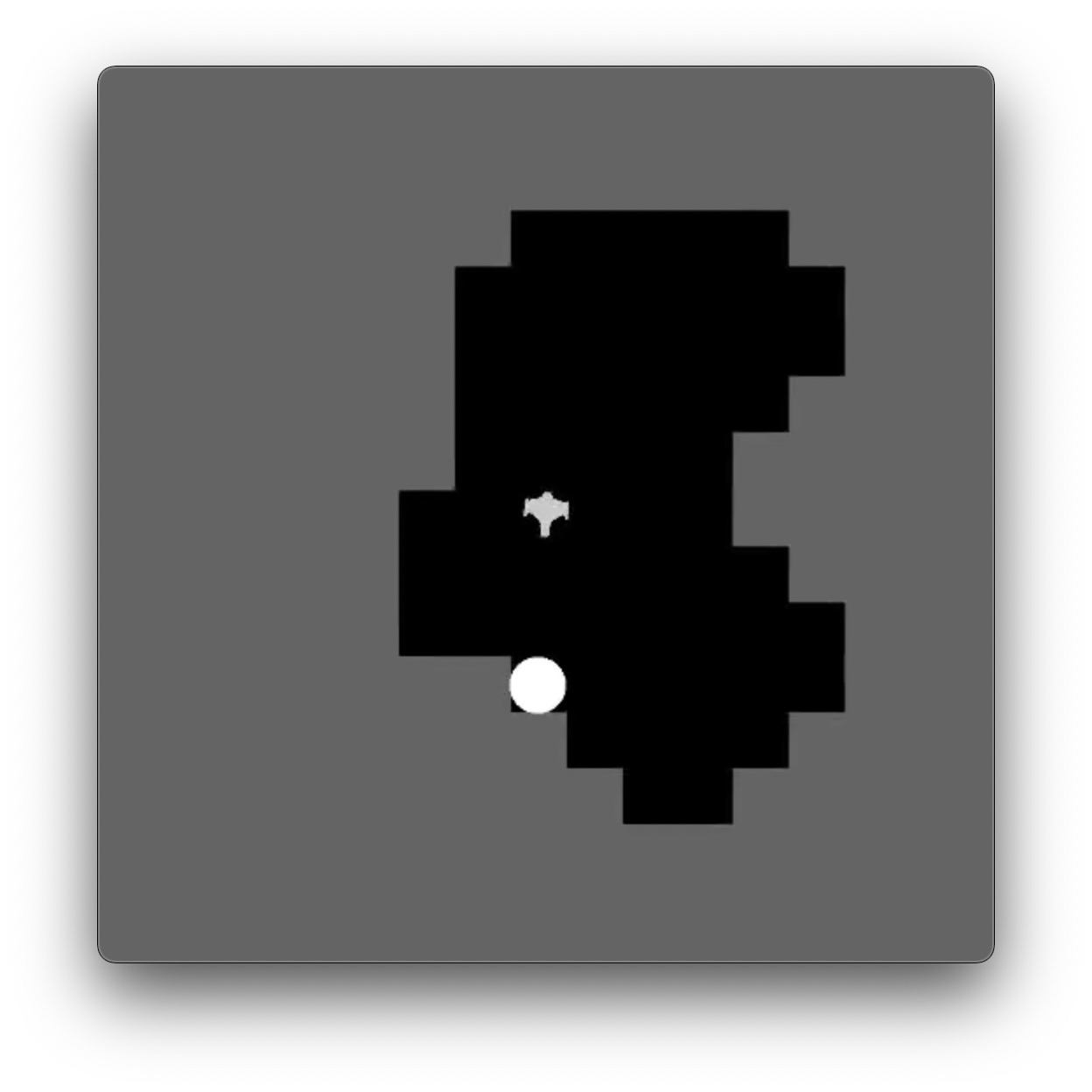}}
\end{tabular}
& 
\begin{tabular}{l}
\centering\texttt{caveflyer}
\end{tabular}
& 
\begin{tabular}{l}
\pbox{10.5cm}{There are eight entity IDs: (1) the player, (2) enemy ships, (3) player projectiles, (4) targets, (5) the goal landing pad, (6) meteors, (7) explosions, and (8) the surrounding terrain.}
\end{tabular}
\\ \hline

\begin{tabular}{l}
\raisebox{-1.8cm}{\includegraphics[scale=0.1]{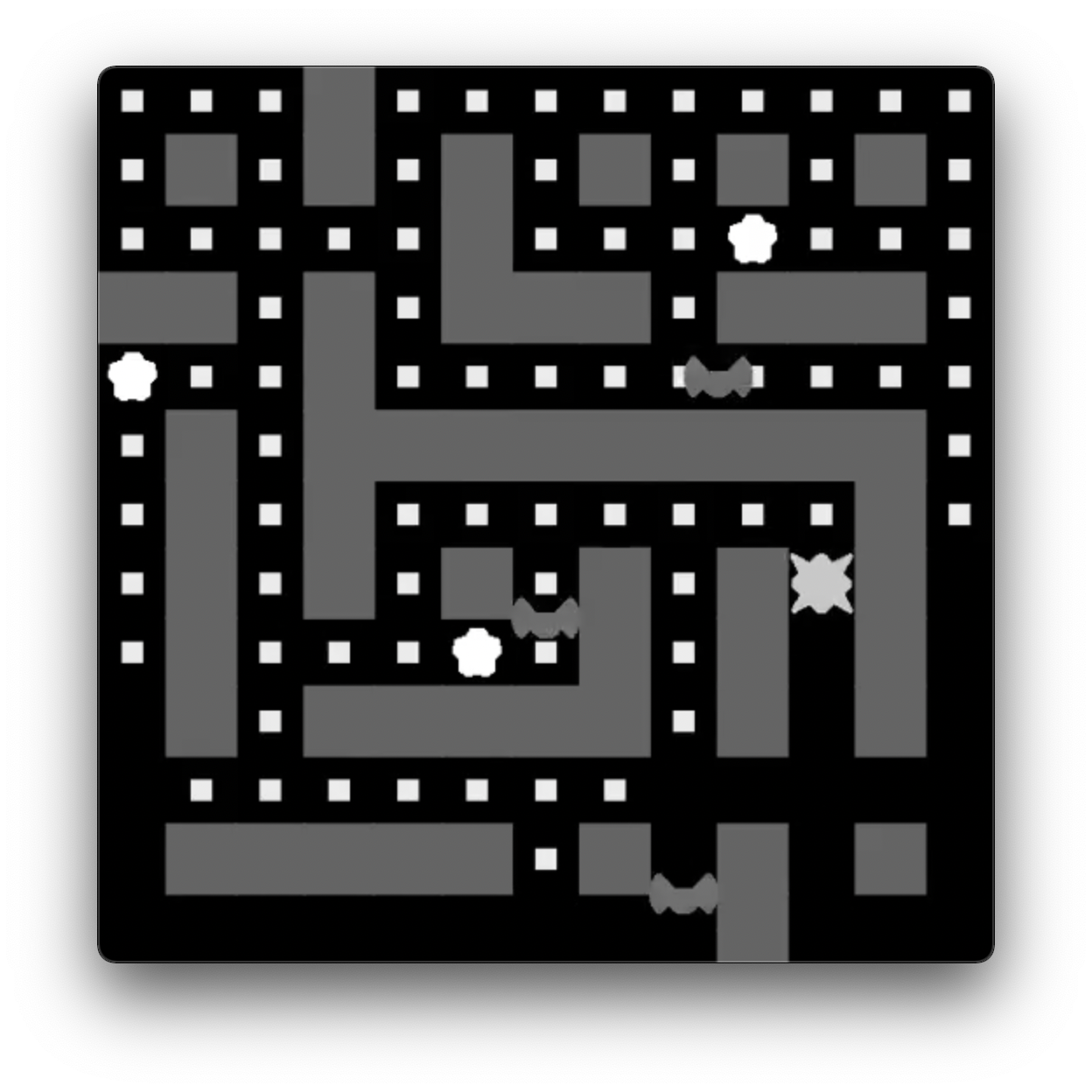}}
\end{tabular}
& 
\begin{tabular}{l}
\centering\texttt{chaser}
\end{tabular}
& 
\begin{tabular}{l}
\pbox{10.5cm}{There are seven entity IDs: (1) the player, (2) enemies, (3) enemies in their weakened state, (4) enemy eggs, (5) small collectible squares, (6) large collectible stars, and (7) the walls.}
\end{tabular}
\\ \hline

\begin{tabular}{l}
\raisebox{-1.8cm}{\includegraphics[scale=0.1]{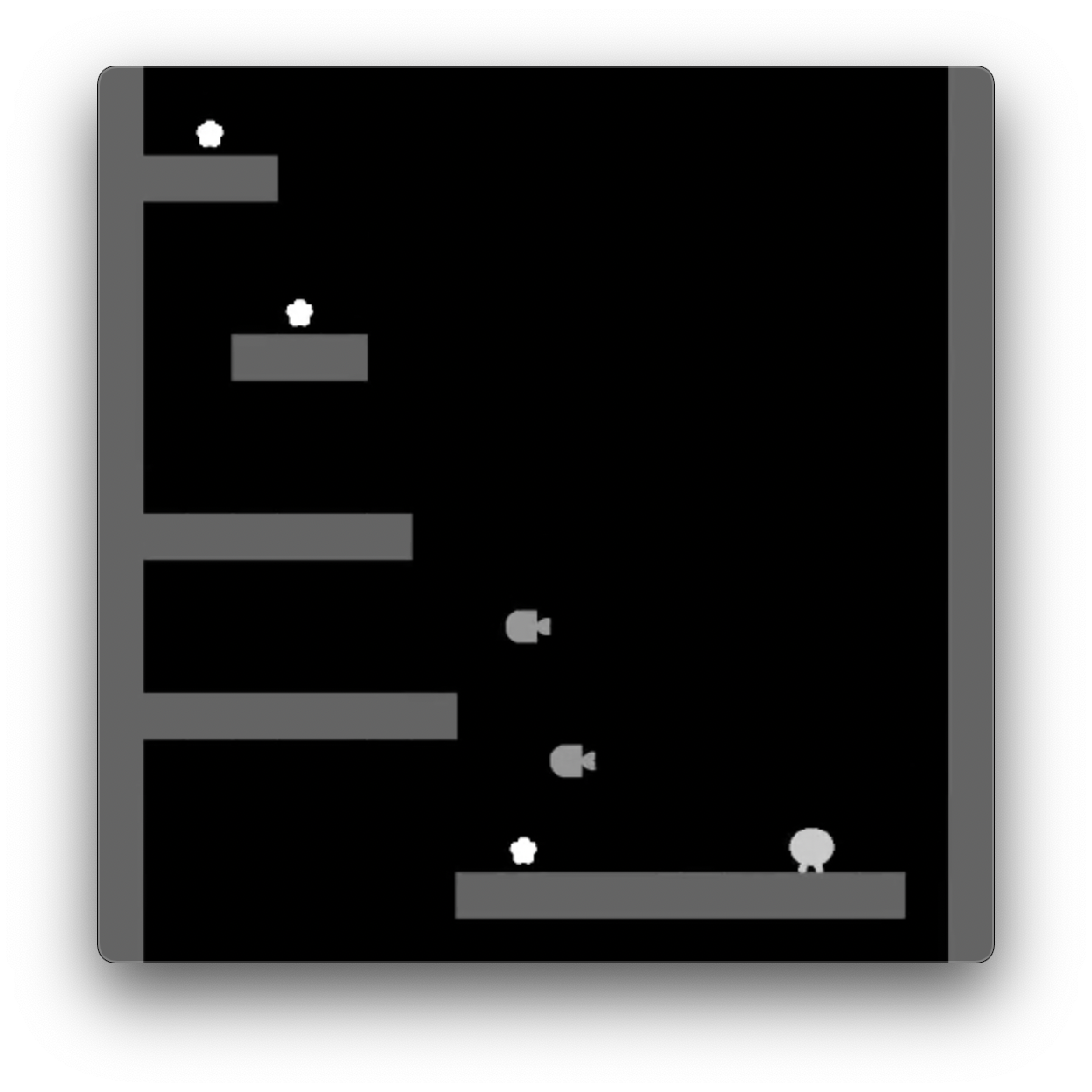}}
\end{tabular}
& 
\begin{tabular}{l}
\centering\texttt{climber}
\end{tabular}
& 
\begin{tabular}{l}
\pbox{10.5cm}{There are four entity IDs: (1) the player, (2) enemies, (3) the star rewards, and (4) the surrounding walls, ground, and platforms.}
\end{tabular}
\\ \hline

\begin{tabular}{l}
\raisebox{-1.8cm}{\includegraphics[scale=0.1]{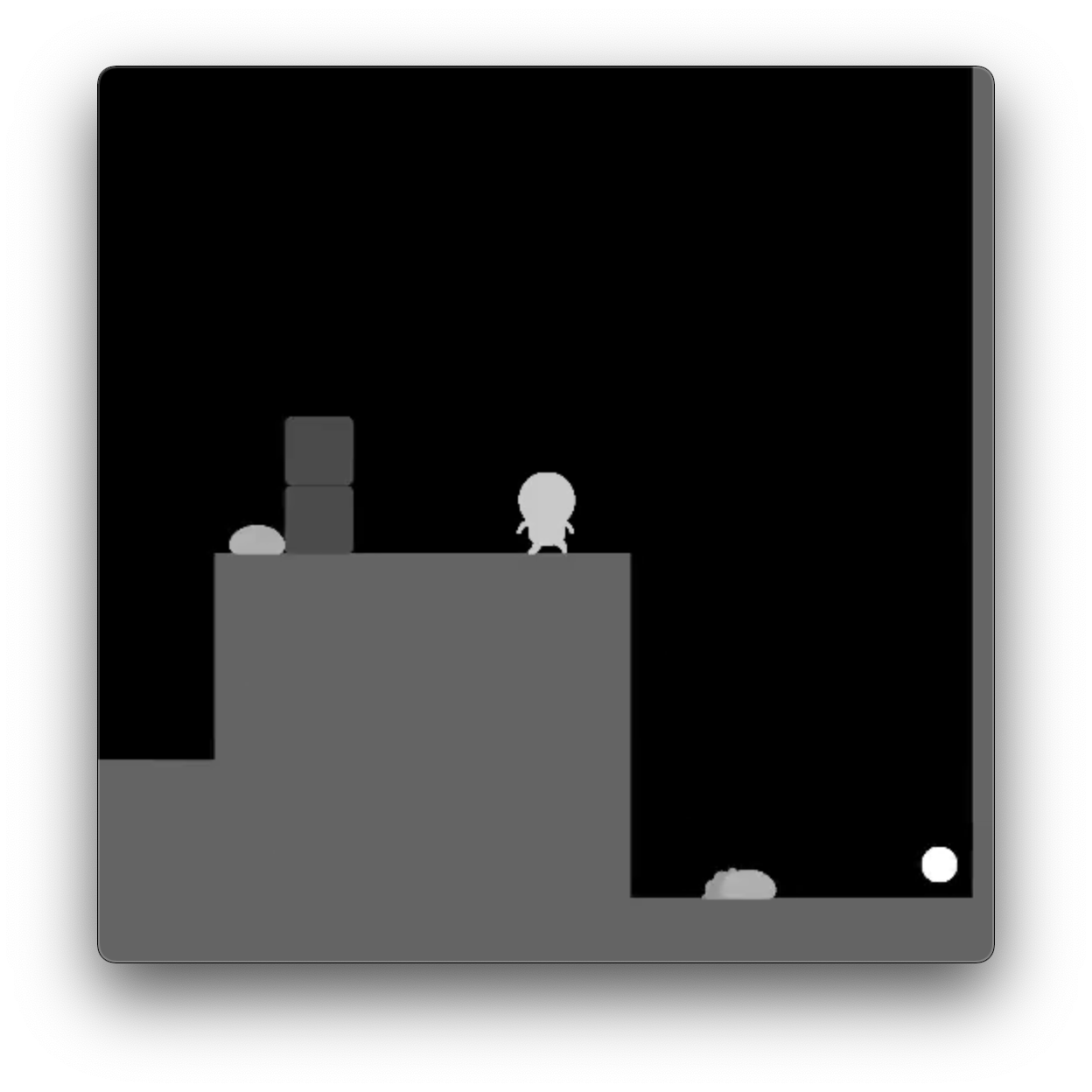}}
\end{tabular}
& 
\begin{tabular}{l}
\centering\texttt{coinrun}
\end{tabular}
& 
\begin{tabular}{l}
\pbox{10.5cm}{There are eight entity IDs: (1) the player, (2) enemies, (3) enemy barriers, (4) saw obstacles, (5) lava obstacles, (6) the goal star, (7) crates, and (8) the surrounding walls and ground.}
\end{tabular}
\\ \hline

\begin{tabular}{l}
\raisebox{-1.8cm}{\includegraphics[scale=0.1]{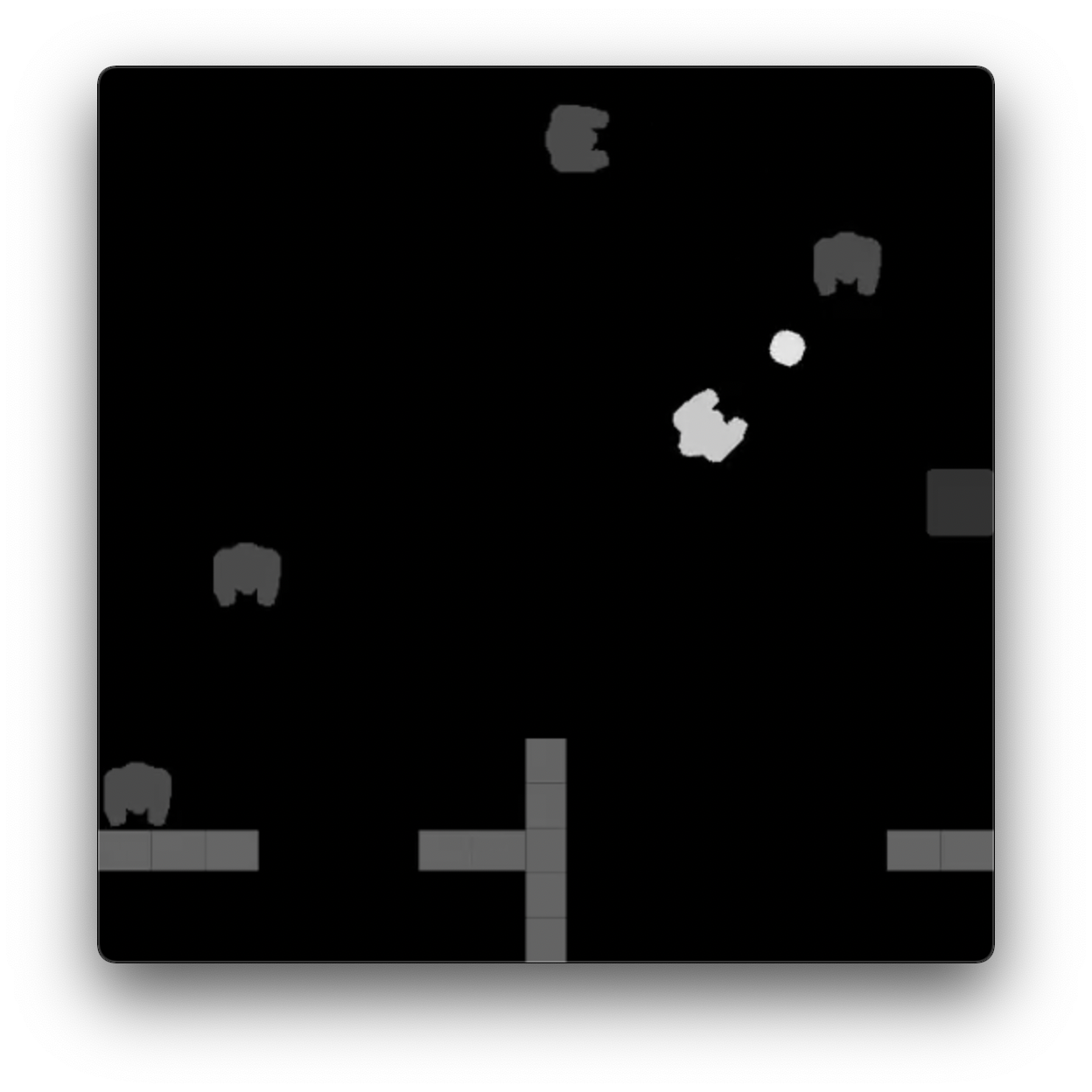}}
\end{tabular}
& 
\begin{tabular}{l}
\centering\texttt{dodgeball}
\end{tabular}
& 
\begin{tabular}{l}
\pbox{10.5cm}{There are seven entity IDs: (1) the player, (2) player projectiles, (3) the enemies, (4) enemy projectiles, (5) the locked exit, (6) the unlocked exit, and (7) the walls.}
\end{tabular}
\\ \hline

\begin{tabular}{l}
\raisebox{-1.8cm}{\includegraphics[scale=0.1]{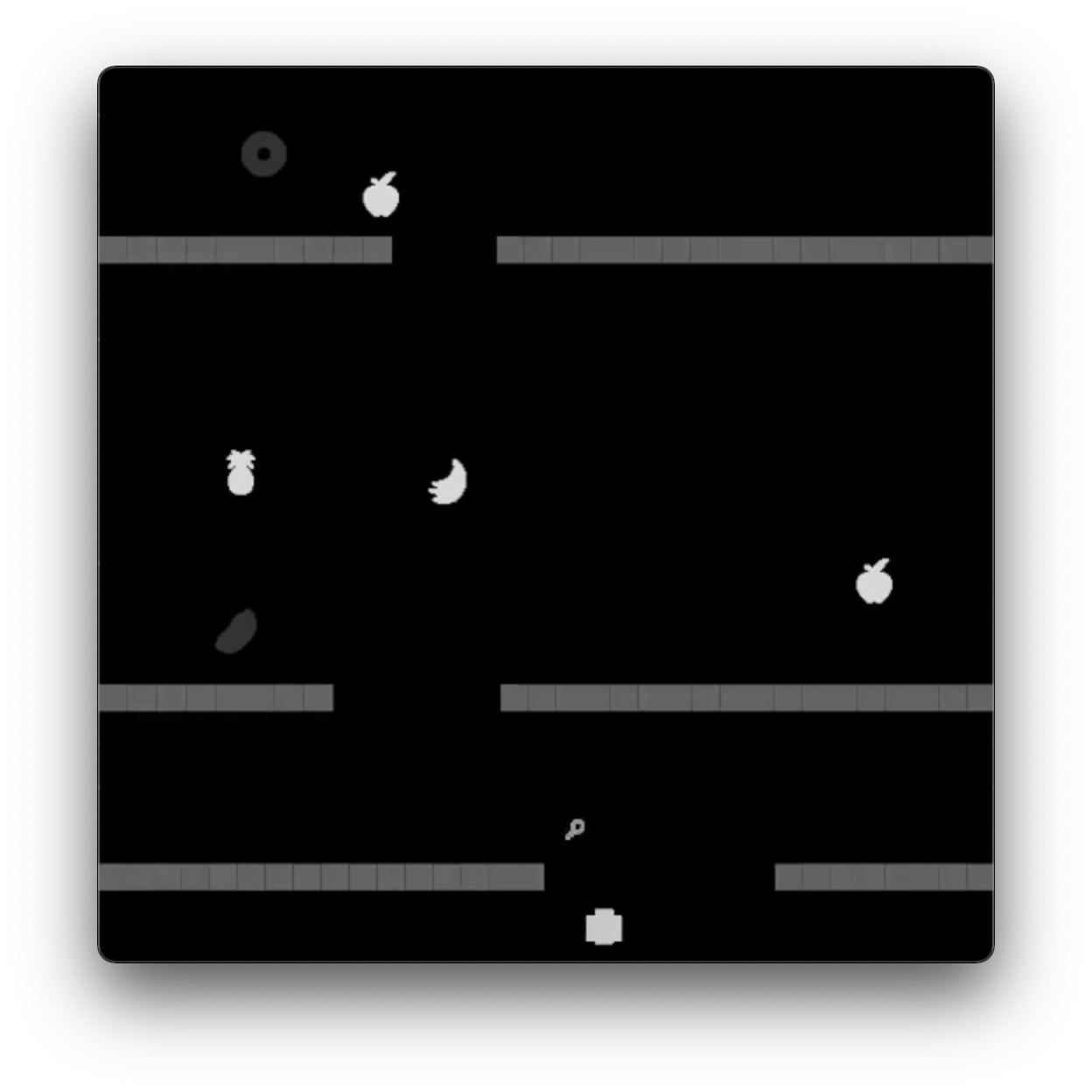}}
\end{tabular}
& 
\begin{tabular}{l}
\centering\texttt{fruitbot}
\end{tabular}
& 
\begin{tabular}{l}
\pbox{10.5cm}{There are eight entity IDs: (1) the player, (2) player projectiles, (3) rewarding objects, (4) penalizing objects, (5) locks, (6) locked doors, (7) the goal, and (8) barriers and the terrain below and above the beginning and end of each episode, respectively.}
\end{tabular}

\\ \specialrule{\cmidrulewidth}{0pt}{0pt}
\end{tabular}
\caption{\textbf{Outlining the ``Semantic" organization of each environment in the \textit{Procgen} benchmark.}}
\end{center}
\end{adjustwidth}
\end{table}

%
%
\begin{table}[ht]
\begin{adjustwidth}{-.5cm}{}
\begin{center}
\begin{tabular}{|c|c|c|}
\specialrule{\cmidrulewidth}{0pt}{0pt}
Image & Game & Mask Implementation \\ 
\hline

\begin{tabular}{l}
\raisebox{-1.8cm}{\includegraphics[scale=0.1]{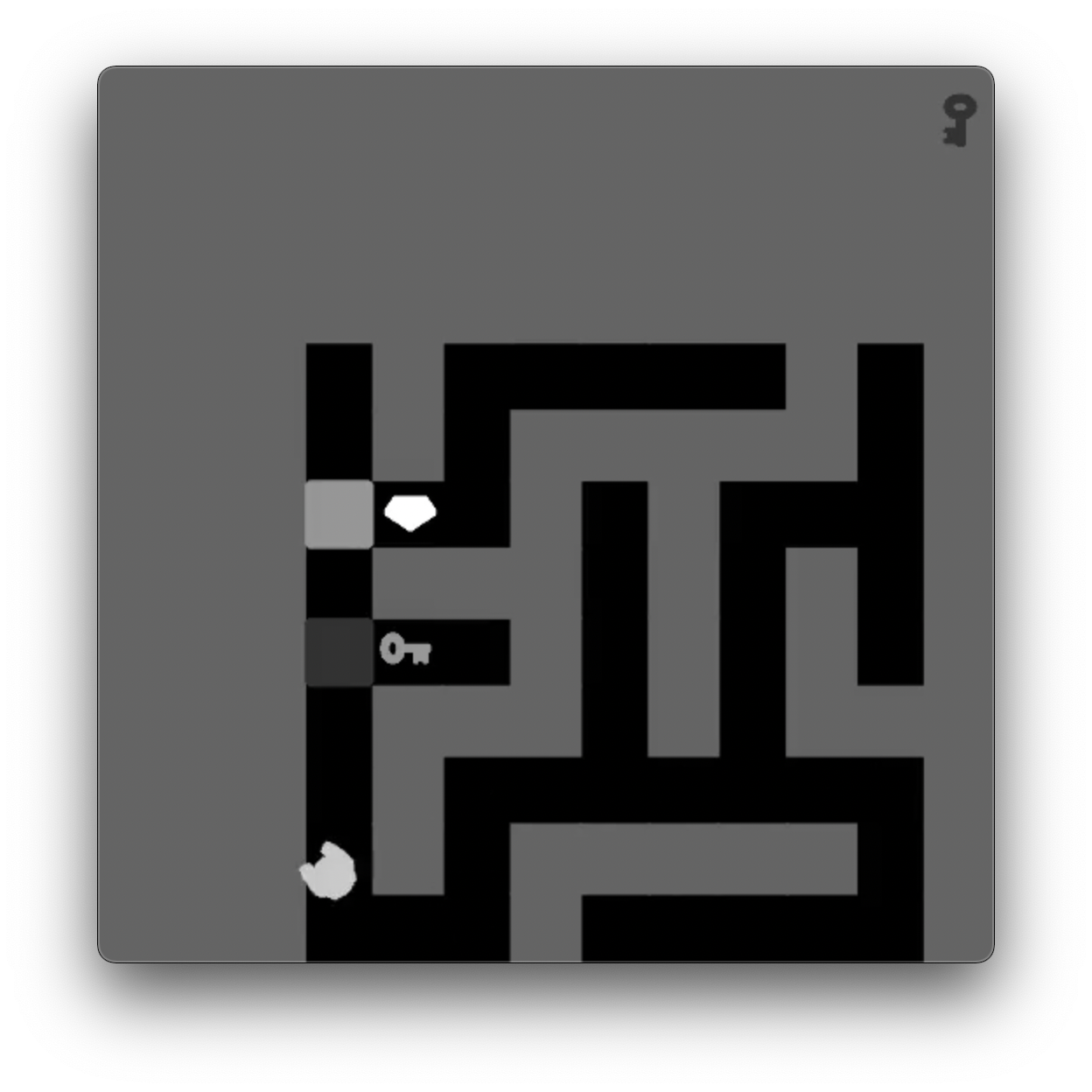}}
\end{tabular}
& 
\begin{tabular}{l}
\centering\texttt{heist}
\end{tabular}
& 
\begin{tabular}{l}
\pbox{10.5cm}{There are six entity IDs: (1) the player, (2-4) up to three unique keys and their corresponding locks, (5) the goal crystal, and (6) the walls.}
\end{tabular}
\\ \hline

\begin{tabular}{l}
\raisebox{-1.8cm}{\includegraphics[scale=0.1]{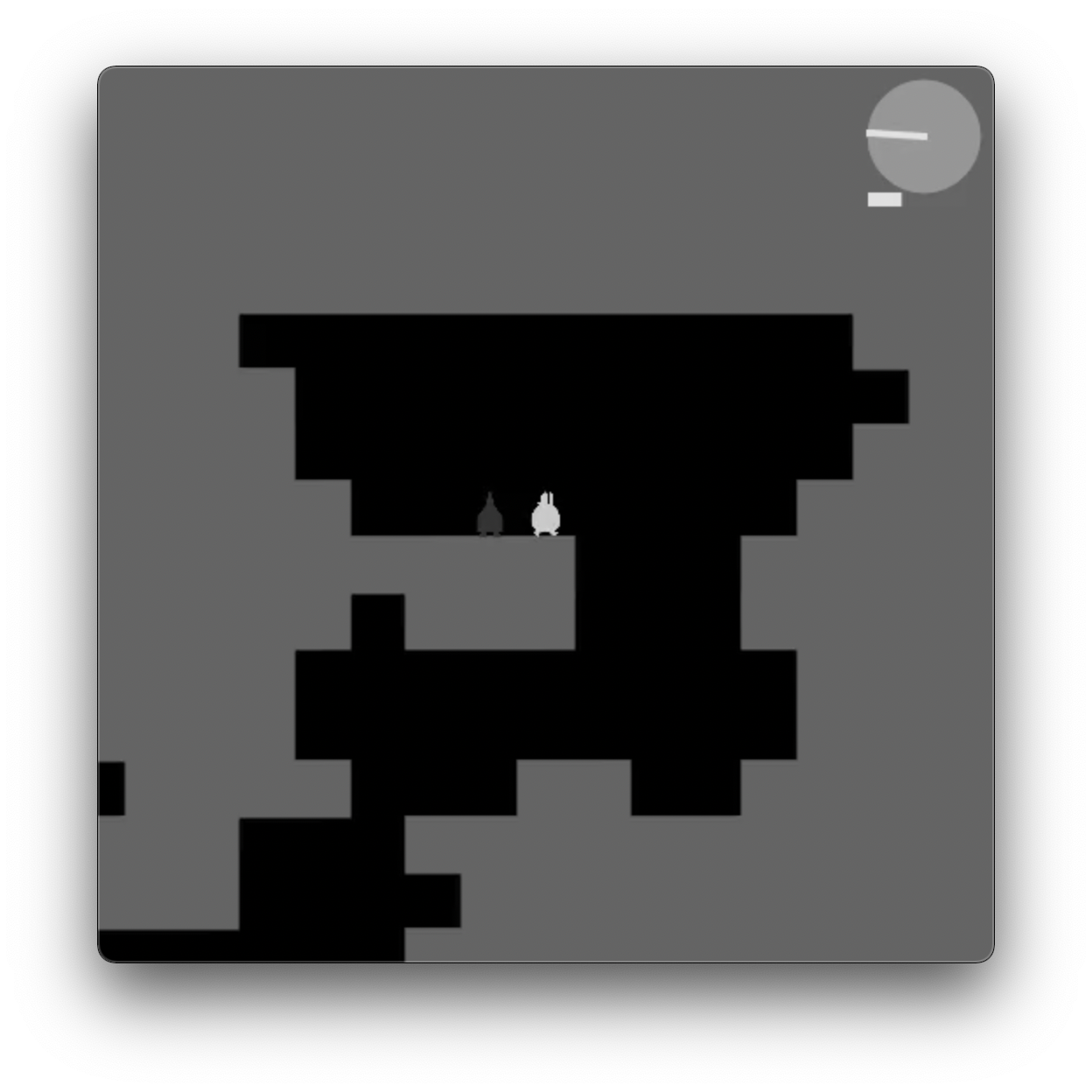}}
\end{tabular}
& 
\begin{tabular}{l}
\centering\texttt{jumper}
\end{tabular}
& 
\begin{tabular}{l}
\pbox{10.5cm}{There are six entity IDs: (1) the player, (2) spikes, (3) the goal, (4) the needle of the compass and the bar indicating the distance to the goal, (5) the circle of the compass, and (6) the surrounding terrain.}
\end{tabular}
\\ \hline

\begin{tabular}{l}
\raisebox{-1.8cm}{\includegraphics[scale=0.1]{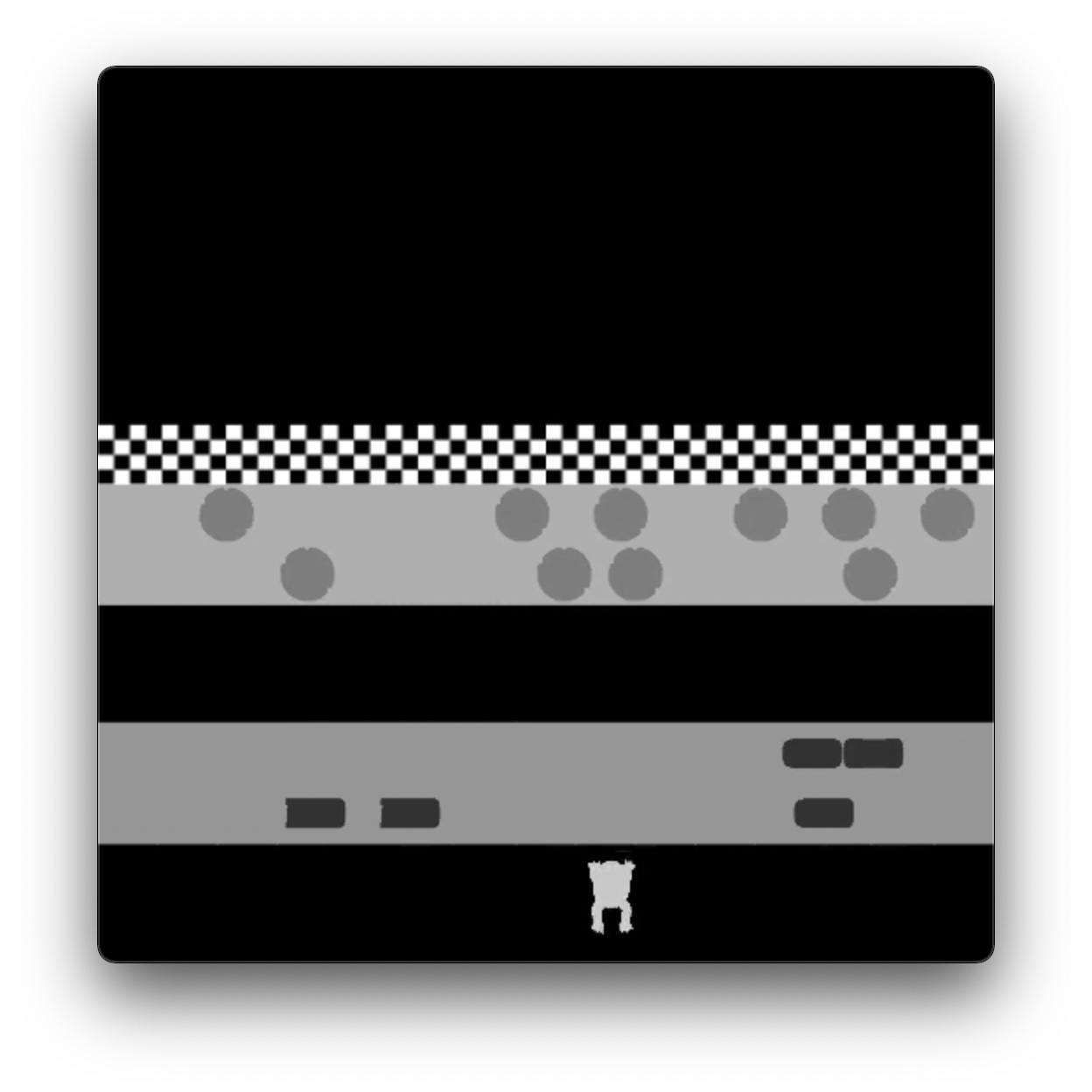}}
\end{tabular}
& 
\begin{tabular}{l}
\centering\texttt{leaper}
\end{tabular}
& 
\begin{tabular}{l}
\pbox{10.5cm}{There are six entity IDs: (1) the player, (2) cars, (3) the road, (4) log platforms, (5) the water, and (6) the finish line.}
\end{tabular}
\\ \hline

\begin{tabular}{l}
\raisebox{-1.8cm}{\includegraphics[scale=0.08]{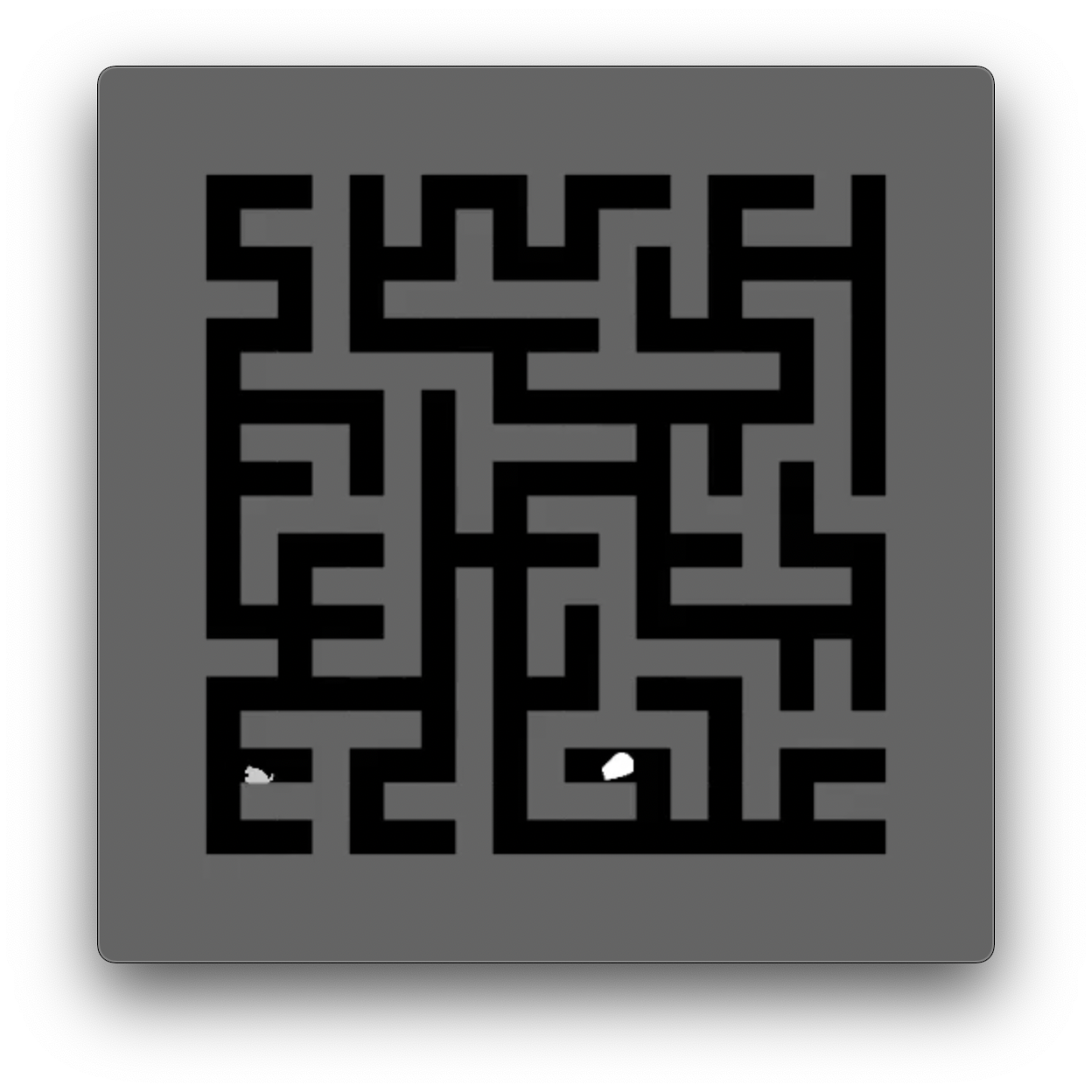}}
\end{tabular}
& 
\begin{tabular}{l}
\centering\texttt{maze}
\end{tabular}
& 
\begin{tabular}{l}
\pbox{10.5cm}{There are three entity IDs: (1) the player, (2) the goal cheese, and (3) the walls.}
\end{tabular}
\\ \hline

\begin{tabular}{l}
\raisebox{-1.8cm}{\includegraphics[scale=0.1]{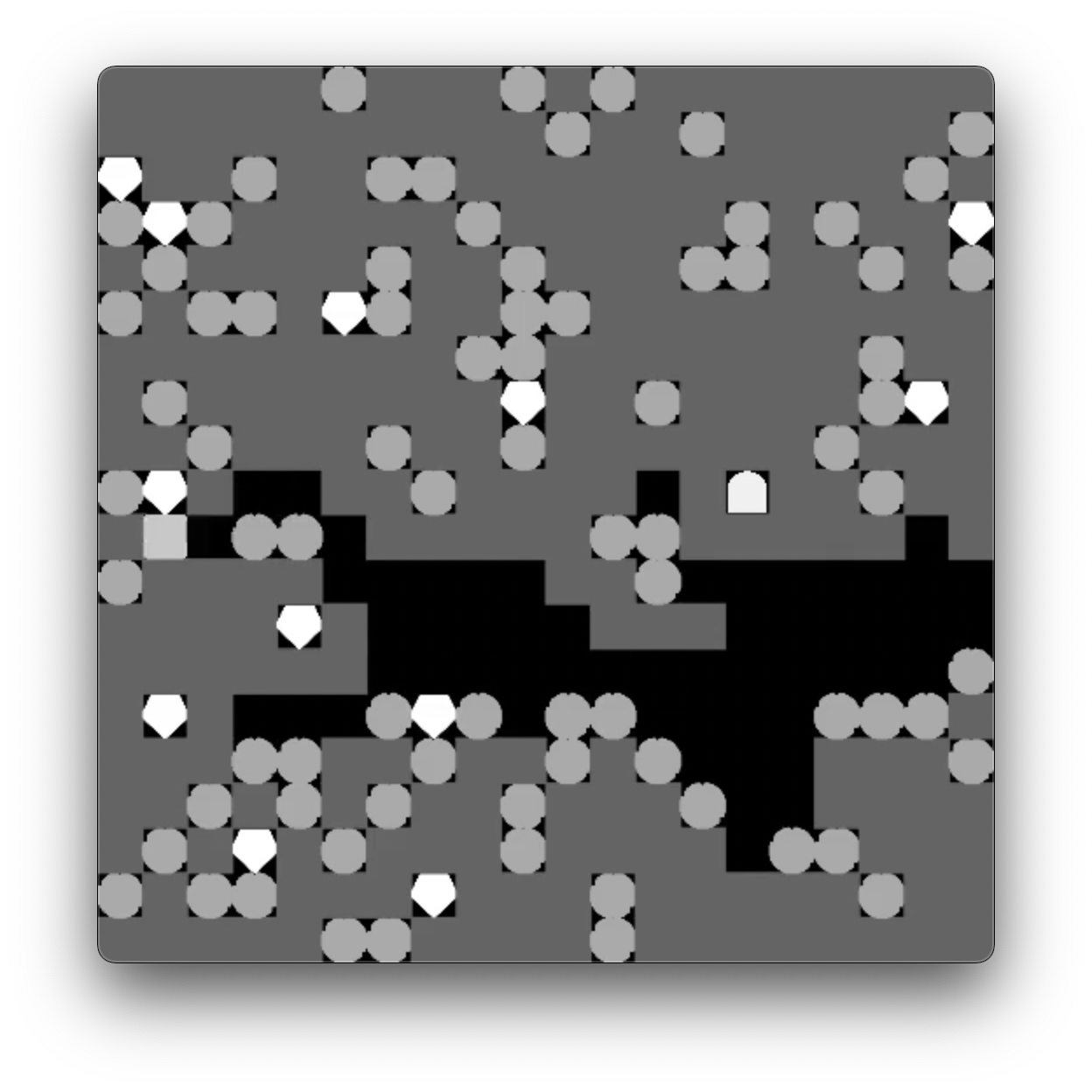}}
\end{tabular}
& 
\begin{tabular}{l}
\centering\texttt{miner}
\end{tabular}
& 
\begin{tabular}{l}
\pbox{10.5cm}{There are five entity IDs: (1) the player, (2) boulders, (3) crystals, (4) the exit, and (5) dirt.}
\end{tabular}
\\ \hline

\begin{tabular}{l}
\raisebox{-1.8cm}{\includegraphics[scale=0.1]{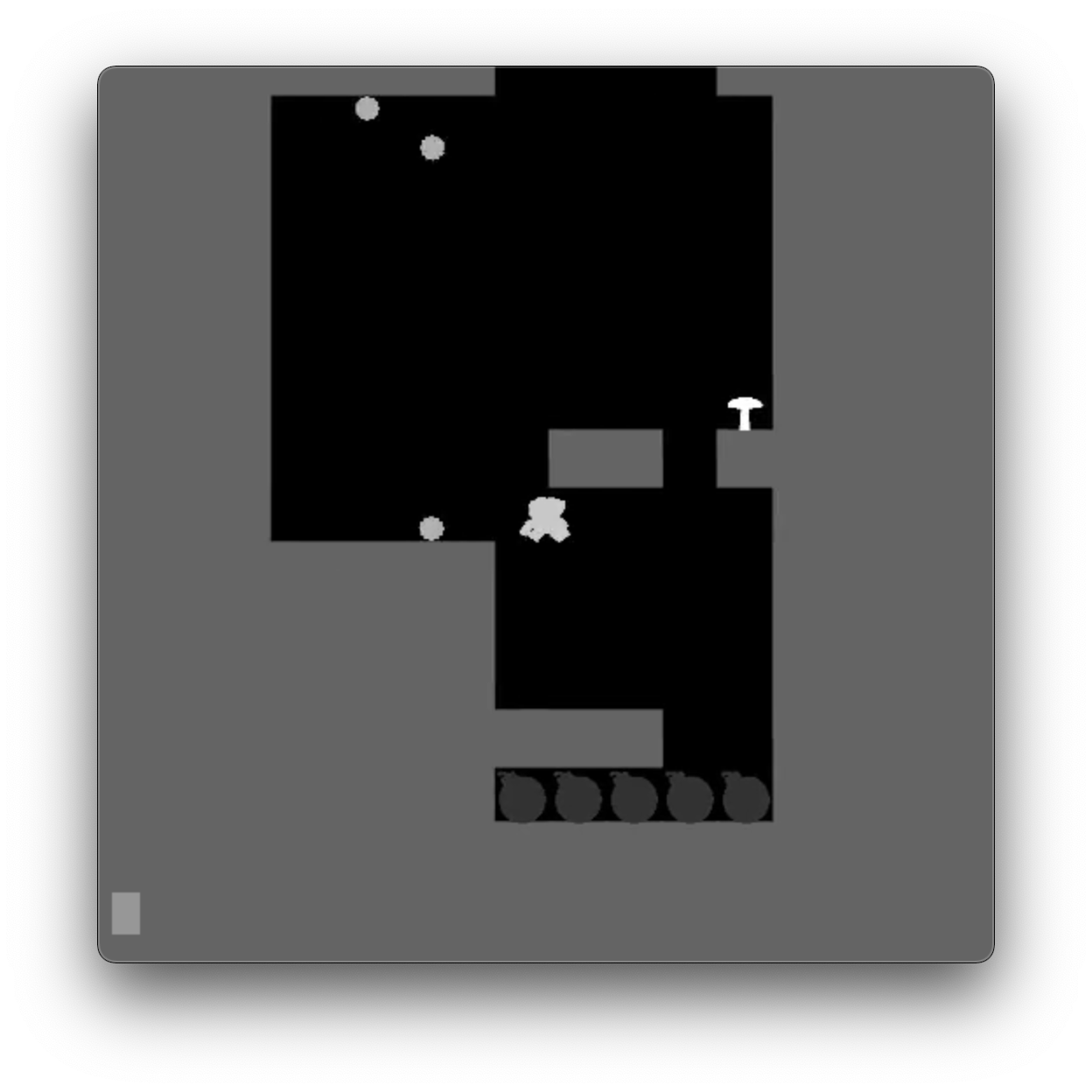}}
\end{tabular}
& 
\begin{tabular}{l}
\centering\texttt{ninja}
\end{tabular}
& 
\begin{tabular}{l}
\pbox{10.5cm}{There are seven entity IDs: (1) the player, (2) player projectiles, (3) the goal mushroom, (4) bombs, (5) explosions, (6) the bar indicating jump charge, and (7) the surrounding walls, ground, and platforms.}
\end{tabular}
\\ \hline

\begin{tabular}{l}
\raisebox{-1.8cm}{\includegraphics[scale=0.1]{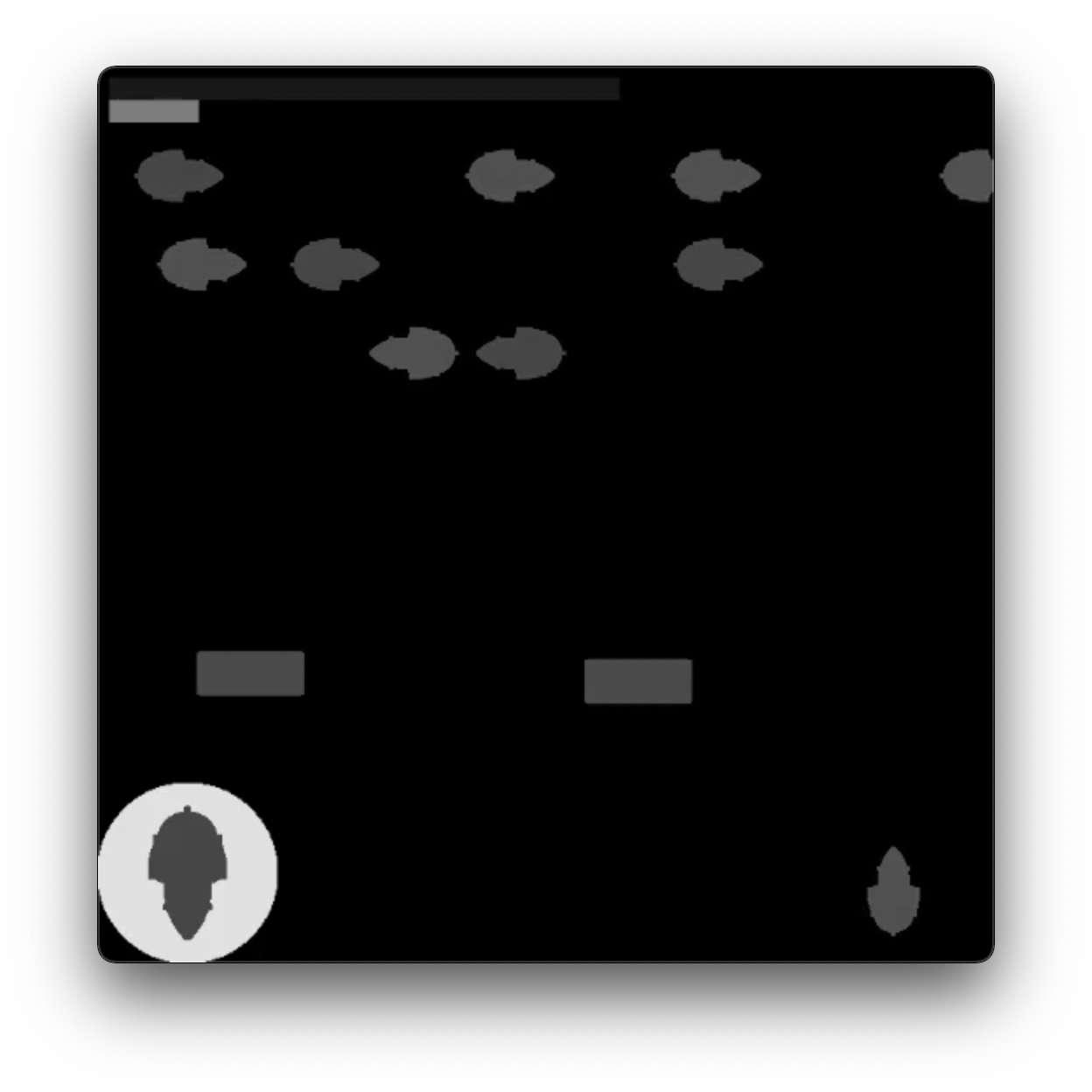}}
\end{tabular}
& 
\begin{tabular}{l}
\centering\texttt{plunder}
\end{tabular}
& 
\begin{tabular}{l}
\pbox{10.5cm}{There are eight entity IDs: (1) the player and ally ships, (2) player projectiles, (3) enemy ships, including the cue ship, (4) the circle behind the cue ship, (5) barriers, (6) explosions, (7) the time-countdown status bar, and (8) the points-accrued status bar.}
\end{tabular}
\\ \hline

\begin{tabular}{l}
\raisebox{-1.8cm}{\includegraphics[scale=0.1]{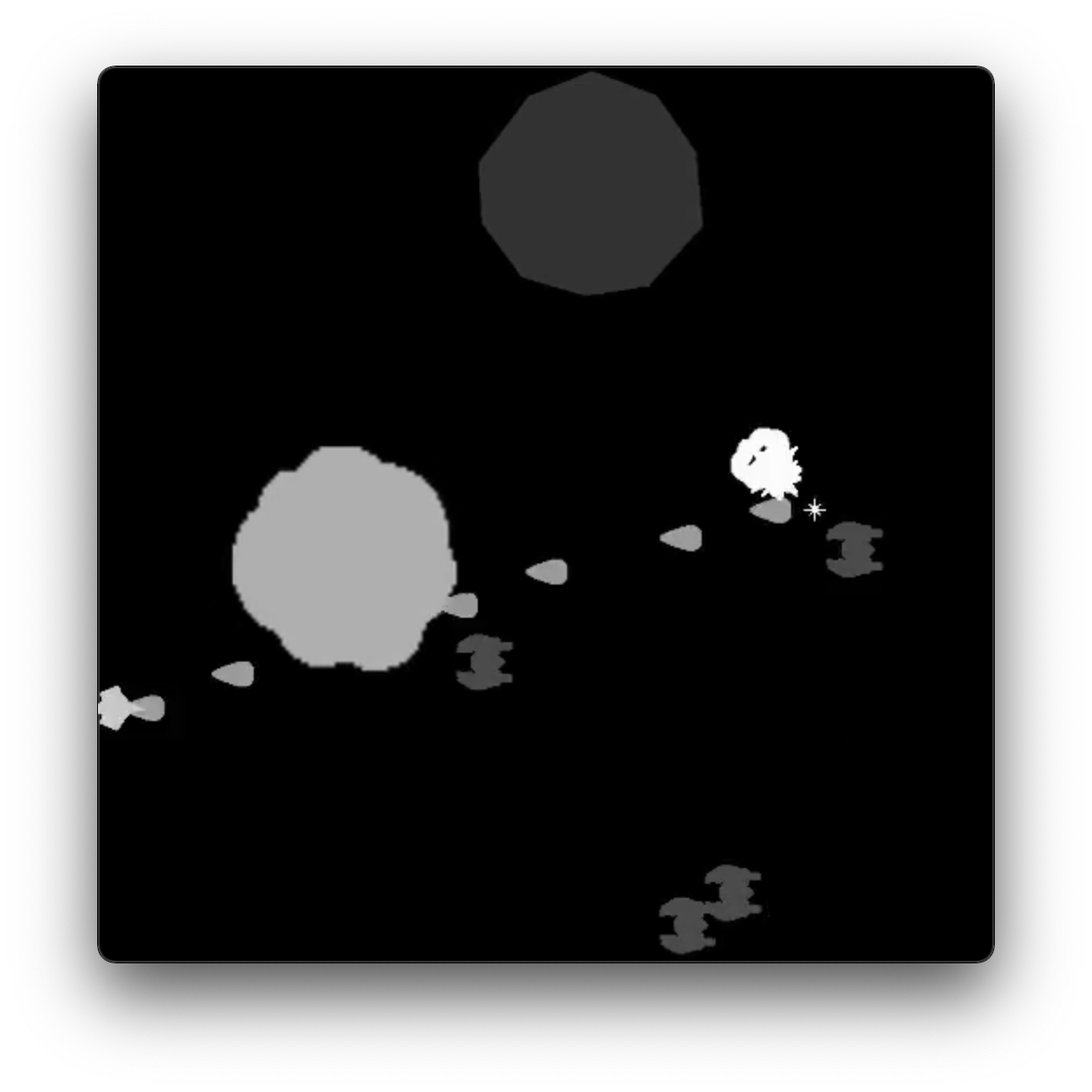}}
\end{tabular}
& 
\begin{tabular}{l}
\centering\texttt{starpilot}
\end{tabular}
& 
\begin{tabular}{l}
\pbox{10.5cm}{There are eleven entity IDs: (1) the player, (2) player projectiles, (3) slow enemies flyers, (4) fast enemies flyers, (5) enemy flyer projectiles, (6) enemy turrets, (7) enemy turret projectiles, (8) meteors, (9) clouds, (10) the finish line, and (11) explosions.}
\end{tabular}

\\ \specialrule{\cmidrulewidth}{0pt}{0pt}
\end{tabular}
\caption{\textbf{Outlining the ``Semantic" organization of each environment in the \textit{Procgen} benchmark.}}
\end{center}
\end{adjustwidth}
\end{table}

%
\newcolumntype{P}[1]{>{\centering\arraybackslash}p{#1}}
\begin{table}[ht]
\begin{adjustwidth}{-.5cm}{}
\begin{center}
\begin{tabular}{|c|c|P{10.5cm}|}
\specialrule{\cmidrulewidth}{0pt}{0pt}
Image & Game & Mask Details \\ 
\hline

\begin{tabular}{l}
\raisebox{-1.8cm}{\includegraphics[scale=0.1]{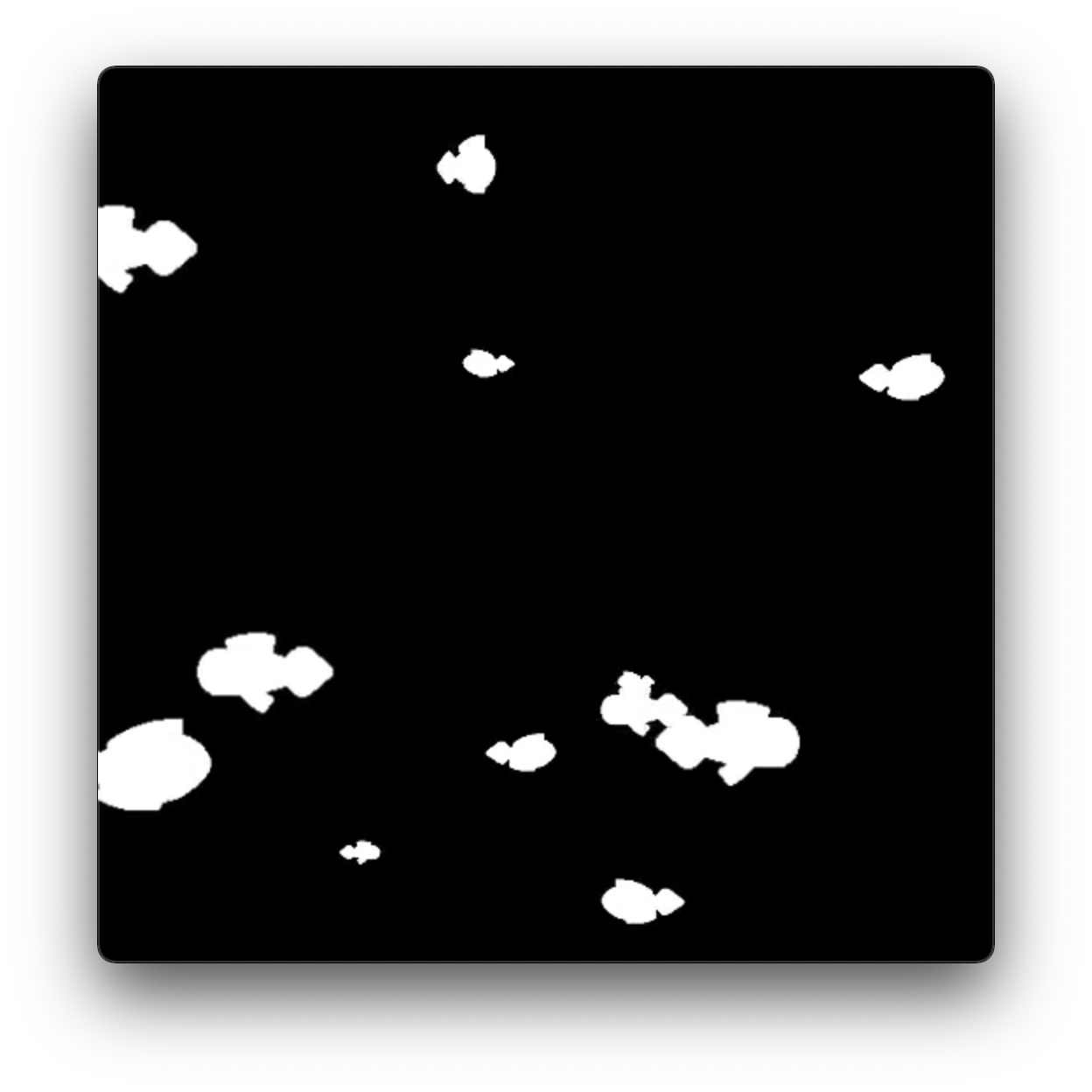}}
\end{tabular}
& 
\begin{tabular}{l}
\centering\texttt{bigfish}
\end{tabular}
& 
\begin{tabular}{l}
\pbox{10.5cm}{All entities share the same ID.}
\end{tabular}
\\ \hline

\begin{tabular}{l}
\raisebox{-1.8cm}{\includegraphics[scale=0.1]{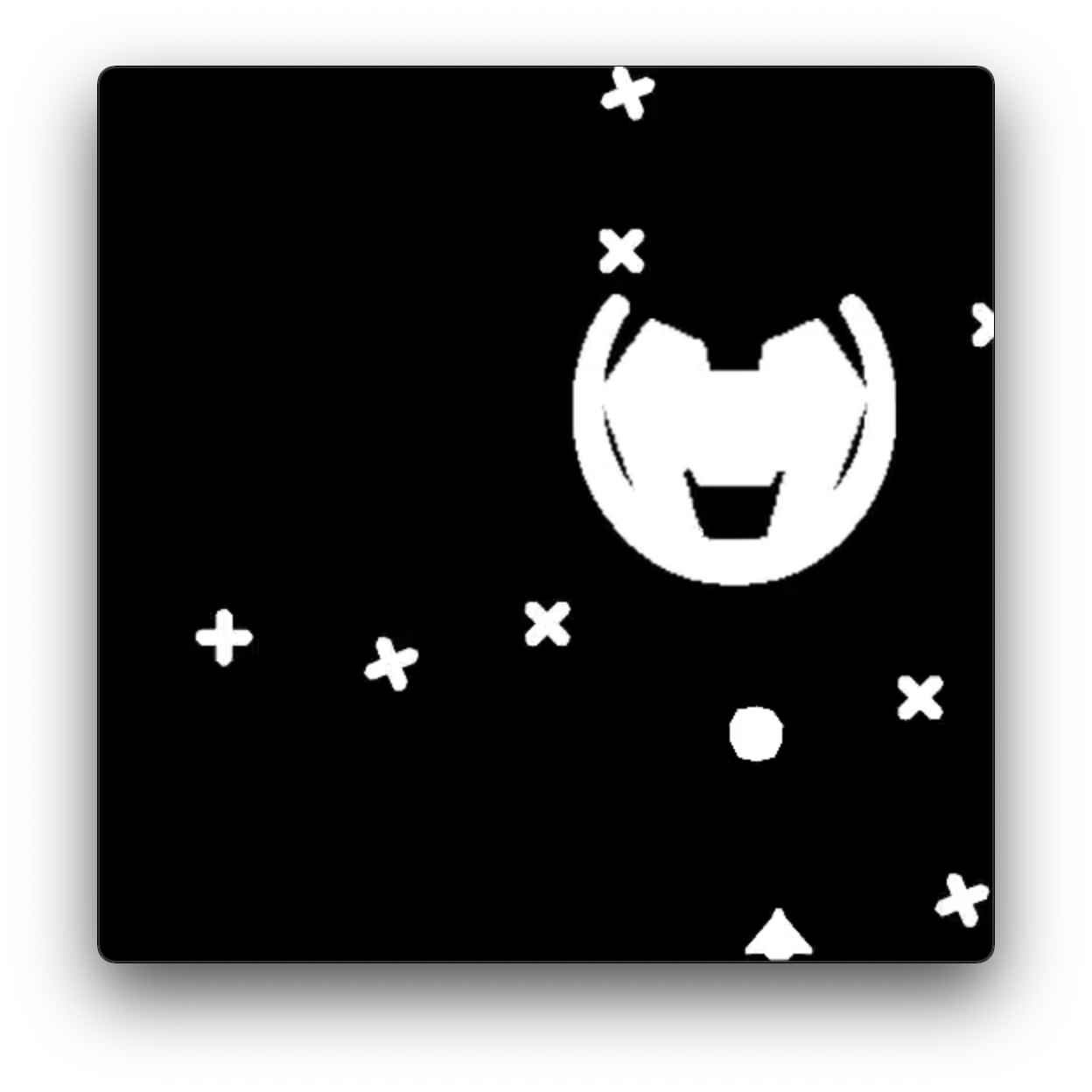}}
\end{tabular}
& 
\begin{tabular}{l}
\centering\texttt{bossfight}
\end{tabular}
& 
\begin{tabular}{l}
\pbox{10.5cm}{All entities share the same ID.}
\end{tabular}
\\ \hline

\begin{tabular}{l}
\raisebox{-1.8cm}{\includegraphics[scale=0.1]{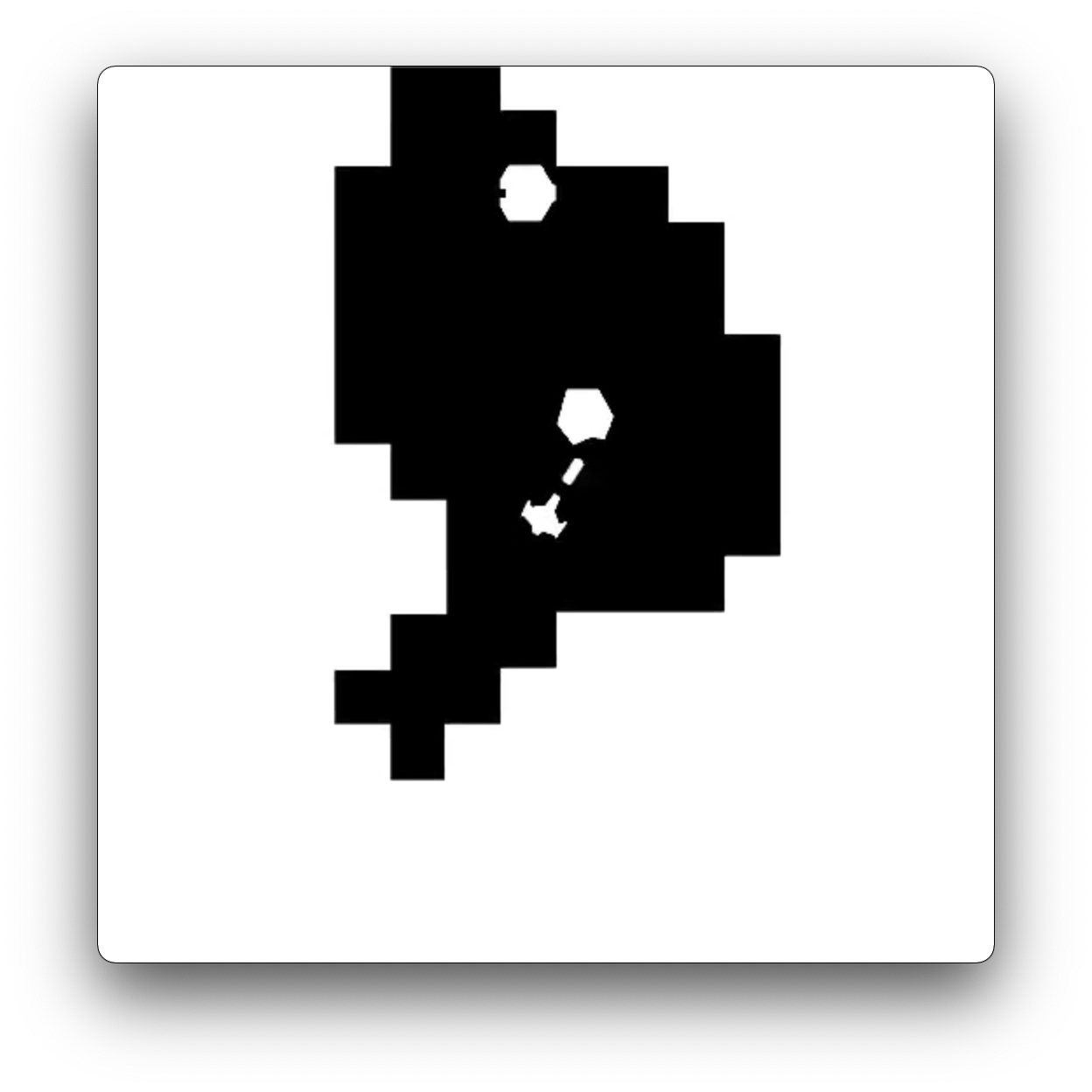}}
\end{tabular}
& 
\begin{tabular}{l}
\centering\texttt{caveflyer}
\end{tabular}
& 
\begin{tabular}{l}
\pbox{10.5cm}{All entities share the same ID.}
\end{tabular}
\\ \hline

\begin{tabular}{l}
\raisebox{-1.8cm}{\includegraphics[scale=0.1]{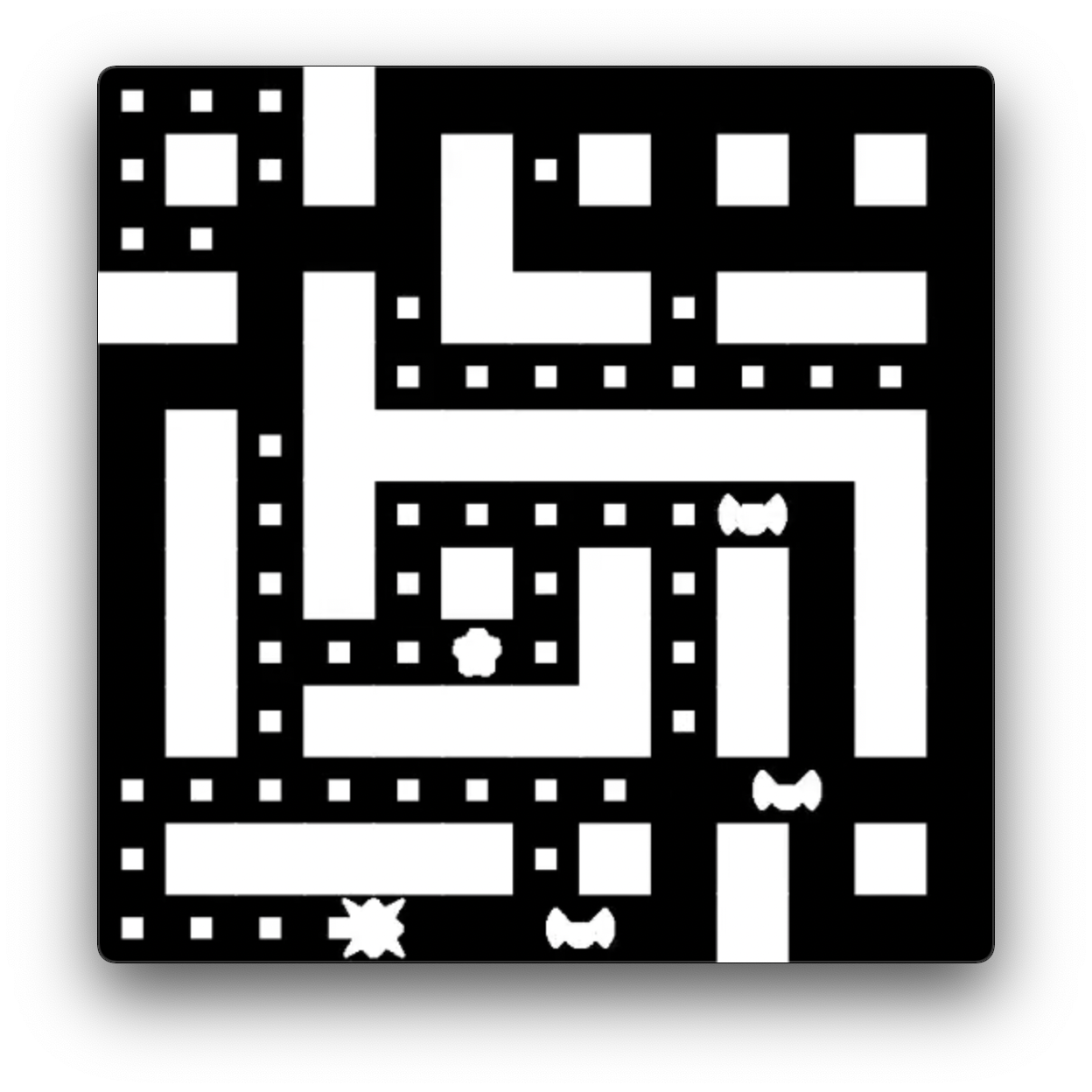}}
\end{tabular}
& 
\begin{tabular}{l}
\centering\texttt{chaser}
\end{tabular}
& 
\begin{tabular}{l}
\pbox{10.5cm}{All entities share the same ID.}
\end{tabular}
\\ \hline

\begin{tabular}{l}
\raisebox{-1.8cm}{\includegraphics[scale=0.1]{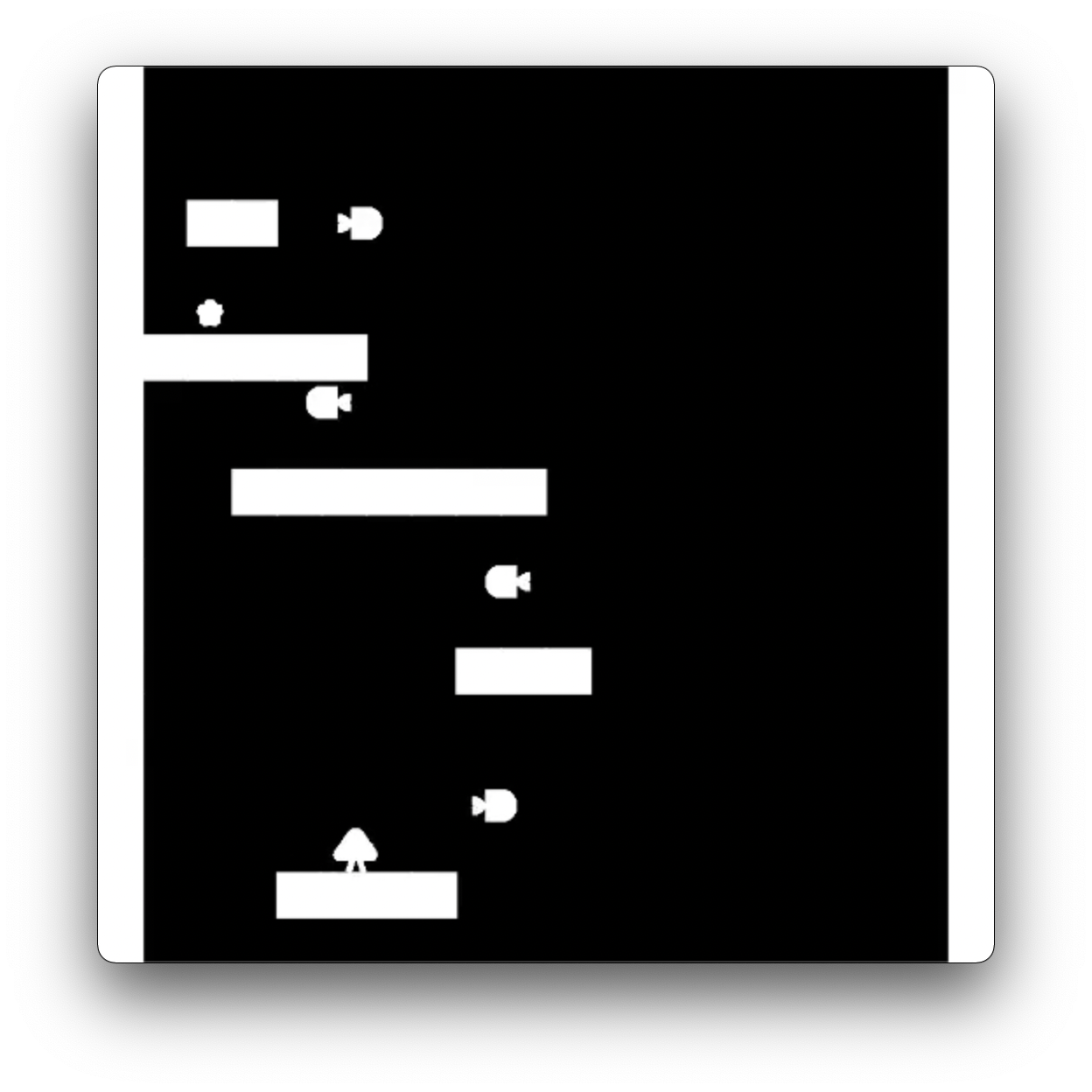}}
\end{tabular}
& 
\begin{tabular}{l}
\centering\texttt{climber}
\end{tabular}
& 
\begin{tabular}{l}
\pbox{10.5cm}{All entities share the same ID.}
\end{tabular}
\\ \hline

\begin{tabular}{l}
\raisebox{-1.8cm}{\includegraphics[scale=0.1]{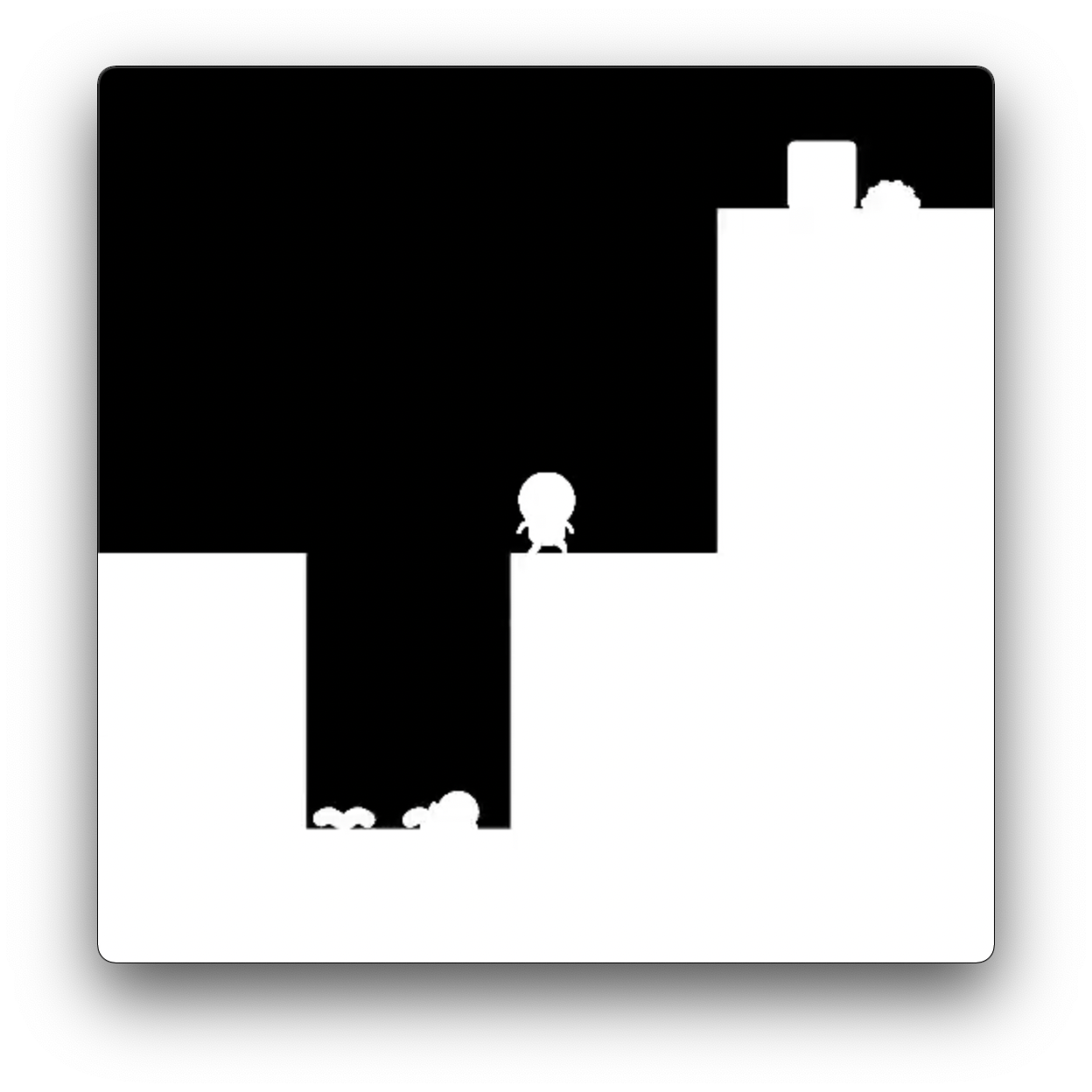}}
\end{tabular}
& 
\begin{tabular}{l}
\centering\texttt{coinrun}
\end{tabular}
& 
\begin{tabular}{l}
\pbox{10.5cm}{All entities share the same ID.}
\end{tabular}
\\ \hline

\begin{tabular}{l}
\raisebox{-1.8cm}{\includegraphics[scale=0.1]{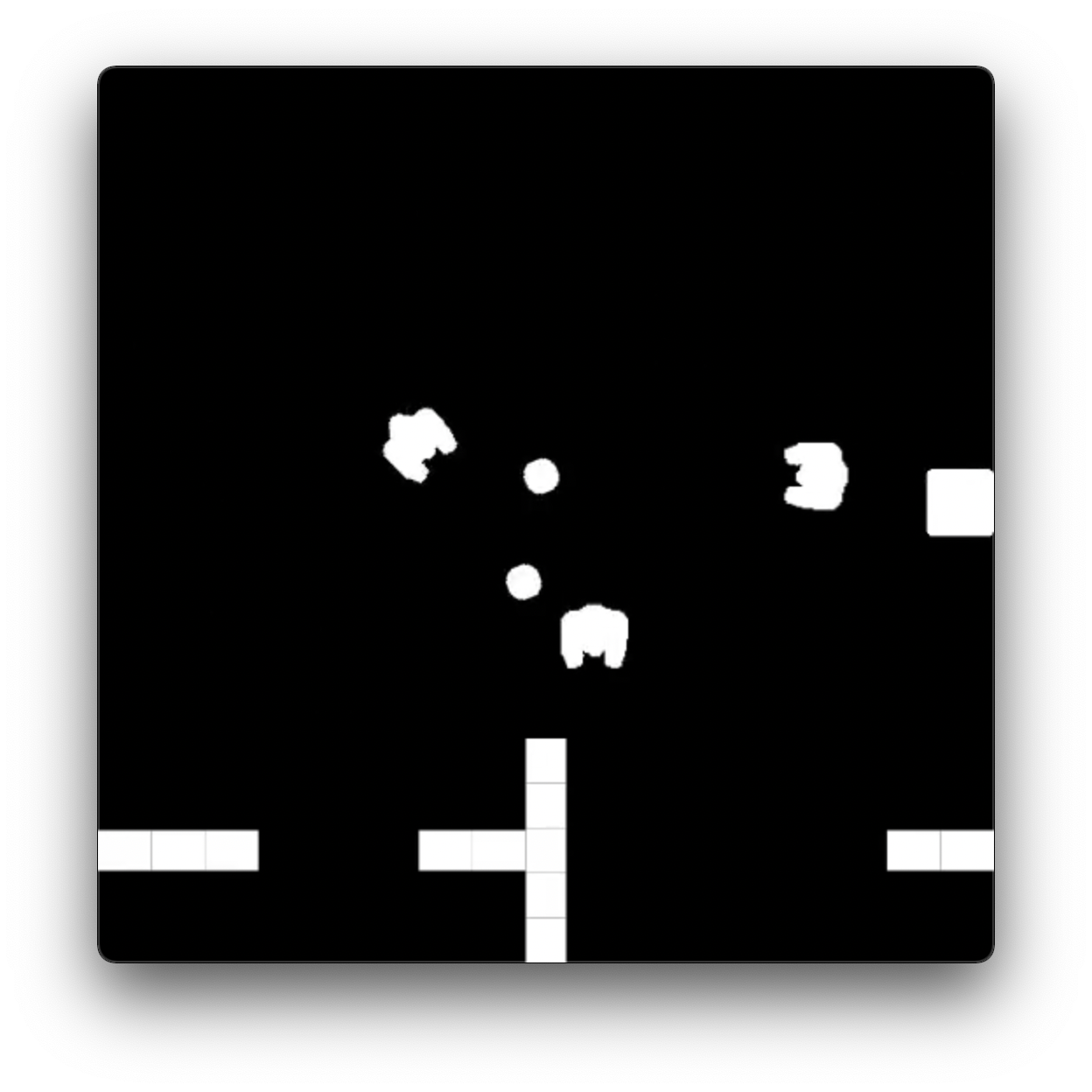}}
\end{tabular}
& 
\begin{tabular}{l}
\centering\texttt{dodgeball}
\end{tabular}
& 
\begin{tabular}{l}
\pbox{10.5cm}{All entities share the same ID.}
\end{tabular}
\\ \hline

\begin{tabular}{l}
\raisebox{-1.8cm}{\includegraphics[scale=0.1]{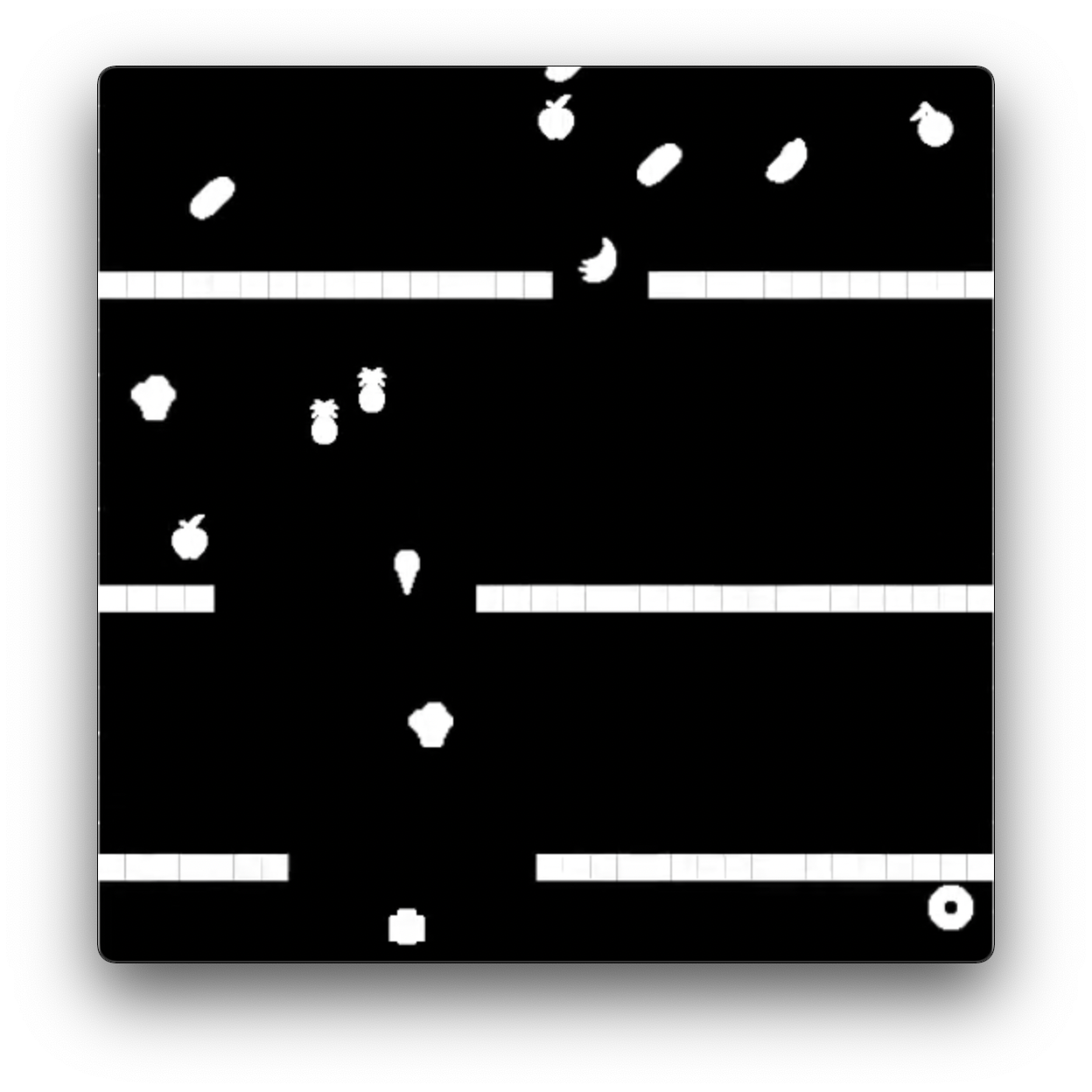}}
\end{tabular}
& 
\begin{tabular}{l}
\centering\texttt{fruitbot}
\end{tabular}
& 
\begin{tabular}{l}
\pbox{10.5cm}{All entities share the same ID.}
\end{tabular}

\\ \specialrule{\cmidrulewidth}{0pt}{0pt}
\end{tabular}
\caption{\textbf{Outlining the ``Figure/Ground" organization of each environment in the \textit{Procgen} benchmark.}}
\end{center}
\end{adjustwidth}
\end{table}

%
%
\begin{table}[ht]
\begin{adjustwidth}{-.5cm}{}
\begin{center}
\begin{tabular}{|c|c|c|}
\specialrule{\cmidrulewidth}{0pt}{0pt}
Image & Game & Mask Implementation \\ 
\hline

\begin{tabular}{l}
\raisebox{-1.8cm}{\includegraphics[scale=0.1]{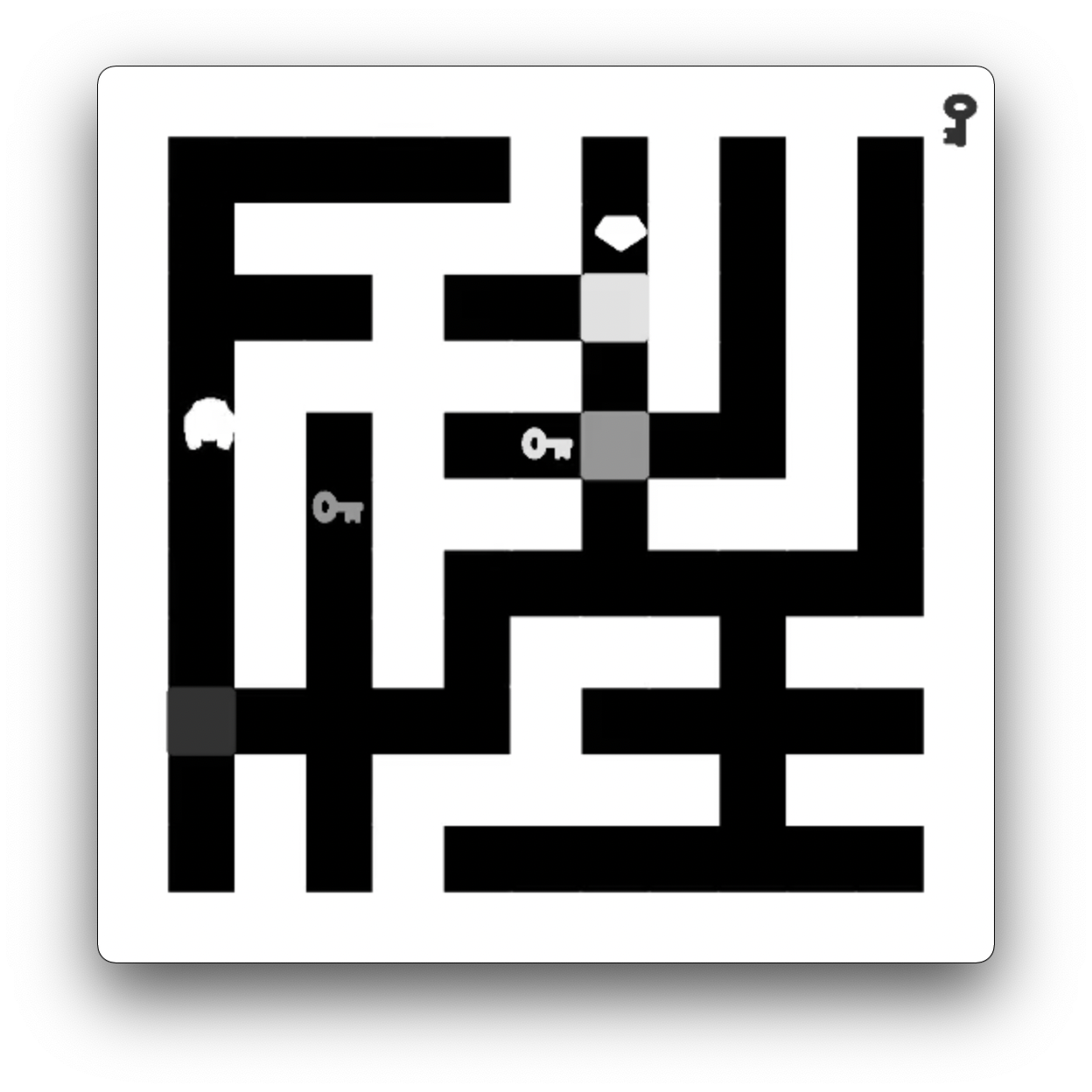}}
\end{tabular}
& 
\begin{tabular}{l}
\centering\texttt{heist}
\end{tabular}
& 
\begin{tabular}{l}
\pbox{10.5cm}{There are four entity IDs: (1-3) up to three unique keys and their corresponding locks, and (4) all other entities.}
\end{tabular}
\\ \hline

\begin{tabular}{l}
\raisebox{-1.8cm}{\includegraphics[scale=0.1]{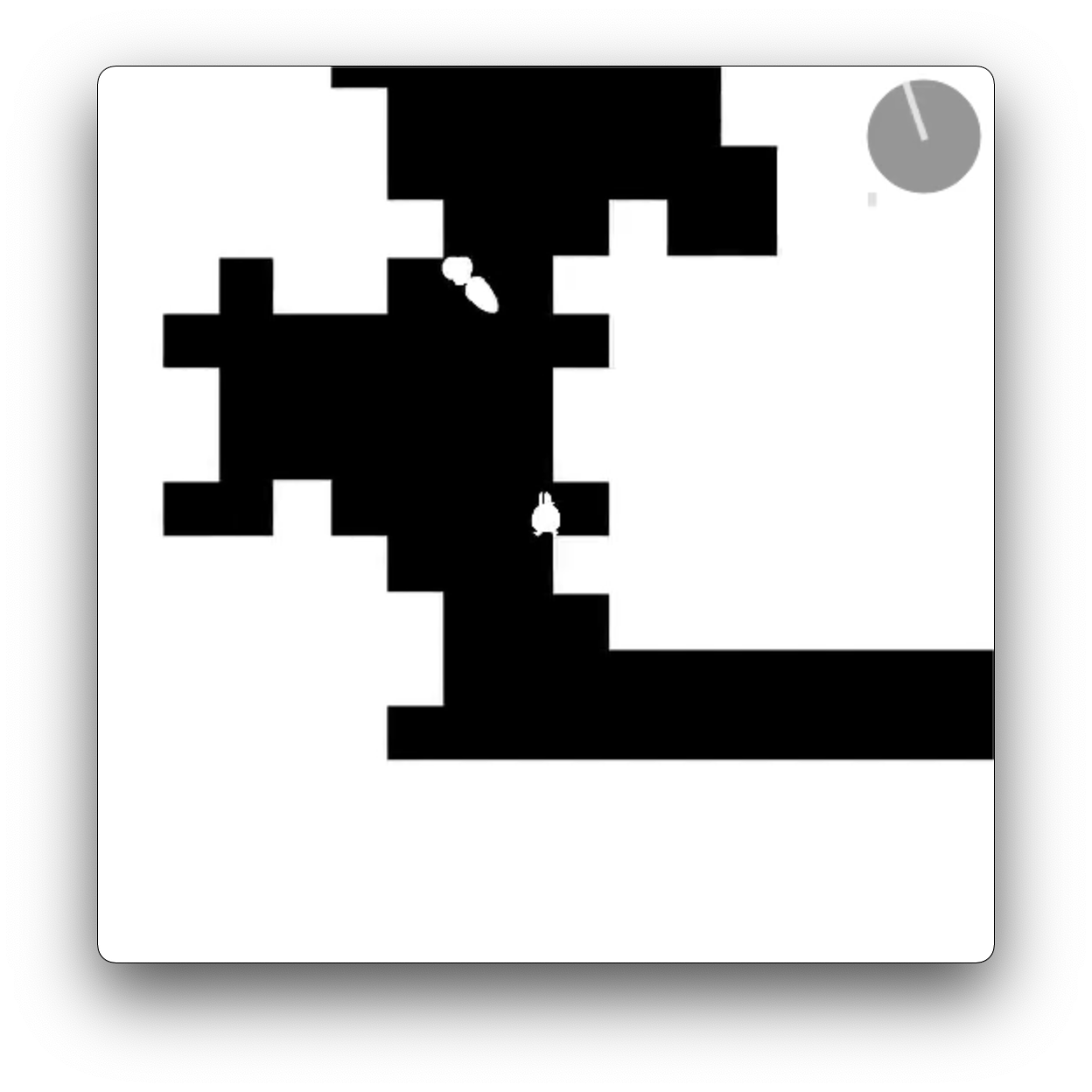}}
\end{tabular}
& 
\begin{tabular}{l}
\centering\texttt{jumper}
\end{tabular}
& 
\begin{tabular}{l}
\pbox{10.5cm}{There are three entity IDs: (1) the needle of the compass and the bar indicating the distance to the goal, (2) the circle of the compass, and (3) all other entities.}
\end{tabular}
\\ \hline

\begin{tabular}{l}
\raisebox{-1.8cm}{\includegraphics[scale=0.1]{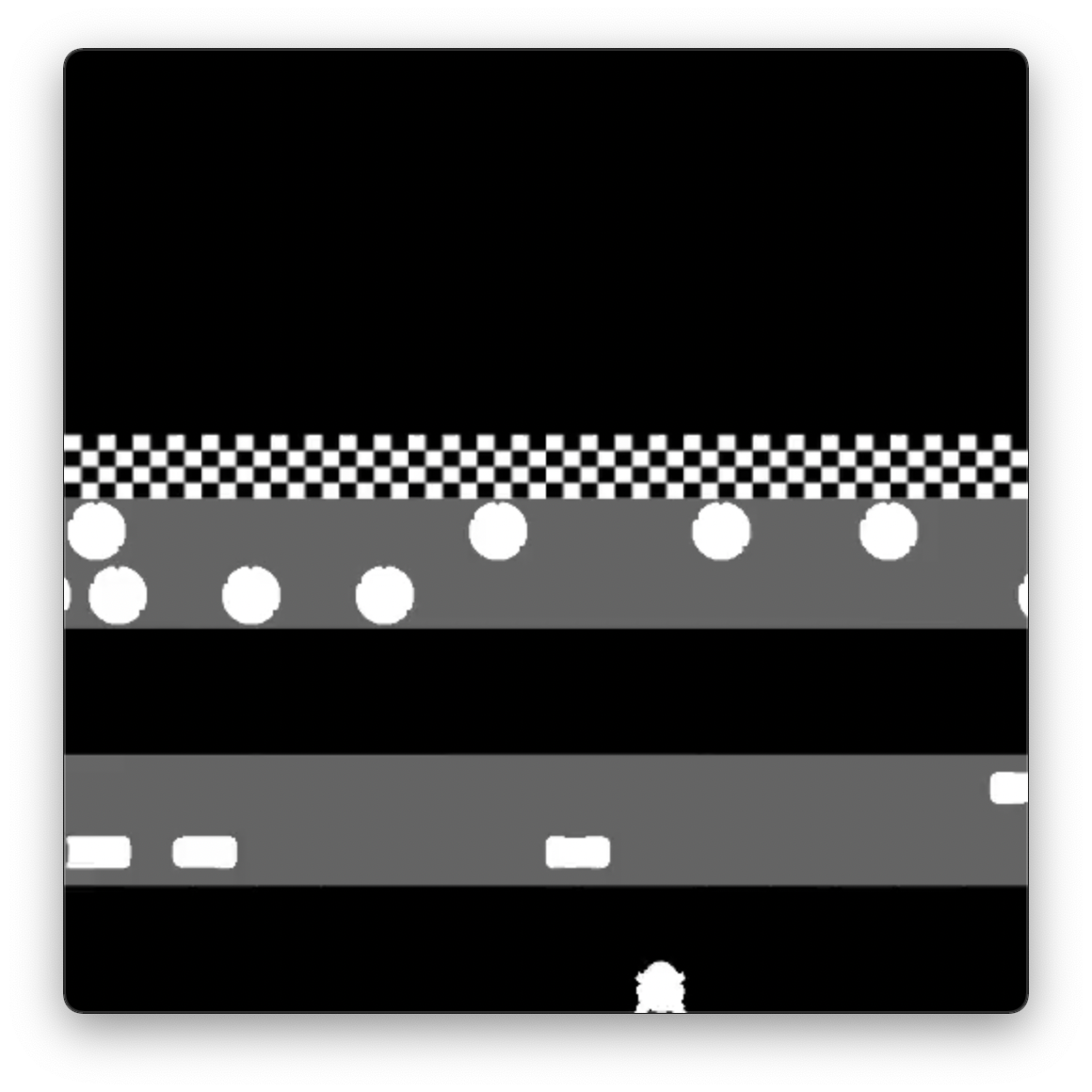}}
\end{tabular}
& 
\begin{tabular}{l}
\centering\texttt{leaper}
\end{tabular}
& 
\begin{tabular}{l}
\pbox{10.5cm}{There are two entity IDs: (1) the road and water, and (2) all other entities.}
\end{tabular}
\\ \hline

\begin{tabular}{l}
\raisebox{-1.8cm}{\includegraphics[scale=0.1]{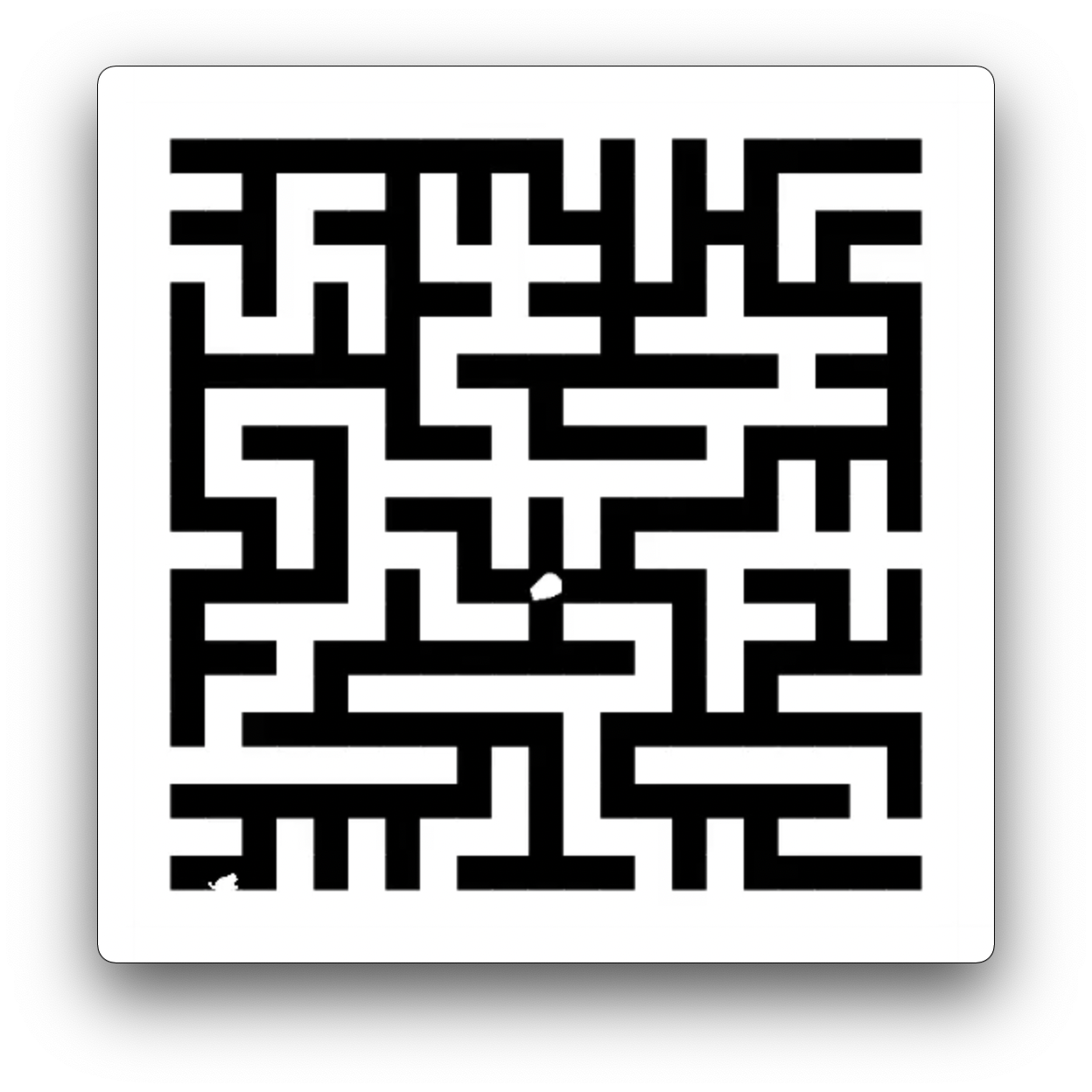}}
\end{tabular}
& 
\begin{tabular}{l}
\centering\texttt{maze}
\end{tabular}
& 
\begin{tabular}{l}
\pbox{10.5cm}{All entities share the same ID.}
\end{tabular}
\\ \hline

\begin{tabular}{l}
\raisebox{-1.8cm}{\includegraphics[scale=0.1]{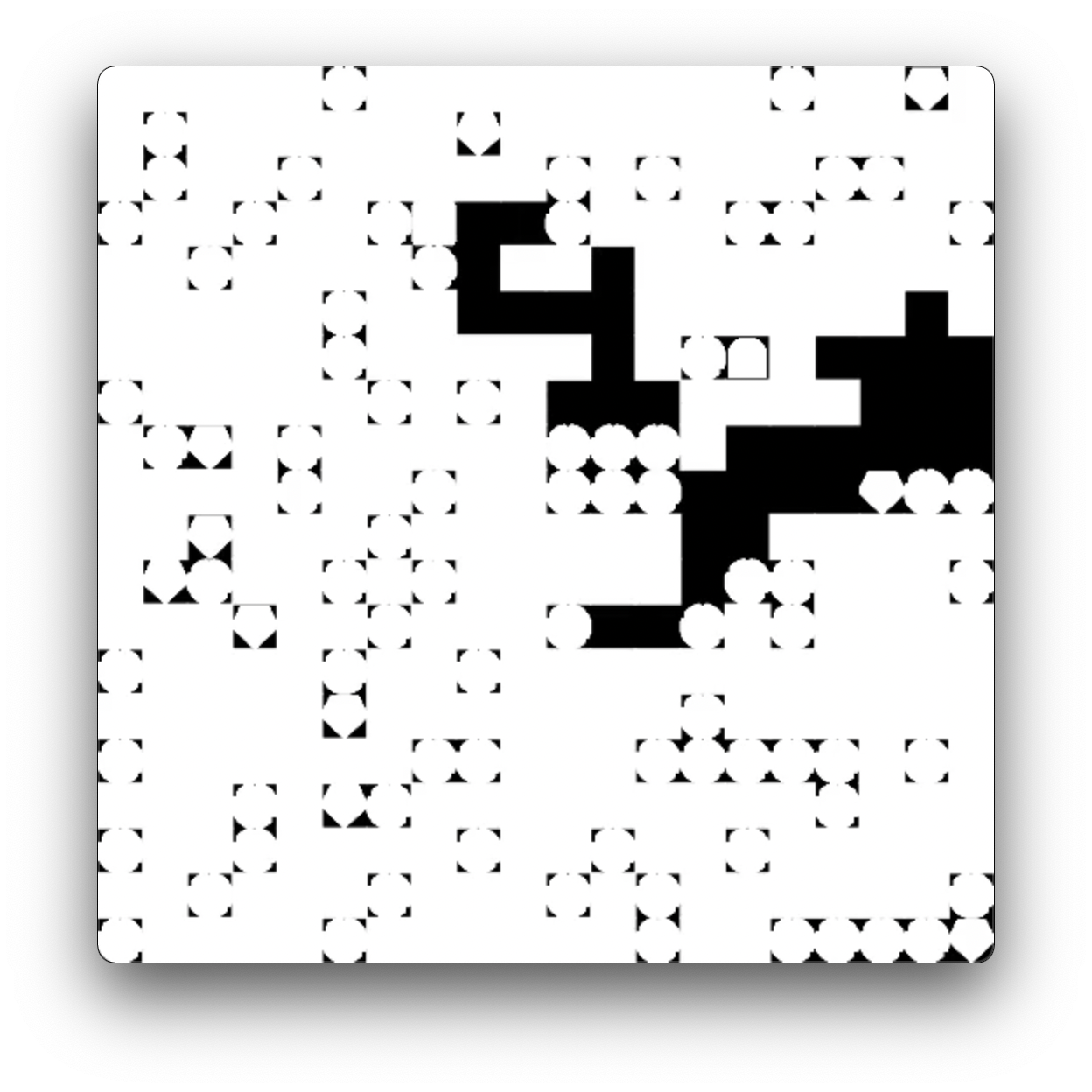}}
\end{tabular}
& 
\begin{tabular}{l}
\centering\texttt{miner}
\end{tabular}
& 
\begin{tabular}{l}
\pbox{10.5cm}{All entities share the same ID.}
\end{tabular}
\\ \hline

\begin{tabular}{l}
\raisebox{-1.8cm}{\includegraphics[scale=0.1]{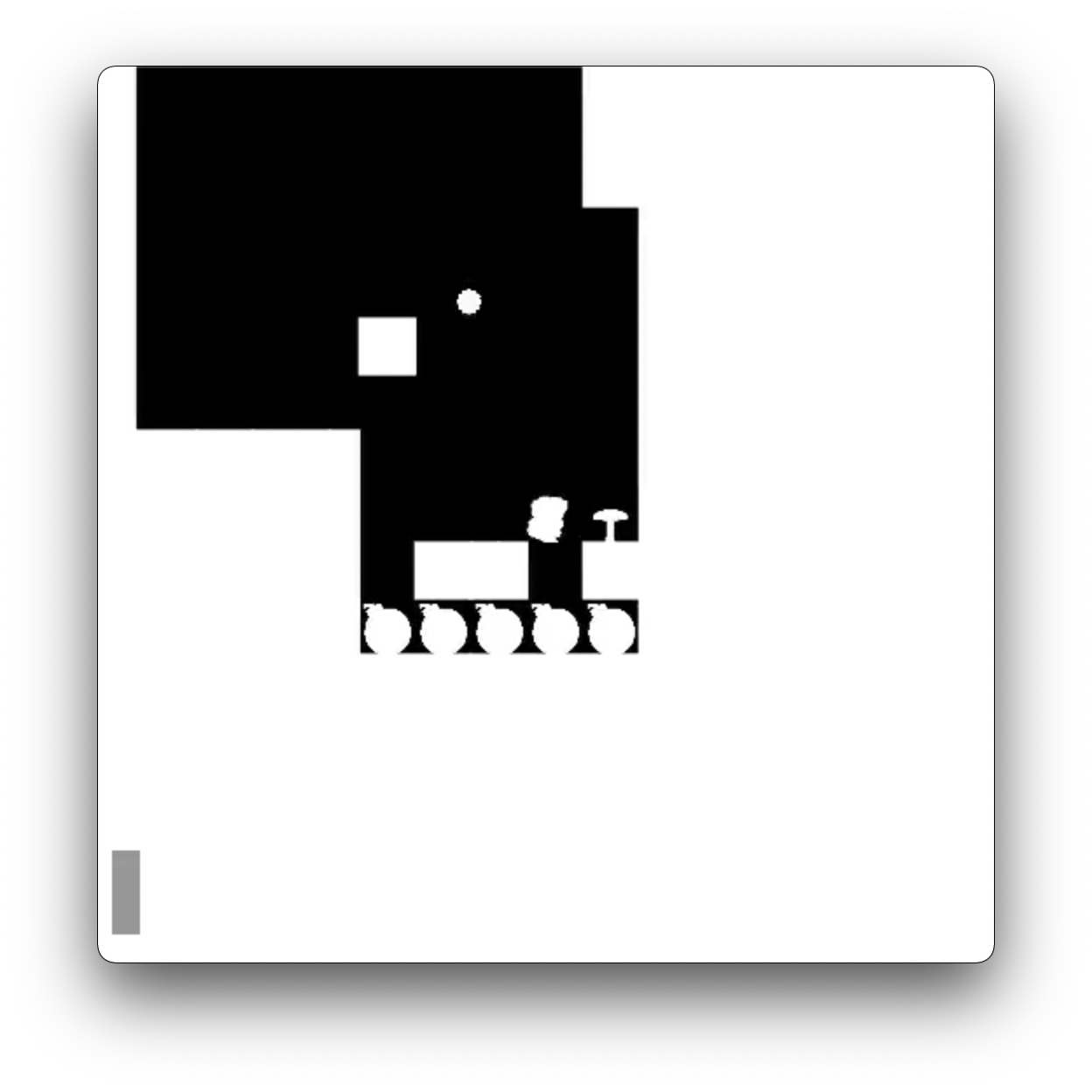}}
\end{tabular}
& 
\begin{tabular}{l}
\centering\texttt{ninja}
\end{tabular}
& 
\begin{tabular}{l}
\pbox{10.5cm}{There are two entity IDs: (1) the bar indicating jump charge, and (2) all other entities.}
\end{tabular}
\\ \hline

\begin{tabular}{l}
\raisebox{-1.8cm}{\includegraphics[scale=0.1]{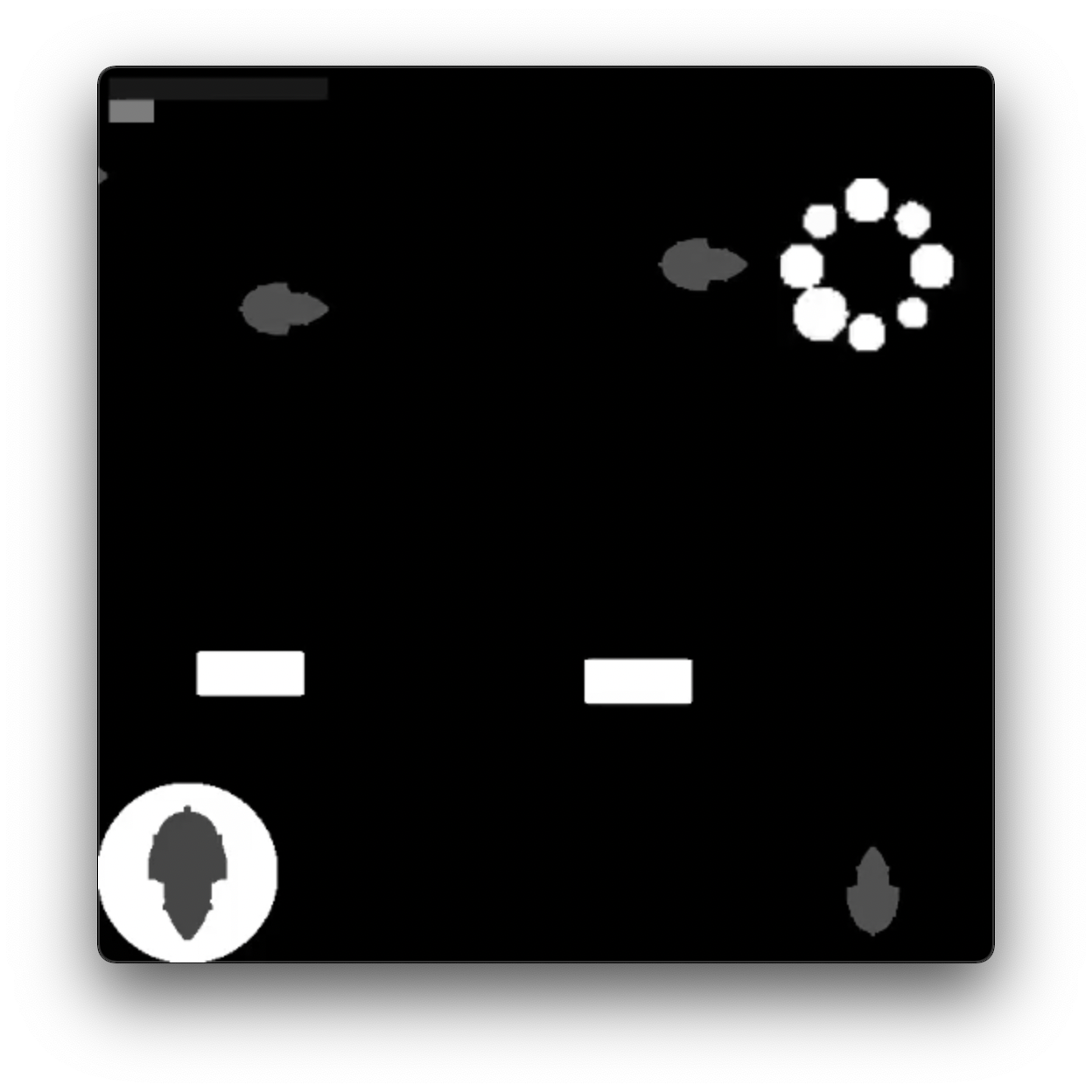}}
\end{tabular}
& 
\begin{tabular}{l}
\centering\texttt{plunder}
\end{tabular}
& 
\begin{tabular}{l}
\pbox{10.5cm}{There are five entity IDs: (1) the player and friendly ships, (2) enemy ships, including the cue ship, (3) the time-countdown status bar, (4) the points-accrued status bar, and (5) all other entities.}
\end{tabular}
\\ \hline

\begin{tabular}{l}
\raisebox{-1.8cm}{\includegraphics[scale=0.1]{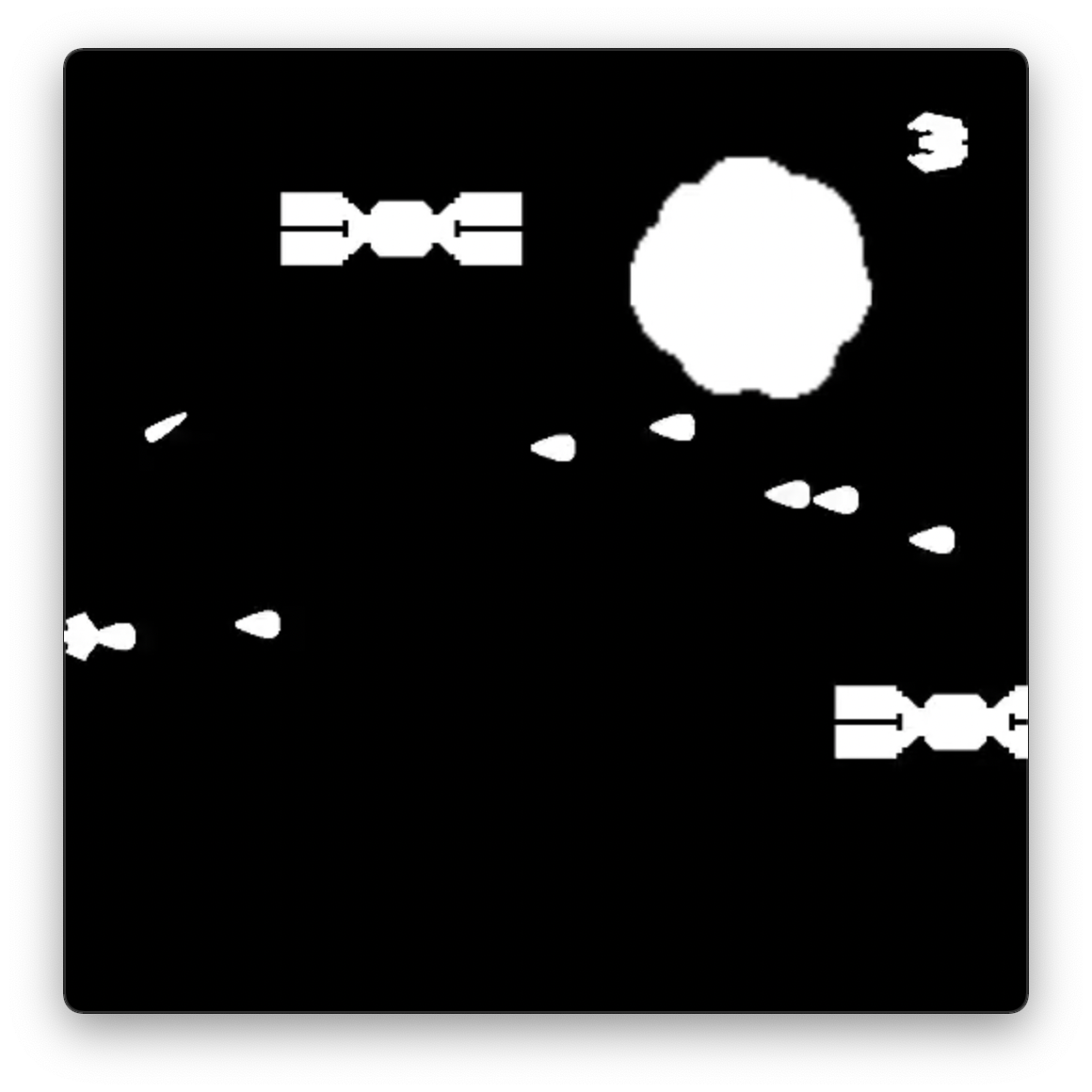}}
\end{tabular}
& 
\begin{tabular}{l}
\centering\texttt{starpilot}
\end{tabular}
& 
\begin{tabular}{l}
\pbox{10.5cm}{All entities share the same ID.}
\end{tabular}

\\ \specialrule{\cmidrulewidth}{0pt}{0pt}
\end{tabular}
\caption{\textbf{Outlining the ``Figure/Ground" organization of each environment in the \textit{Procgen} benchmark.}}
\end{center}
\end{adjustwidth}
\end{table}

\clearpage
\subsection{Improvements in generalization performance across easy and hard $\phi$ tasks}
\begin{center}
\begin{tabular}{|c | c c c c|} 
 \hline
 Easy ($\phi$) \% improvement & Chaser $7.5\%$ & Leaper $-117.6\%$ & Bigfish $37.9\%$ & Bossfight $-39.8\%$\\ \hline \hline
 Hard ($\phi$) \% improvement & Starpilot $28.7\%$ & Dodgeball $71.4\%$ & Climber $3364.6\%$ & -- \\
 \hline
\end{tabular}
\end{center}

\end{document}